\documentclass[msom,nonblindrev]{informs4_hide} %

\OneAndAHalfSpacedXI %

\usepackage{amsmath,amsfonts,amssymb}
\usepackage{oubraces}
\usepackage{mathtools}
    \allowdisplaybreaks
\usepackage{bm}
\usepackage[mathscr]{euscript}
\usepackage[inline]{enumitem}
\usepackage{xcolor}
\usepackage{graphicx}
\usepackage{tabularx}
\usepackage{multirow}
\usepackage{bigstrut}
\usepackage{booktabs}
\usepackage{microtype}
\usepackage{fix-cm}
\usepackage{algorithm}
\usepackage{algpseudocode}
\usepackage{diagbox}

\usepackage{physics}
\usepackage{xfrac}
\usepackage{nicematrix}
\usepackage{caption}
\usepackage[hidelinks]{hyperref}
\usepackage{soul}

\usepackage{dsfont}
\usepackage{pifont}

\DeclareMathOperator{\LLM}{\mathrm{LLM}}

\DeclareMathOperator{\TV}{\mathrm{TV}}

\DeclareMathOperator{\Unif}{\mathrm{Unif}}
\DeclareMathOperator{\GP}{\mathrm{GP}}
\DeclareMathOperator{\RF}{\mathrm{RF}}

\DeclareMathOperator{\as}{\mathrm{a.s.}}
\DeclareMathOperator{\Lip}{\mathrm{Lip}}

\DeclareMathOperator{\preA}{\mathsf{preA}}
\DeclareMathOperator{\postA}{\mathsf{postA}}

\DeclareMathOperator{\EMS}{\texttt{EMS}}
\DeclareMathOperator{\FP}{\texttt{FP}}
\DeclareMathOperator{\WS}{\texttt{WS}}
\DeclareMathOperator{\TECH}{\texttt{TECH}}
\DeclareMathOperator{\RT}{\texttt{RT}}
\DeclareMathOperator{\MKT}{\texttt{MKT}}
\DeclareMathOperator{\WTP}{\texttt{WTP}}
\DeclareMathOperator{\QUT}{\texttt{QUT}}
\DeclareMathOperator{\AD}{\texttt{AD}}
\DeclareMathOperator{\MPF}{\texttt{MPF}}
\DeclareMathOperator{\RPF}{\texttt{RPF}}
\DeclareMathOperator{\SCWF}{\texttt{SCWF}}
\DeclareMathOperator{\FISC}{\texttt{FISC}}
\DeclareMathOperator{\ENV}{\texttt{ENV}}
\DeclareMathOperator{\SUBSIDY}{\texttt{SUBSIDY}}

\DeclareMathOperator{\Tone}{(\mathrm{I})}
\DeclareMathOperator{\Ttwo}{(\mathrm{II})}
\DeclareMathOperator{\Tthree}{(\mathrm{III})}
\DeclareMathOperator{\Tfour}{(\mathrm{IV})}

\newcommand{\BB}{\mathbb B}

\newcommand{\RR}{\mathbb R}

\DeclareMathOperator{\pr}{\mathbb P}
\DeclareMathOperator{\E}{\mathbb E}

\newcommand{\msfC}{\mathsf C}

\newcommand{\msfM}{\mathsf M}

\newcommand{\msfP}{\mathsf P}
\newcommand{\msfR}{\mathsf R}

\newcommand{\msfLambda}{\mathsf \Lambda}

\newcommand{\scrF}{\mathcal F}
\newcommand{\scrG}{\mathcal G}

\newcommand{\scrM}{\mathcal M}
\newcommand{\scrN}{\mathcal N}
\newcommand{\scrO}{\mathcal O}

\newcommand{\scrR}{\mathcal R}

\newcommand{\scrT}{\mathcal T}

\newcommand{\scrW}{\mathcal W}
\newcommand{\scrX}{\mathcal X}
\newcommand{\scrY}{\mathcal Y}

\DeclareFontFamily{U}{mathx}{\hyphenchar\font45}
\DeclareFontShape{U}{mathx}{m}{n}{
  <-> mathx10
}{}
\DeclareSymbolFont{mathx}{U}{mathx}{m}{n}
\DeclareMathAccent{\widecheck}{0}{mathx}{"71}
\usepackage{accents}

\newcommand{\wtG}{\widetilde G}

\newcommand{\wtC}{\widetilde C}

\let\emptyset\varnothing

\usepackage{natbib}
 \bibpunct[, ]{(}{)}{,}{a}{}{,}%
 \def\bibfont{\small}%
\TheoremsNumberedThrough     %
\ECRepeatTheorems

\EquationsNumberedThrough    %

\newtheorem{subassumption}{Assumption\normalfont}[assumption]
\renewcommand{\thesubassumption}{\theassumption\alph{subassumption}}

\newenvironment{assumption*}
    {\ifnum\value{subassumption}=0 \stepcounter{assumption}\fi\subassumption}
    {\endsubassumption}
\newenvironment{assumption+}[1]
    {\renewcommand{\thesubassumption}{#1}\subassumption}
    {\endsubassumption}

\MANUSCRIPTNO{}

\begin{document}

\RUNTITLE{Optimizing Service Operations via LLM-MAS}

\TITLE{Optimizing Service Operations via LLM-Powered Multi-Agent Simulation}

\ARTICLEAUTHORS{
\AUTHOR{Yanyuan Wang, Xiaowei Zhang}
\AFF{Department of Industrial Engineering and Decision Analytics, The Hong Kong University of Science and Technology, Clear Water Bay, Hong Kong SAR, \EMAIL{yanyuan.wang@connect.ust.hk}, \EMAIL{xiaoweiz@ust.hk}}
}

\ABSTRACT{Service system performance depends on how participants respond to design choices, but modeling these responses is hard due to the complexity of human behavior. We introduce an LLM‑powered multi‑agent simulation (LLM‑MAS) framework for optimizing service operations. We pose the problem as stochastic optimization with decision‑dependent uncertainty: design choices are embedded in prompts and shape the distribution of outcomes from interacting LLM-powered agents. By embedding key numerical information in prompts and extracting it from LLM‑generated text, we model this uncertainty as a controlled Markov chain. We develop an on‑trajectory learning algorithm that, on a single simulation run, simultaneously constructs zeroth-order gradient estimates and updates design parameters to optimize steady-state performance. We also incorporate variance reduction techniques. In a sustainable supply chain application, our method outperforms benchmarks, including blackbox optimization and using LLMs as numerical solvers or as role‑playing system designers. A case study on optimal contest design with real behavioral data shows that LLM‑MAS is both as a cost‑effective evaluator of known designs and an exploratory tool that can uncover strong designs overlooked by traditional approaches.
}

\KEYWORDS{LLM, multi-agent simulation, controlled Markov chain, zeroth-order optimization, service operations}

\maketitle

\section{Introduction} \label{sec:intro}

Service systems are driven by decisions and interactions of people. Examples include multi-echelon supply chains \citep{Cohen16GSC}, innovation contests \citep{Chen20}, and digital marketplaces \citep{BandiCohenRay24}. A central task in these settings is to design policies, incentives, and operational levers that deliver strong performance. These design parameters, including sustainability rules, incentive structures, and platform interfaces, shape how individuals decide and communicate. Individual choices, such as adopting a regulation, entering a contest, or selecting a product, then aggregate into outcomes such as cost effectiveness, efficiency, and externalities. Yet human behavior often deviates from standard analytical and empirical predictions, because decisions reflect how people form beliefs, assess risks, and make trade-offs in contexts that are hard to observe. Effective designs should therefore account for behavioral factors to capture how people  respond to design choices, as these responses ultimately determine system performance \citep{Huang13}.

Despite broad recognition of their importance in operations management (OM), human decision processes remain hard to model \citep{Donohue20}. Analytical models, such as utility-based formulations, use simplified assumptions and a small set of parameters to maintain tractability. This helps abstraction but often misses key features of behavior. Calibrating these models for practical use also requires data that are costly to collect in OM settings, whether through lab studies or field experiments. A further challenge is that behavior is not fully rational. Decisions are shaped by cognitive limits and by time and attention constraints during the decision process \citep{Donohue18}. These factors reduce the accuracy of traditional modeling and analysis approaches and limit their usefulness for system design in practice.

We take a different approach. Instead of building analytical models and calibrating them with costly data, we use large language models (LLMs) to simulate human decisions. Pretrained on extensive human-generated text, LLMs capture statistical patterns in how people reason, communicate, and make decisions. Recent studies report human-like behavior from LLMs across economics \citep{ChenLiuShanZhong23}, marketing \citep{GoliSingh24}, psychology \citep{Demszky23using}, and political science \citep{ArgyleBusbyFuldaGublerRyttingWingate23}. 
Most relevant to our paper,
\cite{Chen25} show that 
GPT models largely reproduce the decision biases as documented by \cite{Davis18} in canonical OM settings  (e.g., inventory management, sourcing, forecasting, and queueing). 

Beyond generating human‑like behavior (the response), LLMs also interpret unstructured, narrative-rich information that traditional numerical simulators struggle to handle (the comprehension).
Many operational outcomes hinge on qualitative or strategic text such as advertisements, seller messages, customer reviews, recommender explanations, and chatbot replies. As native text processors, LLMs can both analyze and generate language that responds to nuances in phrasing, sentiment, and dialogue context.
This opens new possibilities for
experimentation on system designs in digital operations such as pricing informed by reviews, customer-service automation, and AI-moderated marketplaces.

We treat LLMs as agents within a service system and let them interact under the given system design. Their interactions generate system-level outcomes, which determine the design's performance. 
We refer to this approach as LLM-powered multi-agent simulation (LLM-MAS). It follows the agent-based modeling paradigm, where micro-level behaviors aggregate into macro-level dynamics, making it well suited for systems with complex interdependencies \citep{AxtellFarmer25}.\footnote{Agent-based models usually use handcrafted rules or simple functions to govern behavior and interaction. Such rule-based agents oversimplify behavior and generalize poorly across settings. Recent advances apply reinforcement learning (RL) to model agent behavior, enabling agents to learn more complex decision rules. However, RL trains agents to maximize cumulative rewards, which conflicts with the evidence that people are not perfect optimizers \citep{AllonCohenSinchaisri23}. It is also challenging to specify a reward function that faithfully reflects human decision-making. These limitations motivate using LLMs as more expressive agent models.}

We illustrate the LLM-MAS approach to system design with the following example. As AI reshapes supply chain management \citep{CohenDai26}, a policymaker seeks to develop sustainable policies that balance economic viability, environmental impact, and supply chain welfare \citep{Krass13Tax,Cohen16GSC,Bian20Subsidy}. 
This is our running example. 
We also present in Section~\ref{sec:case} a case study on optimal design of innovation contests \citep{Chen20} using real behavioral data.

\begin{example}[Sustainable Supply Chain Management] \label{ex:scm}
Consider a three-echelon supply chain with manufacturer ($\msfM$), retailer ($\msfR$), and consumer ($\msfC$); see Figure~\ref{fig:LLM-MAS-SCM}. The policymaker designs sustainable policies with two parameters: a carbon tax on the manufacturer ($\theta_1$) and a consumer subsidy ($\theta_2$).
For any given policy $\theta=(\theta_1,\theta_2)$, we simulate the decisions of all three players using LLM-MAS rather than specifying their behavior analytically.  
The manufacturer chooses a wholesale price ($\WS$) and an investment level in low-carbon technology ($\TECH$);
the latter determines the system's carbon emissions ($\EMS$). 
The manufacturer also discloses the carbon footprint ($\FP$) to the retailer, which is defined as the percentage reduction in carbon emissions relative to a baseline. 
The retailer sets the retail price ($\RT$) and the marketing budget ($\MKT$), then combines the price with the carbon footprint to create an advertisement ($\AD$) funded by the budget. 
After viewing the ad, 
the consumer chooses purchase quantities ($\QUT$) and reveals willingness to pay ($\WTP$). 
The ad narratives are generated by an auxiliary LLM, and the  consumer interprets the ad and acts, highlighting the strength of LLM-MAS at processing unstructured information end to end. 
The system state  ($\xi$) includes all agents' decisions (i.e., $\TECH, \WS, \MKT, \RT, \QUT, \WTP$) and the resulting outcome (i.e., $\EMS$). The policymaker evaluates each policy using 
$F(\theta; \xi) = -\big(\SCWF(\theta;\xi) - \FISC(\theta;\xi) - \ENV(\xi)\big)$, 
which balances supply chain welfare ($\SCWF$), fiscal costs ($\FISC$), and environmental externalities ($\ENV$). The goal is to choose policy parameters that optimize long-run system performance over multiple rounds of interactions among the three agents.
\end{example}

\begin{figure}[ht]
    \FIGURE{
    $
    \begin{array}{c}
    \includegraphics[width=0.7\textwidth]{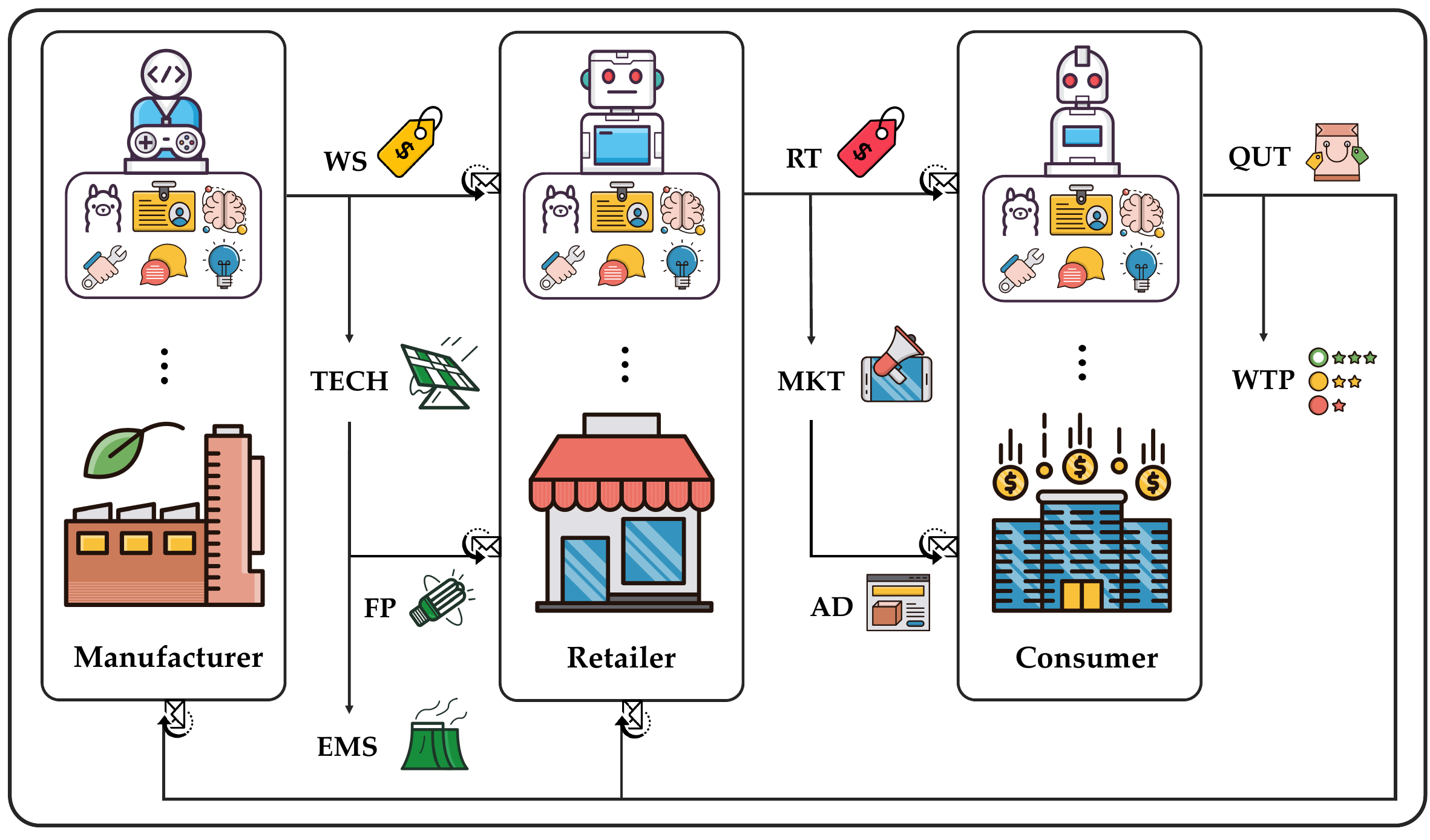} 
    \end{array}
    $
    }
    {LLM-MAS for Sustainable Supply Chain Management \label{fig:LLM-MAS-SCM}
    }
    {%
    } 
\end{figure}

Inspired by this example, we pose the following question:
\begin{center}
    \emph{``How can LLM-MAS be integrated into system design optimization?''}
\end{center}
To address this question, we propose a principled framework (Figure~\ref{fig:framework}). 
A system designer seeks to optimize design choices using the dynamics generated via LLM-MAS, where LLM-powered agents serve as \emph{silicon proxies}. 
This framework casts the problem as stochastic optimization with uncertainty driven by LLM-generated responses, facilitates the development of optimization algorithms, and accommodates any off-the-shelf LLM. 
We use these models as is, assuming that LLMs can accurately simulate human behavior in the application of interest, so we do not fine-tune or post-train models within LLM-MAS. (Calibrating models to improve behavioral fidelity is outside the scope of this paper.)
This assumption is supported by our case study on optimal contest design, where the LLM-MAS, implemented with an open-source model, replicates the lab experiments of \cite{HuangZhangZhang26} involving hundreds of human participants.

\begin{figure}[t]
    \FIGURE{
    $
    \begin{array}{c}
    \includegraphics[width=0.8\textwidth]{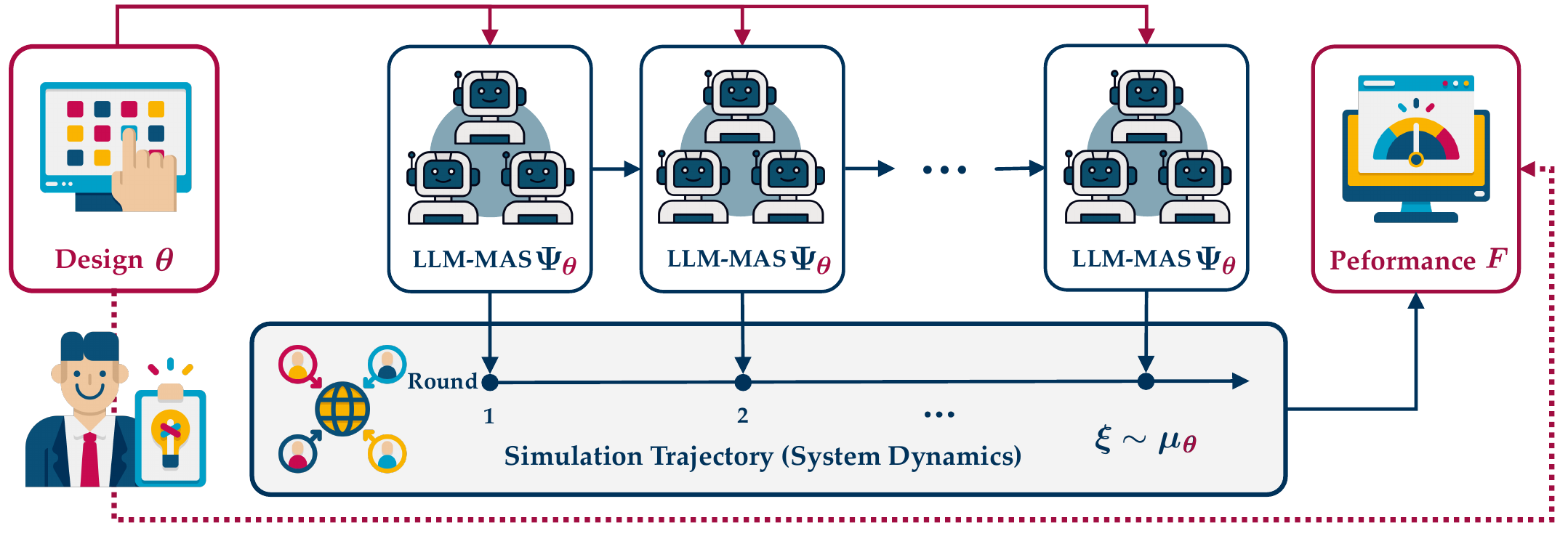} 
    \end{array}
    $
    }
    {LLM-MAS for System Design Optimization \label{fig:framework}
    }
    {\begin{enumerate*}[label=(\roman*)]
        \item $\Psi_{\theta}$ is the one-round simulation of system state $\xi$.
        \item $\mu_\theta$ is the stationary  distribution for a given design $\theta$.
    \end{enumerate*}} 
\end{figure}

This framework serves as a digital twin that  links theory to practice  and complements existing approaches to system design optimization along two dimensions.
\begin{enumerate*}[label=(\roman*)]
    \item \emph{Evaluation (bridging theory and reality).} 
    It narrows the gap between elegant theoretical models and complex real systems. By testing theoretical solutions in LLM-MAS, it relaxes idealized assumptions and checks robustness under realistic conditions. This is particularly valuable in domains where human behavior shapes outcomes, clean models fall short, and behavioral data {are} costly to collect.
    \item \emph{Exploration (providing a silicon sandbox).} 
    It enables structured exploration of alternative  designs in a controlled, synthetic  environment. 
    The testbed helps find designs that perform well when human behavior plays a crucial role, which is often analytically intractable, while reducing the time and resources needed for lab experiments.
\end{enumerate*}

\subsection{Main Contributions}

\subsubsection{Modeling.}
We formulate system design via LLM-MAS as a stochastic optimization problem with decision-dependent uncertainty, represented as a controlled Markov chain. 
Design parameters enter as prompt variables for each LLM-powered agent, which shift the distribution of token outputs.
This token-level randomness yields text responses that encode each agent's behavior, and over multiple rounds of interactions, these behaviors aggregate to determine system performance. 
We construct a system-level Markovian state by identifying compact, meaningful variables from sequences of LLM prompts and responses that constitute the agents' observations, communications, and actions.
This representation clarifies how design choices drive the evolution of uncertainty in LLM-MAS and provides a foundation for efficient learning algorithms for system design optimization.

\subsubsection{Algorithms.}
We develop a new algorithm to address two challenges from the resulting stochastic optimization problem.
The first is gradient computation.
With decision‑dependent uncertainty, the gradient depends on the conditional distribution of the system state given the design.
In LLM‑MAS, this distribution is intractable. 
The uncertainty varies with design parameters in complex ways and arises from numerous sources, including hardware configurations.
We use a zeroth‑order method that constructs gradient estimates from function evaluations, which removes the need to explicitly model the conditional distribution.

The second challenge concerns steady‑state evaluation. 
A direct approach needs many trajectories of the controlled Markov chain, each with many rounds of LLM-powered agents' interaction, to approximate the steady-state distribution. 
This is computationally prohibitive considering LLM inference latency. 
We propose an on‑trajectory learning scheme which, within a single simulation run, estimates gradients and updates design parameters simultaneously, but on two timescales for convergence. 
It also learns from the entire state path, not only the terminal state as in the standard multi‑trajectory method, thereby reducing LLM queries.

On the theory side, we establish the convergence of the on-trajectory learning algorithm through the lens of stochastic approximation (SA) with adaptive Markovian data.
This setting arises because the transition distribution is not fixed but depends on the current iterate, which is updated from data collected thus far. 
We use a Poisson equation approach to capture the temporal dependence in the data.
We then exploit the smoothness of its solution to control the gap to the steady state and to bound the error of our zeroth‑order estimator for the stochastic gradient under the transition distribution.

Furthermore,  we incorporate two variance reduction techniques (guided perturbation and residual feedback) to enhance the algorithm's efficiency.
One exploits known structure in the performance function. The other applies when such structure is unavailable and instead relies on information from the algorithm's iterates.

\subsubsection{Empirics.}
We demonstrate the practical value of our LLM-MAS framework and algorithm through two applications: sustainable supply chain management and innovation contest design.
In the supply chain setting, we compare our methods against several benchmarks: Bayesian optimization, which treats the problem as a black box, an LLM-as-a-Solver approach that prompts the LLM to perform numerical optimization directly, and an LLM-as-a-Designer approach in which the LLM functions as the decision-maker, drawing on all agents' actions simulated via LLM-MAS.
Across these benchmarks, our algorithm and its variance-reduction variant consistently achieve superior performance. The results offer two key insights. First,  modeling uncertainty as a controlled Markov chain allows us to exploit the probabilistic structure of LLM-MAS for greater computational efficiency. Second, relying on LLMs alone is insufficient for system design, which needs a rigorous stochastic formulation and problem-specific numerical optimization algorithms.
This findings align with a growing view in supply chain management that disruptive GenAI tools are complements, not substitutes, for traditional OM methods \citep{
CohenDai26}.

In the contest design setting, we use behavioral data from a recent lab experiment \citep{HuangZhangZhang26} to show the value of LLM-MAS for both evaluating given designs and searching for alternatives. This is especially important when human behavior matters. We show that LLM-MAS largely replicates heterogeneous behaviors in the lab: whether to enter and, conditional on entry,  how much effort to exert as a function of the participant's ability. This similarity holds across contest designs with different entry fees, reserves, and shared prizes. Both the lab data and LLM-MAS deviate markedly from game-theoretic predictions, which assume risk-neutral, fully rational agents and yield closed-form optimal designs. 
Therefore, LLM-MAS can reduce the cost and time of design evaluation relative to conducting new lab experiments. 

A key by-product of LLM-MAS is the rich reasoning-process data that reveal agents' decision logic, offering a level of interpretability rarely attainable with other AI tools such as RL or with lab and field experiments that lack post‑hoc interviews. For instance, reasoning traces from our contest‑design case study can help designers determine whether a design continues to elicit effort as the prize budget scales.  
This, in turn, enables more reliable and cost‑efficient evaluation of system configurations.

Furthermore, 
we optimize contest parameters to maximize revenue. The LLM-MAS solution differs substantially from the game-theoretic recommendation, which ties the reserve and shared prize to the entry fee and favors higher entry fees. 
In contrast, our solution decouples the three design parameters, recommends a moderate entry fee, and sets reserve and shared prize well below the game-theoretic levels. It also achieves much higher revenue than the game-theoretic optimum. 
These results show that LLM-MAS can discover viable designs that traditional models would discard and can guide subsequent lab or field tests.

\subsection{Related Works}

\subsubsection{LLM-MAS.}

Our work connects to a fast-growing literature that uses LLMs to simulate and study human behavior in business, economics, and social sciences. The appeal is clear: LLMs can lower the cost of collecting behavioral data by orders of magnitude and enable experiments where human studies are infeasible or raise ethical concerns. Even if LLM-generated data may not perfectly replicate human data, they can still help researchers explore and vet new ideas. A key thread in this literature builds high-fidelity simulation environments to answer what-if questions, predict how designs affect outcomes, and support counterfactual analysis, hypothesis testing, and theory prototyping \citep{Park22Simulacra}. 
For instance, \cite{Shirani25} employ LLM-powered agents to reproduce Facebook's randomized controlled trials and replicate the finding that displaying peers' behaviors on social media platforms have a political mobilization effect that increases voter turnout. \cite{Allouah25ACES} build an LLM-powered sandbox of e-commerce platforms to study how platform levers and product attributes influence AI-agent buyers' actions, providing implications for both platform operators and sellers.
For overviews, see \cite{GaoLanLiYuanDingZhouXuLi24}, 
\cite{AnthisLiuRichardsonKozlowskiKochBrynjolfssonEvansBernstein25}, and \cite{FengDouLiWangWangGuoMaKong25}. 
Our contribution advances this line of research from \emph{design evaluation} to \emph{design optimization}: we use LLM-MAS across design choices not only to assess outcomes but also to optimize performance.

\subsubsection{Intersection of LLMs and OR.}

This research aligns with recent efforts at the intersection of LLMs and operations  \citep{DaiSwaminathan26}. Prior work broadly follows two directions: LLMs for operations research (OR) and OR for LLMs.
Most studies in the first direction use LLMs for mathematical modeling by translating natural-language problem statements into formal formulations, with applications to deterministic optimization \citep{Huang25ORLM},
robust optimization \citep{BertsimasMargaritis24}, and dynamic programming \citep{Zhou25DPLM}. 
Our study belongs to this stream but differs in purpose. Instead of using LLMs to convert text into stochastic programs for standard numerical solvers, we formulate the mathematical model ourselves, use LLMs to simulate uncertain dynamics, and solve the problem with a tailored algorithm that reflects LLM-specific features.

The second direction, OR for LLMs, applies OR methods to improve LLM performance, treating LLMs as an application domain instead of as tools for OR. 
This line of research mainly focuses on inference-time efficiency, 
typically through scheduling and batching to manage the dual queues of token prefill and decoding \citep{Jaillet25InfOpt,Li25InfOpt,Ao25InfOpt}; see also \cite{Mitzenmacher25} for a recent survey. 
Our framework also accounts for LLM characteristics, but our aim is to leverage LLMs for simulation-based optimization, not to enhance the models' internal capabilities.

\subsubsection{Zeroth-Order Optimization and SA.}
The proposed algorithm falls under zeroth-order methods for stochastic optimization when gradients are unavailable or costly to compute.
These methods update solutions much like gradient-based approaches but estimate gradients using function evaluations; see \cite{PrashanthShalabh25} for an introduction.
Most prior studies focus on settings where the distribution of the random variable does not depend on the decision. 
In contrast, our optimization problem  has decision-dependent uncertainty \citep{DrusvyatskiyLin23DDU}, captured by the stationary distribution of a controlled Markov chain, which introduces complex intertemporal dependencies into the algorithm. 

Analyses of zeroth‑order methods 
typically draw on SA \citep{KushnerYin03,GhadimiLan13,BalasubramanianGhadimi22Complexity}.
We follow this practice, but 
our on-trajectory learning algorithm gives rise to SA with adaptive Markovian data. 
This type of SA has gained much attention due to its application to RL algorithms with policy improvement \citep{KondaTsitsiklis03,ZhangHu22,CheDongTong26,LiLiangChenZhang26}.  
However, these studies usually assume access to first‑order information, in contrast to our setting where gradients are intractable and must be estimated using zeroth-order methods.

\smallskip

The remainder of this paper is organized as follows. Section \ref{sec:prob} formulates system design via LLM-MAS as a stochastic optimization problem with decision-dependent uncertainty driven by a controlled Markov chain. Section \ref{sec:llm-mas} explains how we extract the Markov chain's state variables from text exchanges between LLM-powered agents. Section \ref{sec:algo} presents the on-trajectory learning algorithm to address challenges specific to LLM-MAS and shows its asymptotic convergence.
Section~\ref{sec:enhance} incorporates two variance reduction techniques to enhance efficiency. 
Section \ref{sec:scm} presents numerical experiments in a supply chain setting to compare algorithm performance. 
Section \ref{sec:case} provides a case study on optimal contest design using real human behavioral data. 
Section \ref{sec:concl} concludes and the e-companion contains technical proofs.

\section{Problem Formulation} \label{sec:prob}

Consider a service system with design parameter vector $\theta\in \Theta \subseteq \RR^{d}$. 
We evaluate its performance through LLM-MAS.
Let $\xi \in \Xi$ be the random variable that captures  the uncertainty in the system. 
The system performance is measured by a cost function $F(\theta;\xi)$. Our goal is to find an optimal design  that minimizes the expected cost with respect to the distribution of $\xi$.  

In the supply chain example (Example~\ref{ex:scm}), the design $\theta$ consists of a carbon tax on the manufacturer and a consumer subsidy. 
The uncertainty $\xi$ faced by the system designer includes agent behavior and system outcomes:
the manufacturer's wholesale price and investment in low-carbon technology, the retailer's retail price and marketing budget, consumer purchase quantities and willingness to pay, and the system's carbon emissions.
The cost function $F$ reflects supply chain welfare, fiscal spending, and environmental externalities.

What sets this problem apart from most stochastic programs is how the distribution of $\xi$ is generated and how it depends on $\theta$. 
In LLM-MAS, $\theta$ is embedded in the prompts, shapes token probabilities, affects agent actions, and alters system outcomes. As a result, the distribution of $\xi$ varies with $\theta$. 
We therefore formulate the system design problem as stochastic optimization with \emph{decision-dependent} uncertainty: 
\begin{align}
    \min_{\theta \in \Theta} \bigg\{ f(\theta) \coloneqq \E_{\xi \sim \mu_\theta}[F(\theta;\xi)] \bigg\},  \label{eq:so}
\end{align}
where $\mu_\theta$ denotes the distribution of $\xi$ for a given design $\theta$. 
Here, the term ``decision'' refers to the system designer's choice $\theta$, rather than the agents' actions simulated by LLMs.
In what follows, we detail the mechanism underlying the uncertainty within LLM-MAS.

\subsection{Token-Level Uncertainty} \label{sec:token-uncertainty} 

LLMs produce random outputs because they are autoregressive probabilistic models.
At each step, the model predicts a distribution over the next token given the context so far.
It computes logits over the vocabulary (i.e., the fixed set of tokens the model can output, typically subword pieces defined by a tokenizer), applies a softmax mapping to obtain probabilities, and then selects the next token with a decoding rule.

In practice, decoding is rarely deterministic. 
Popular LLM families such as GPT, LLaMA, and DeepSeek use sampling-based decoding with configurations such as temperature, top-$k$, or top-$p$; see \cite{build-llms-from-scratch-book} for details.
These hyperparameters regulate the randomness in token generation. For instance, temperature rescales the logits before softmax, with higher values spreading the distribution to increase variation. 
Even if two runs of simulation start with the same prompt, a small sampling difference in an early token can alter the context for the subsequent autoregressive generation, and thus propagates through later token distributions. As a result,  the two runs drift apart as the token sequence extends. 

\begin{remark} \label{remark:uncertainty}
Setting temperature to zero does not ensure identical outputs because several system factors introduce nondeterminism. 
First, many LLMs use low‑precision quantization (8‑bit or 4‑bit) for weights and sometimes activations, which coarsens values, increases near‑ties, and makes tie‑breaking more frequent.
Tie-breaking is not always consistent across runs, and an early flip can cascade into a different completion \citep{DettmersPagnoniHoltzmanZettlemoyer23}.
Second, floating-point arithmetic is non-associative on parallel GPUs. Operations such as summation may run in different orders, 
which changes rounding and can flip the argmax when logits are close 
\citep{Dao22FlashAttention}. 
Third, many users rely on third-party LLM services to access proprietary models such as GPT and Claude rather than hosting them locally. 
Service providers often batch prompts from multiple users to process together and boost throughput. 
The mix of prompts in a batch can change over time, and those small changes can nudge the model toward a different output \citep{Rajbhandari22deepspeed}. 
\end{remark}

\subsection{Decision Dependence}\label{sec:decision-depend}

In LLM-MAS, each agent's behavior is simulated by prompting an LLM with the information a comparable human participant would have. The prompt includes the current system design $\theta$, the agent's persona, relevant behavior of other agents, and any private context. The LLM processes this prompt and generates tokens that describe the agent's behavior. Agents then interact in sequences specified by the simulation environment, which determine event order and information flow. Finally, the simulated behaviors are aggregated into a system-level outcome $\xi$, which contains the quantities used in the performance function $F$.

Because $\theta$ is embedded in each prompt, changing $\theta$ alters token probabilities. Those shifts change the distribution of agent actions extracted from the LLM outputs. Through agent interactions, these changes accumulate into the system state $\xi$. This hierarchical propagation explains how incorporating $\theta$ in prompts induces a system-level distribution $\mu_\theta$ over $\xi$.

To illustrate, consider the supply chain setting in Example~\ref{ex:scm}. 
Consumer subsidy ($\theta_2$), one of the design parameters, is included in the prompt to the consumer agent: ``\texttt{You receive a subsidy of SUBSIDY dollars for each unit of product you purchase.}''
Here, \texttt{SUBSIDY} is replaced with the actual value of $\theta_2$ when the LLM processes the prompt to generate the consumer's purchasing behavior.  
Increasing $\theta_2$ makes the agent more likely to produce tokens that reflect higher purchase volume, 
which, in turn, changes the distribution of the system-level outcome $\xi$. 
See Figure~\ref{fig:dist} for details.

\begin{figure}[ht]
    \FIGURE{
    $
    \begin{array}{c}
    \includegraphics[width=0.7\textwidth]{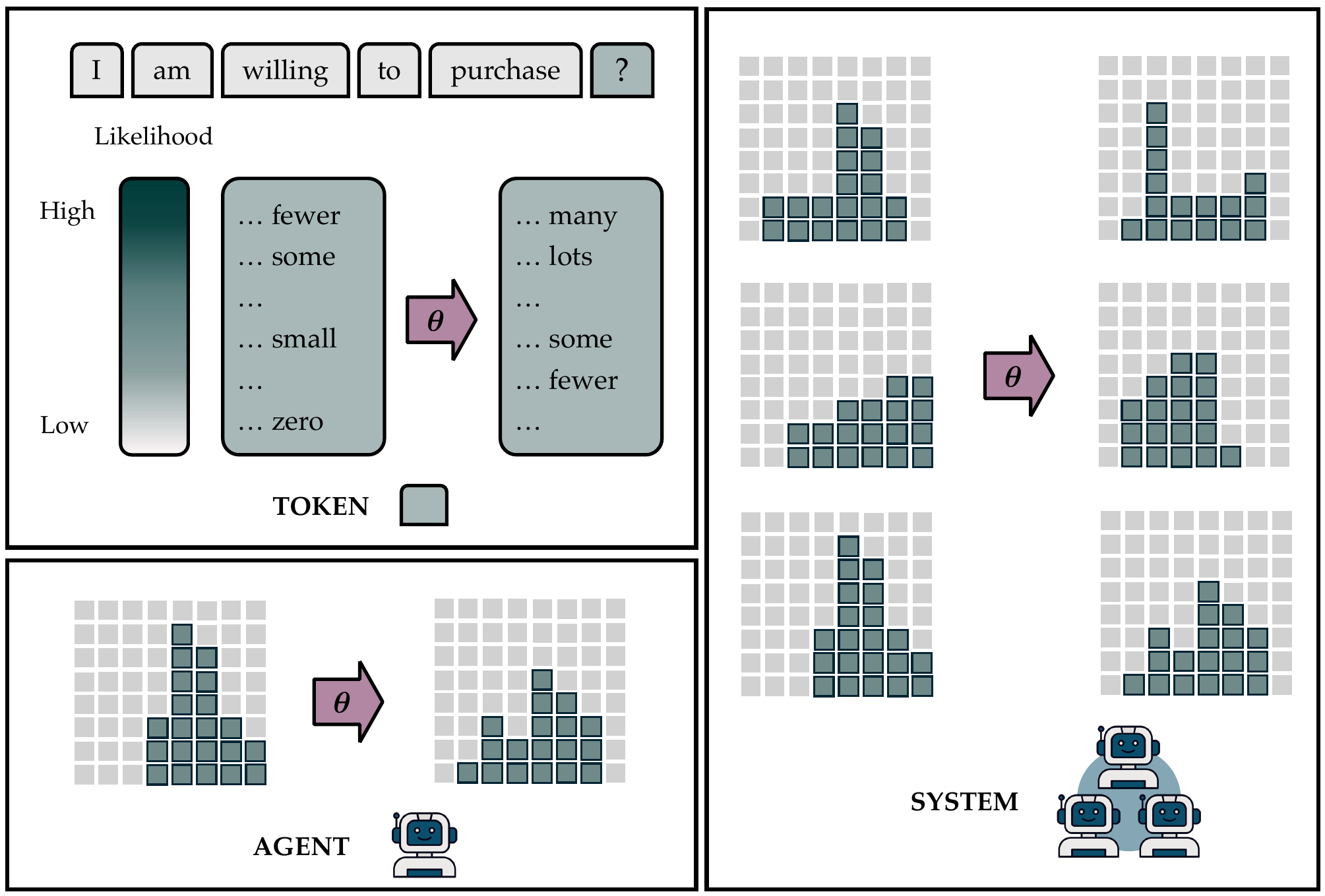} 
    \end{array}
    $
    }
    {How $\theta$ Affects $\xi$: Hierarchical Propagation from Tokens to Agents to System \label{fig:dist}
    }
    {The distributions are for illustrative purposes only. A higher consumer subsidy ($\theta_2$) embedded in the prompt makes the consumer agent more likely to generate a token that indicates higher purchase volume following the phrase ``I am willing to purchase''.  This essentially modulates the distribution over possible next tokens, shifting the probability mass from lower-intensity terms (e.g., ``fewer'' and ``some'') to higher-intensity ones (e.g., ``many'' and ``lots'').} 
\end{figure}

\subsection{Steady-State Performance}

When using LLM-MAS to simulate and optimize service systems, 
the system state $\xi$ is typically not produced in a single step. 
Instead, it emerges over multiple rounds of simulated interactions among agents.
The state is therefore best viewed as a stochastic process, taking the value $\xi^{(t)}$ in round $t$. 
This mirrors reality: human participants need time to interpret new rules, observe others' responses, and adjust their own strategies. 
Only after this period of learning and adaptation do behaviors and interactions converge to stable patterns, at which point the effects of a system design reveal its long-term performance.

Accordingly, we focus on steady-state performance in this paper. 
Emphasizing the long run captures these adaptation dynamics and reveals the distribution of outcomes after the system has equilibrated. 
Moreover, once a service design is finalized, subsequent modifications are difficult to implement without disruption. 
Optimizing steady-state behavior 
\emph{ex ante} reduces the likelihood of costly revisions and alleviates operational instability. 
In the next section, we show that, by appropriately extracting information from LLM outputs that encode agents' behavior, $\xi^{(t)}$ can be modeled as a controlled Markov chain. 
In line with this focus, the distribution $\mu_\theta$ in \eqref{eq:so} represents the stationary distribution of this Markov chain for a given $\theta$. 
(The methodology developed in this paper also applies to finite-time performance, as in problems with fixed operational horizons or round-based applications such as contests and auctions.)

\section{LLM-MAS as Controlled Markov Chain} \label{sec:llm-mas}
We begin with a general formulation of LLM-MAS and  specify key elements from the agents to define the system state $\xi^{(t)}$: agents' observations of the environment, inter-agent messages, and agents' actions. 
We then revisit the supply chain example to show how to identify these elements in practice. Next, we detail how to prompt LLMs, with particular emphasis on incorporating numerical quantities in the inputs and specifying the required outputs. This setup allows the system state to be extracted from textual LLM outputs round by round, completing the update from $\xi^{(t)}$ to $\xi^{(t+1)}$.

\subsection{System State} \label{sec:simsys}

An LLM-MAS system is represented by the tuple $\langle \scrG, \Psi_\theta, T \rangle$: a directed graph $\scrG$, a one-round simulation operator $\Psi_\theta$, and a time horizon $T$.    
The graph $\scrG$ is defined by an index set $\scrN$ of agents and a  matrix $\scrM$ that encodes directed interaction links, typically representing information, product, or cash flows.

Each agent $i \in \scrN$ has five modules: profile, perception, communication, LLM, and memory. In each simulation round $t=1,\ldots,T$, these modules operate as follows and the order of events is shown in Figure \ref{fig:events-order}.
\begin{enumerate}[label=(\roman*)]
    \item Profile. Defines the agent's functional role $\scrR_i$ and, optionally, heterogeneous attributes $\chi_i$ that capture individual characteristics (e.g., risk aversion) and social preferences (e.g., propensity to collaborate).
    \item Perception. Specifies the agent's observation $O_{i}^{(t)}$. It reflects interactions with the environment, not with other agents. The observation may include individual information (available only to the agent) and/or shared information (available to all agents). In the supply chain example, the manufacturer's carbon emissions are part of the shared observation. In an auction, a bidder's valuation of the item being offered is part of the individual observation. 
    \item Communication. Captures the agent's interactions with others. Let $M_i^{(t)} = \bigcup_{j \in \scrM[i]} (M_{i \to j}^{(t)}, M_{i \gets j}^{(t)})$ denote numerical quantities extracted from information exchange over its links, including outbound messages $M_{i \to j}^{(t)}$ (sent to $j$) and inbound messages $M_{i \gets j}^{(t)}$ (received from $j$), where $j \in \scrM[i]$ and $\scrM[i]$ denotes the set of agents that agent $i$ can communicate with. 
    \item LLM. Serves as the agent's behavior generator, producing the action $A_i^{(t)}$ based on both the system design $\theta$ and the contextual information. In each round, the agent may obtain new information from the environment and from other agents, both before and after acting. 
    Only information available before the action is provided to the LLM as part of the context for reasoning. 
    This context also includes information from the previous round. 
    Information that arrives after the action is recorded and carried to the next round via the memory module. 
    Accordingly, we split the observation $O_i^{(t)}$ into a pre-action part  $O_i^{(t,\preA)}$ and a post-action part $O_i^{(t,\postA)}$. We apply the same practice for communications:  $M_i^{(t)} = (M_i^{(t,\preA)},M_i^{(t,\postA)})$. 
    \item Memory. Stores this round's new information $H_i^{(t)} = (O_{i}^{(t)}, M_i^{(t)}, A_{i}^{(t)})$, including the agent's observation, communication, and action.
\end{enumerate}

\begin{figure}[ht]
    \FIGURE{
    \includegraphics[width=0.9\textwidth]{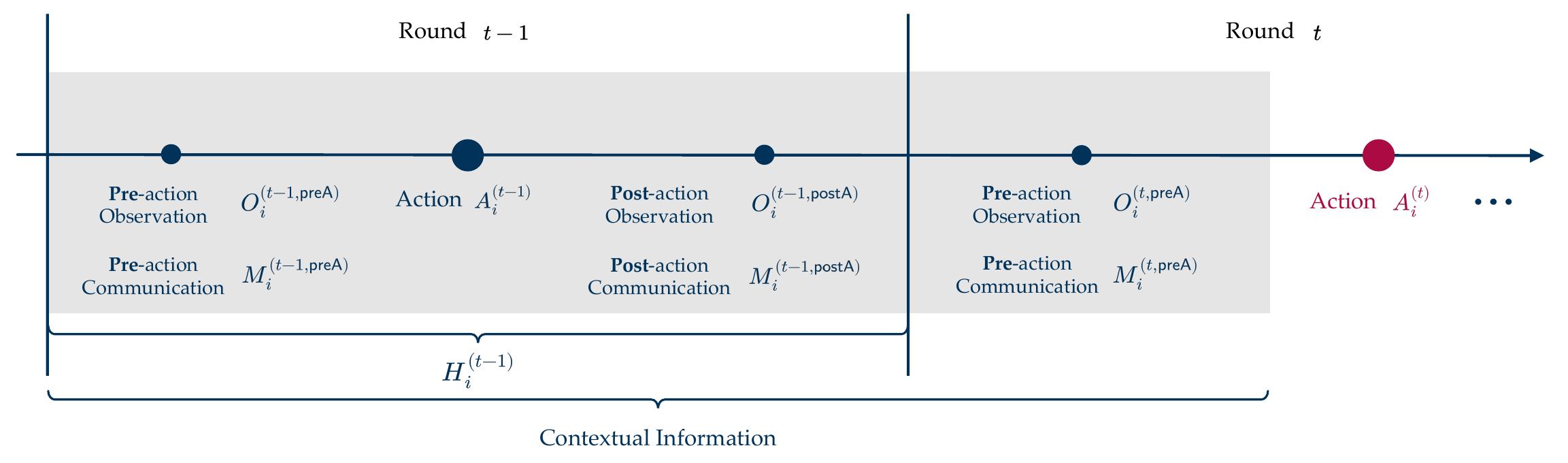} 
    }
    {Order of Events\label{fig:events-order}
    }
    {Contextual information is the inputs, beyond the system design $\theta$, that are provided to the LLM to simulate the agent's action.}
\end{figure}

The system state at the end of round $t$ is the collection of new information from all agents: $\xi^{(t)} = \bigcup_{i \in \scrN} H^{(t)}_i$. 
We emphasize that components of $O_{i}^{(t)}$, $ M_i^{(t)}$, and $A_{i}^{(t)}$ are typically numerical and may be mixed with text.
They must be embedded in LLM prompts and properly extracted from textual outputs (see Section~\ref{sec:prompt}). 
The state transition to the next round is given by the one-round simulation operator $\Psi_\theta$, which maps $\xi^{(t)}$ to $\xi^{(t+1)}$ under the system design $\theta$. 

Iterating $\Psi_\theta$ over the horizon $T$ yields the trajectory $\{\xi^{(t)}:t=0,1,\ldots,T\}$, where each state depends only on its immediate predecessor.
Thus, the LLM-MAS dynamics are Markovian.
With $\theta$ fixed along the trajectory, the system state forms a controlled Markov chain, since the distribution of the agents' behavior simulated by LLMs is affected by $\theta$ (see Section~\ref{sec:decision-depend}).

\subsection{Illustration with the Supply Chain Example} \label{sec:simsys-ex}

In Example~\ref{ex:scm}, the system has three agents with different roles: manufacturer ($\msfM$), retailer ($\msfR$), and consumer ($\msfC$). 
Consider the retailer. In each round $t$, the retailer's action $A_{\msfR}^{(t)}$ consists of a retail price $\RT^{(t)}$ and a marketing budget $\MKT^{(t)}$. These are chosen using the previous memory $H_{\msfR}^{(t-1)}$ and 
the pre-action message from the manufacturer $M^{(t, \preA)}_{\msfR \gets \msfM}$. 
After acting, the retailer calls a tool (simulated by an auxiliary LLM\footnote{This tool-use highlights a key distinction between LLMs as language models and LLM-powered agents. The latter are larger systems that use an LLM component to make plans, call tools (for example, via the Model Context Protocol to connect to external data and software), and execute actions subject to certain constraints  \citep{Cohen25Vision}.}) that uses $\RT^{(t)}$ and $\MKT^{(t)}$ to generate the ad narratives $\AD^{(t)}$.
It then sends $(\RT^{(t)}, \AD^{(t)})$ to the consumer. After the consumer acts, the retailer receives the purchasing quantity $\QUT^{(t)}$.

The retailer has no pre‑action observation, so 
$O_{\msfR}^{(t,\preA)} = \emptyset$.
Its pre‑action communication is the message from the manufacturer $M^{(t, \preA)}_{\msfR \gets \msfM} = (\WS^{(t)}, \FP^{(t)})$. 
Its post-action communications include the message sent to the consumer $M_{\msfR \to \msfC}^{(t,\postA)}  = (\RT^{(t)}, \AD^{(t)})$ and the message later received in return $M_{\msfR \gets \msfC}^{(t,\postA)} = \QUT^{(t)}$. There is no post-action observation. Thus, at the end of this round, the retailer's memory module stores \begin{align*}
    H_{\msfR}^{(t)} = (\overunderbraces{&&&&\br{2}{M^{(t, \postA)}_{\msfR \to \msfC}}}%
{&\WS^{(t)}, & \FP^{(t)}, &\MKT^{(t)}, & \RT^{(t)}, & \AD^{(t)}, & \QUT^{(t)}}%
{& \br{2}{M^{(t, \preA)}_{\msfR \gets \msfM}} &\br{2}{A_{\msfR}^{(t)}} & & \br{1}{M^{(t,\postA)}_{\msfR \gets \msfC}}}).
\end{align*}

The other two agents' observations, communications, and actions are defined similarly, with details in Section~\ref{ec:ex} of the e-companion. 
Table~\ref{tab:context-info-scm} summarizes these results.
\begin{table}[ht]
\TABLE
{Components of System State in the Supply Chain Example \label{tab:context-info-scm}}
{
\begin{tabular}{@{\extracolsep{30pt}} c@{\extracolsep{20pt}} c@{\extracolsep{20pt}} c@{\extracolsep{20pt}}c @{\extracolsep{20pt}} c @{\extracolsep{20pt}} c @{\extracolsep{20pt}} c @{\extracolsep{20pt}} c @{\extracolsep{20pt}}}
    \toprule
    \multirow{2}{*}{$\scrR$} & \multirow{2}{*}{$O^{(t,\preA)}$} & \multicolumn{2}{c}{$M^{(t, \preA)}$} & \multirow{2}{*}{$A^{(t)}$} & \multirow{2}{*}{$O^{(t, \postA)}$} & \multicolumn{2}{c}{$M^{(t, \postA)}$} \\
    \cmidrule{3-4} \cmidrule{7-8}
    & & sending & receiving & & & sending & receiving \\
    \midrule
    $\msfM$ & $\emptyset$ & $\emptyset$ & $\emptyset$ & $\WS,\TECH$ & $\EMS$ & $\WS, \FP \; (\text{to } \msfR)$ & $\QUT \;(\text{from } \msfR)$ \\
     \\ 
    \multirow{2}{*}{$\msfR$} & \multirow{2}{*}{$\emptyset$} & \multirow{2}{*}{$\emptyset$} & \multirow{2}{*}{$\WS, \FP \; (\text{from } \msfM)$} & \multirow{2}{*}{$\MKT,\RT$} & \multirow{2}{*}{$\emptyset$} & $\AD, \RT \; (\text{to } \msfC)$ & \multirow{2}{*}{$\QUT \; (\text{from } \msfC)$} \\
    & & & & & &  $\QUT \; (\text{to } \msfM)$  & \\
    \\
    $\msfC$ & $\emptyset$ & $\emptyset$ & $\AD, \RT \; (\text{from } \msfR)$ & $\WTP,\QUT$ & $\emptyset$ & $\QUT \; (\text{to } \msfR)$ & $\emptyset$ \\
    \bottomrule
\end{tabular}
}
{\emph{Note.} $\WS$: wholesale price;  $\TECH$: level of investment in low-carbon technology; $\MKT$: marketing budget; $\RT$: retail price; $\WTP$: willingness to pay; $\QUT$: purchase quantities; $\EMS$: carbon emissions; $\FP$: carbon footprint; $\AD$: advertisement narratives. See 
Example~\ref{ex:scm}  and Figure~\ref{fig:LLM-MAS-SCM} for details. 

} 
\end{table}

\subsection{Prompting LLM-Powered Agents: From Numbers to Text and Back} \label{sec:prompt}

The system design optimization in \eqref{eq:so} is numerical: the design $\theta$ is numerical, and the components of the system state $\xi$ are typically numerical. 
In contrast, LLMs are text-native.
They process inputs and produce outputs as sequences of tokens. 
A distinctive feature of LLM-MAS, compared to traditional simulation, is its \emph{semantic interface}.
This interface connects numerical data to text-based LLM queries through carefully constructed prompts, which facilitate embedding numbers into text and  extracting them back out.

The data flow for simulating an LLM-powered agent's action is 
\begin{align*}
\big(\underbrace{\theta}_{\text{design}}, \underbrace{H_i^{(t-1)}, O_i^{(t, \preA)}, M_i^{(t,\preA)}}_{\text{contextual information}}\big) {} \overset{\text{embedding}}{\longrightarrow} {} \underbrace{\scrX_i^{(t)} \to \LLM \to \scrY_i^{(t)}}_{\text{prompting}} {}  \overset{\text{extraction}}{\longrightarrow} {} \underbrace{A_i^{(t)}}_{\text{action}}.
\end{align*}
Here, the design and contextual information are embedded into the prompt $\scrX_i^{(t)}$. 
Querying the LLM with this prompt produces the textual output $\scrY_i^{(t)}$, from which the agent's numerical action $A_i^{(t)}$ is extracted.

Both the embedding\footnote{Here, ``embedding'' refers to filling the textual prompt templates by replacing semantic variables (e.g., $\SUBSIDY$) with specific numerical values. This differs from token embeddings, which are vector representations in high-dimensional spaces learned by LLMs during pre-training.} and extraction steps rely on a prompt template (Figure~\ref{fig:template}). 
The figure's left panel presents a generic template for prompting an LLM. 
It serves as a reference and can be adapted if alternative wording better elicits human-like behavior in a particular application.
The right panel shows a concrete instance of the consumer agent from the supply chain example. 
The template has five primitive blocks, shown in different colors: the agent's role (yellow), its attributes (green), the system design (red), the contextual information (gray), and the output requirements (blue). 
Once defined for a specific agent, with role $\scrR_i$ (e.g., ``consumer'') and attributes $\chi_i$ (e.g., ``eco-aware''), the template remains \emph{fixed} throughout LLM-MAS.

\begin{figure}[ht]
    \FIGURE{
    $
    \begin{array}{c}
    \includegraphics[width=0.8\textwidth]{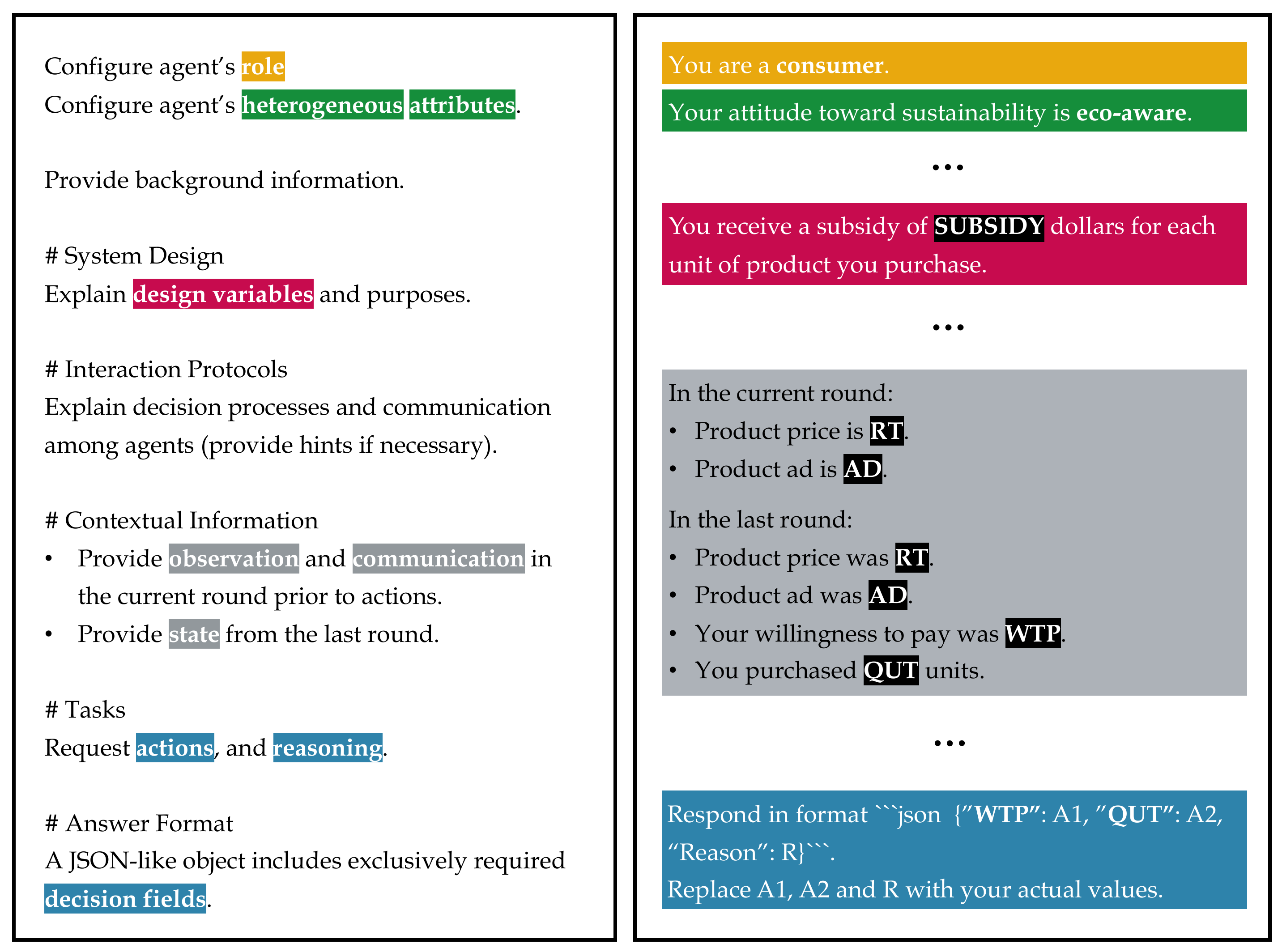} \\
    \end{array}
    $
    }
    {Prompt Template \label{fig:template}
    }
    {Left: General structure, highlighting the key information to be embedded. 
    Right:  Concrete instance for the consumer agent in the supply chain example. Ellipses indicate additional details that help the agent interpret the highlighted content.
    } 
\end{figure}

At run time, only the content inserted in the designated placeholders (black) varies. 
The content includes 
the design $\theta$ (e.g., $\SUBSIDY$) and contextual information $(H_i^{(t-1)}, O_i^{(t, \preA)}, M_i^{(t,\preA)})$ relevant to the agent's action.
For the consumer agent, no pre-action observation is available ($O_i^{(t, \preA)} = \emptyset$), the pre-action communication includes the retail price and the ad narratives communicated from the retailer ($M_i^{(t, \preA)} = (\RT, \AD)$). The carried-over information $H_i^{(t-1)}$ includes $\RT$, $\AD$, willingness to pay ($\WTP$), and purchasing quantity ($\QUT$) from the previous round. See Table~\ref{tab:context-info-scm}.

To simulate an agent's behavior, we prompt the LLM to generate specific actions and prescribe the format in the prompt. A common choice is the standard \texttt{JSON} format (e.g., \verb|```|\texttt{json \{"WTP": A1, "QUT": A2, "Reason": R\}}\verb|```|). 
The desired output includes the action fields $A_i^{(t)}$ (e.g., \texttt{"WTP"} and \texttt{"QUT"}) and an auxiliary field (e.g., \texttt{"Reason"}) that provides a lightweight chain-of-thought reasoning for the chosen actions.
We then parse the output string $\scrY_i^{(t)}$ to extract $\texttt{A1}$ and $\texttt{A2}$ and convert them back to numbers.

\begin{remark}
Our system design optimization via LLM-MAS differs fundamentally from prompt optimization \citep{Liu23pretrain}, which seeks to improve LLM performance through linguistic refinements to the prompt, such as revising structure, rephrasing descriptions, or adding logical hints. In our setting, the prompt text is held fixed; optimization operates solely over the numerical quantities embedded in the prompt.
\end{remark}

\section{Algorithm}  \label{sec:algo}

A standard approach for solving the stochastic optimization problem \eqref{eq:so} is to use first-order methods.
For example, we can iteratively update the estimate of the optimal solution through
\begin{align}
    \theta_{k+1} = \Pi_\Theta(\theta_{k} - \eta_k \nabla f(\theta_k)), \label{eq:sa}
\end{align}
where $\theta_{k}$ is the iterate at step $k$, $\eta_k > 0$ is the stepsize, and $\Pi_\Theta$ denotes the Euclidean projection onto the feasible set $\Theta$.
However, applying this method in the setting of LLM-MAS poses distinctive challenges. 
In the following, we first describe these challenges, then develop an algorithm to address them, and finally discuss variance reduction techniques to enhance the algorithm's efficiency.

\subsection{Technical Challenges} \label{sec:challenges}

The first challenge is gradient computation. 
Because the uncertainty is decision dependent, we cannot calculate the gradient of $ f(\theta) = \E_{\xi \sim \mu_\theta}[F(\theta;\xi)]$ by differentiating $F(\theta; \xi)$ alone. 
This issue is commonly resolved using a change of measure. Consider a probability distribution $\kappa$ on $\Theta$ that is independent of $\theta$ and dominates $\mu_\theta$ (i.e., $\mu_\theta$ is absolutely continuous with respect to $\kappa$). 
Under regularity conditions that permit interchanging differentiation and expectation, one can show  that 
\begin{align}
    \nabla f(\theta) = \E_{\xi\sim \kappa}\Bigl[ \underbrace{\nabla_\theta F(\theta; \xi)L(\theta;\xi) + F(\theta;\xi) \nabla_\theta L(\theta;\xi) }_{\coloneqq h(\theta;\xi)}\Bigr],
    \label{eq:grad}
\end{align}
where $L(\theta; \xi) = (\dd \mu_\theta/\dd \kappa)(\xi)$ is the likelihood ratio (i.e., the Radon--Nikodym derivative) of $\mu_\theta$ with respect to $\kappa$; see \cite{LEcuyer90}.
When $\eqref{eq:grad}$ holds, $h(\theta;\xi)$ is an unbiased estimator of $\nabla f(\theta)$.
Plugging it into \eqref{eq:sa} yields the stochastic gradient descent algorithm. 

In LLM-MAS,
however, $h(\theta;\xi)$ is intractable.
Even though we may choose any $\theta$-independent distribution $\kappa$, 
the likelihood ratio $L(\theta;\xi)$ is generally not computable because  
$\mu_\theta$ is highly complex. 
Given $\theta$, the generation of $\xi$ is autoregressive, multi-round, and multi‑agent:
each token depends on the full prior context, messages pass among agents over successive rounds, so small early random changes can cascade into large downstream shifts, creating deep dependencies. 
As noted in Remark~\ref{remark:uncertainty}, 
additional randomness due to mixed‑precision arithmetic, GPU parallelism and batching across concurrent requests by LLM service providers further increases the complexity of $\mu_\theta$, rendering $L(\theta;\xi)$ practically intractable to evaluate.
We address this challenge with a zeroth‑order approach that estimates $\nabla f(\theta)$ from noisy function evaluations of $f(\theta)$ 
(with a controlled bias), 
avoiding the need to know or learn $\mu_\theta$.

The second challenge concerns steady-state performance. 
We cannot directly draw independent samples from $\mu_\theta$, the stationary distribution of the controlled Markov chain.
Instead, we generate a long trajectory and use the terminal state $\xi^{(T)}$ to approximate $\mu_\theta$ when $T$ is large enough.
LLM queries are costly because inference time scales with the number of tokens processed: the model first encodes the input tokens (prefill stage), then generates the output tokens one at a time (decode stage), conditioning on all previously seen tokens due to its autoregressive nature, and at each step each token goes through the deep transformer architecture. Consequently, long prompts and responses incur proportionally high computational cost.
In a multi‑agent simulation, each round may need several queries per agent to generate actions and exchange messages, so total processed tokens grow with both horizon and agent count.  
Therefore, simulating a sufficiently long trajectory is token-intensive and suffers from high latency. 
In the supply chain example, simulating a single $100$-round trajectory takes roughly half an hour on a local server running a small model (LLaMA3.1-8B). 

The high cost of generating a long trajectory makes it prohibitive to use independent samples of $\xi^{(T)}$ for each iterate $\theta_k$ in the recursion \eqref{eq:sa}. 
Instead of simulating many trajectories, we propose an algorithm that uses one single trajectory across optimization iterations to achieve substantial savings in LLM query cost.

\subsection{Zeroth-Order Approach} \label{sec:zosa}

For simplicity, assume for now that, given $\theta$, we can sample $\xi$ from the stationary distribution $\mu_\theta$ (we relax this later).
We estimate $\nabla f(\theta)$ by evaluating $F(\theta;\xi)$ at small perturbations of $\theta$, with each $\xi$ generated from the corresponding perturbed version of $\mu_\theta$.
Many zeroth‑order methods follow this framework; see \cite{PrashanthShalabh25}. 
We adopt a gradient estimator based on randomized, simultaneous perturbations \citep{NesterovSpokoiny17}, which is popular for its ease of use and sample efficiency in multi‑dimensional settings.

Consider a smoothed version of the objective function $f(\theta)$:
\begin{align}
    f_{\delta}(\theta) \coloneqq \E_{u \sim \msfLambda}[f(\theta+\delta u)],  \label{eq:smoothed-f}
\end{align}
where  $\delta > 0$ is the smoothing parameter (i.e., perturbation radius) and $u \in \RR^{d}$ is a zero-mean random direction drawn from a $d$-dimensional distribution $\msfLambda$. 
When $\delta$ is small, $f_\delta(\theta)$ and its gradient $\nabla f_\delta(\theta)$ closely approximate $f(\theta)$ and $\nabla f(\theta)$. 
We take $\msfLambda=N(0,d^{-1}I_d)$, the standard multivariate Gaussian distribution normalized by the dimension $d$, which ensures the perturbations satisfy $ \E_{u\sim \msfLambda}[\|u\|^2] = 1$. 
By Stein's identity, 
\begin{align}
\nabla f_{\delta} (\theta) = \frac{d}{\delta}\E_{u \sim \msfLambda}[f(\theta+\delta u)u], \label{eq:stein}    
\end{align}
which leads to the unbiased estimator of $\nabla f_\delta(\delta)$: 
\begin{align}
    G(\theta;\xi^{\pm},u,\delta) \coloneqq \frac{d}{2\delta} \left(F(\theta+\delta u; \xi^+) - F(\theta-\delta u; \xi^-) \right) u, \quad u \sim N(0,d^{-1}I_d),\label{eq:gradest-two} 
\end{align}
where $\xi^{\pm} = (\xi^+, \xi^-)$, drawn from the perturbed distributions $\mu_{\theta+\delta u}$ and $\mu_{\theta-\delta u}$, respectively.

This estimator uses two samples of $\xi$ from a pair of symmetric random perturbations of $\theta$, which is query-efficient for LLM-MAS.
In contrast, the classic central finite-difference estimator for $\nabla f(\theta)$ applies coordinate-wise perturbations and needs two evaluations per dimension, resulting in a total of $\scrO(d)$ state samples.
This is often prohibitively costly in LLM-MAS when $\theta$ is multi-dimensional. 
Moreover, deterministic, coordinate-wise perturbations can under‑explore the parameter space compared with randomized directions, potentially causing stagnation when optimizing nonconvex objectives \citep{Chin97}.

\subsection{On-Trajectory Learning} \label{sec:otl}

To use the zeroth-order estimator \eqref{eq:gradest-two} in the recursion \eqref{eq:sa}, 
we face one main difficulty: we cannot  sample directly from $\mu_\theta$. 
For any given $\theta$, 
the stationary distribution of the  Markov chain $\Psi_\theta$ can only be approximated through  sufficiently long simulations of $\xi^{(t)}$. 

A straightforward implementation then proceeds as follows. 
At each iterate $\theta_k$, 
we simulate two trajectories, corresponding to the perturbations $\theta_k+\delta_k u_k$ and $\theta_k-\delta_k u_k$, where $\delta_k$ is the smoothing parameter and $u_k$ is the random perturbation direction.
The terminal state of each trajectory approximates a sample from the corresponding perturbed stationary distribution. 
We refer to this approach as the \emph{multi-trajectory learning} (MTL) algorithm (Figure~\ref{fig:zosa}), because it requires many trajectories throughout the optimization.

However, generating long trajectories between successive iterations is costly in LLM-MAS. 
A single round of simulation can take tens of seconds or even minutes, and the runtime grows with the number of agents 
and communication links. 
Waiting for LLM‑MAS to finish a long trajectory before updating design parameters is therefore excessively expensive. 
In addition, only the terminal state $\xi^{(T)}$ contributes to gradient estimation, while intermediate states $\xi^{(1)}, \ldots, \xi^{(T-1)}$ are  discarded even though they potentially contain valuable information.
This wastes LLM queries since each state transition consumes tokens, leading to higher token usage and longer runtime with limited gain in information per query.

To address the sample inefficiency of MTL, which stems from decoupling parameter updates from gradient estimation, we propose an \emph{on-trajectory learning} (OTL) algorithm (Figure~\ref{fig:zosa}). 
OTL operates on one single trajectory, along which it updates the design parameters and concurrently forms the zeroth‑order gradient estimate using a pair of one‑round simulations initialized from the same current state but executed under two perturbed designs. 
Specifically, at each iteration $k$,
\begin{align}
    \makebox[4.5cm][l]{(Random Perturbation)} &
    \begin{cases} 
    u_k \sim N(0, d^{-1}I_d), \\[0.6ex]
    \xi_k^{+} = \Psi(\xi_k \vert \theta_k + \delta_k u_k), \\[0.6ex]
    \xi_k^{-} = \Psi(\xi_k \vert \theta_k - \delta_k u_k),
    \end{cases}
    \label{eq:perturb-step} \\[0.5ex]
    \makebox[4.5cm][l]{(Gradient Estimation)} &
    G_k = G(\theta_k; \xi_k^{\pm}, u_k, \delta_k),
    \label{eq:grad-step} \\[0.5ex]
    \makebox[4.5cm][l]{(Parameter Update)} &
    \theta_{k+1} = \Pi_{\Theta}(\theta_k - \eta_k G_k),
    \label{eq:zosa} \\[0.5ex]
    \makebox[4.5cm][l]{(State Update)} &
    \xi_{k+1} = \Psi(\xi_k \vert \theta_k),
    \label{eq:state-update-step}
\end{align}
where we write $\Psi_\theta(\xi)$ as $\Psi(\xi|\theta)$ for notational convenience.

The OTL trajectory does not come from the Markov chain $\Psi_\theta$ with a fixed $\theta$; 
instead, each state is generated from the previous one under an adaptively changing $\theta$ in \eqref{eq:state-update-step}. 
The samples $\xi^{\pm}_k = (\xi_k^+, \xi_k^-)$ from random perturbations in \eqref{eq:perturb-step} are generated from the transition distribution of the Markov chain with the perturbed design parameter, rather than from the corresponding stationary distributions assumed by the zeroth-order estimator \eqref{eq:gradest-two}.
This mismatch introduces 
bias in estimating $\nabla f_{\delta_k}(\theta_k)$, the smoothed counterpart of the gradient $\nabla f(\theta_k)$.

To prevent the bias from hindering convergence of the iterates $\theta_k$, we adopt a \emph{two-timescale} scheme. The gradient estimate $G_k$ evolves on the fast timescale so it can track the targets induced by the current design parameter.
The design parameter $\theta_k$ evolves on the slow timescale, updating only after the fast process has effectively stabilized near the regime defined by $\theta_k$. 
This two-timescale scheme is implemented by selecting the perturbation radius $\delta_k$ for gradient estimation and the stepsize $\eta_k$ for parameter updating such that both diminish with iterates while $\eta_k/\delta_k \to 0$ as $k\to\infty$. 
A common choice is $\delta_k = \delta_0/ (1+k)^\alpha$ and $\eta_k = \eta_0/ (1+k)^\beta$, where $\delta_0, \eta_0 > 0$ and $ \alpha < \beta$; see Assumption~\ref{ass:stepsize} in the Appendix for details.

\begin{figure}[ht]
    \FIGURE{
\begin{minipage}[t]{0.50\textwidth} \vspace{-\baselineskip}
\begin{algorithm}[H]
\captionof{algorithm}{On-Trajectory Learning} 
\begin{algorithmic}[1]
\renewcommand{\algorithmicrequire}{\textbf{Input:}}
\renewcommand{\algorithmicensure}{\textbf{Output:}}
\Require $\delta_0, \eta_0, \alpha, \beta, \xi_0, \theta_0$
\Ensure Last iterate $\theta_k$
\For{$k=0,1,\ldots$}
\Comment{LLM-MAS}
    \State Set $\delta_k = \delta_0/(1+k)^{\alpha}$, $\eta_k = \eta_0/(1+k)^{\beta}$
    \State Draw $u_k \sim N(0,d^{-1}I_d)$
    \newline
    \State Generate $\xi^+_k = \Psi(\xi_{k} \vert \theta_k + \delta_k u_k)$ 
    \State Generate $\xi^-_k = \Psi(\xi_{k} \vert \theta_k - \delta_k u_k)$ 
    \newline
    \State Compute $G_k = G(\theta_k;(\xi_k^+, \xi_k^-),u_k,\delta_k)$
    \State Set $\theta_{k+1} = \Pi_{\Theta}(\theta_k - \eta_k G_k)$
    \State Generate $\xi_{k+1} = \Psi(\xi_k \vert \theta_{k})$ 
\vspace{0.7ex}
\EndFor
\end{algorithmic}
\end{algorithm}
\end{minipage} \hfill
\begin{minipage}[t]{0.50\textwidth} \vspace{-\baselineskip}
\begin{algorithm}[H]
\captionof{algorithm}{Multi-Trajectory Learning} 
\begin{algorithmic}[1]
\renewcommand{\algorithmicrequire}{\textbf{Input:}}
\renewcommand{\algorithmicensure}{\textbf{Output:}}
\Require $\delta_0, \eta_0, \alpha, \beta, T, \xi_0^{(0)}, \theta_0$
\Ensure Last iterate $\theta_k$
\For{$k=0,1,\ldots$}
    \State Set $\delta_k = \delta_0/(1+k)^{\alpha}$, $\eta_k = \eta_0/(1+k)^{\beta}$
    \State Draw $u_k \sim N(0,d^{-1}I_d)$
    \For{$t=0, \ldots, T-1$} \Comment{LLM-MAS}
        \State Generate $\xi_k^{+,(t+1)}  = \Psi(\xi_k^{+, (t)}|{\theta_k + \delta_k u_k})$ 
        \State Generate $\xi_k^{-,(t+1)}  = \Psi(\xi_k^{-, (t)}|{\theta_k - \delta_k u_k})$ 
    \EndFor
    \State Compute $G_k = G(\theta_k;(\xi_k^{+,(T)},\xi_k^{-,(T)}), u_k,\delta_k)$    
    \State Set $\theta_{k+1} = \Pi_{\Theta}(\theta_k - \eta_k G_k)$
    \newline
\EndFor
\end{algorithmic}
\end{algorithm}
\end{minipage}
    }
    {On‑Trajectory vs. Multi‑Trajectory Learning  \label{fig:zosa}
    }
    {\begin{enumerate*}[label=(\roman*)]
        \item For on-trajectory learning, the round index $t$ used in LLM-MAS is dropped because it coincides with the algorithm iterate index $k$. 
        \item For multi-trajectory learning, in the inner loop over $t$, $\xi_k^{+, (0)} = \xi_k^{-, (0)} = \xi_0^{(0)}$ for all $k$. 
    \end{enumerate*}} 
\end{figure}

\subsection{Asymptotic Convergence} \label{sec:asymp}

As the objective function $f(\theta)$ in~\eqref{eq:so} may be nonconvex, 
we target convergence to stationary points. 

\begin{theorem} \label{thm:consistency}
    Under Assumptions \ref{ass:feasible-set}--\ref{ass:stepsize} in the Appendix, the OTL iterates $\{\theta_k:k\geq 0\}$ converge almost surely to the set of stationary points $\Theta_0 = \left\{\theta\in\Theta: 0 \in \nabla f(\theta) + \scrN_{\Theta}(\theta)\right\}$, where $\scrN_\Theta(\theta) \coloneqq \{x\in\RR^d: \langle x, \vartheta-\theta \rangle \le 0, \forall \vartheta\in\Theta\}$ denotes the normal cone of $\Theta$ at $\theta$.
\end{theorem}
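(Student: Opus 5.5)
Our plan is to cast the OTL recursion \eqref{eq:zosa} as a projected stochastic approximation with adaptive Markovian noise and apply the ODE method (Kushner--Yin) for projected SA. The limiting dynamics is the projected ODE $\dot\theta = -\nabla f(\theta) + z$ with $z \in -\scrN_\Theta(\theta)$, whose invariant set is $\Theta_0$, and $f$ serves as a Lyapunov function. We decompose
\begin{align*}
G_k = \nabla f(\theta_k) + \big(\nabla f_{\delta_k}(\theta_k) - \nabla f(\theta_k)\big) + \big(\bar G(\theta_k,\xi_k) - \nabla f_{\delta_k}(\theta_k)\big) + \big(G_k - \bar G(\theta_k,\xi_k)\big),
\end{align*}
where $\bar G(\theta,\xi) \coloneqq \E[G(\theta;\xi^\pm,u,\delta)\mid \theta,\xi]$ is the conditional mean over $u$ and the one-step transitions from state $\xi$. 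It then suffices to show, in order: (a) the last term is a martingale difference whose weighted sum $\sum_k \eta_k(\cdot)$ converges a.s.; (b) the smoothing bias is $\scrO(\delta_k)$ by Lipschitz continuity of $\nabla f$, so it vanishes; (c) the Markovian term has asymptotically negligible weighted partial sums. Compactness of $\Theta$, boundedness of $F$ (or moments), and the stepsize conditions of Assumption~\ref{ass:stepsize} are taken from the Appendix.

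For (a), $\E\|G_k\|^2 \lesssim d\,\E[F^2]/\delta_k^2$, since the two evaluations use distinct transitions rather than common noise. Square-summability therefore requires $\sum_k \eta_k^2/\delta_k^2 < \infty$, and this is exactly where $\eta_k/\delta_k\to 0$ fast enough enters. The martingale convergence theorem then gives convergence of $\sum_k \eta_k (G_k-\bar G(\theta_k,\xi_k))$.

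For (c), which is the main obstacle, we would use the Poisson equation. For each perturbed parameter $\vartheta$, let $\hat F_\vartheta$ solve $\hat F_\vartheta - P_\vartheta \hat F_\vartheta = F(\vartheta;\cdot) - f(\vartheta)$, where $P_\vartheta$ is the transition kernel of $\Psi_\vartheta$. Uniform ergodicity (assumed) makes $\hat F_\vartheta$ bounded and Lipschitz in $\vartheta$. Writing $P_{\theta\pm\delta u}F(\theta\pm\delta u;\cdot)(\xi_k) - f(\theta\pm\delta u)$ as $\hat F(\xi_k) - P\hat F(\xi_k)$, we would telescope along the trajectory $\xi_{k+1}\sim P_{\theta_k}(\xi_k,\cdot)$. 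This produces three pieces:
\begin{enumerate*}[label=(\roman*)]
\item a martingale part;
\item a drift from $\theta_k\to\theta_{k+1}$ and $\delta_k u_k \to \delta_{k+1}u_{k+1}$;
\item a mismatch between the kernel at the perturbed parameter $\theta_k\pm\delta_k u_k$ and the kernel $P_{\theta_k}$ that actually drives $\xi_{k+1}$.
\end{enumerate*}
The delicate issue is the factor $d/\delta_k$ multiplying everything. The mismatch in (iii) is of order $\delta_k$, and the factor cancels it to leave $\scrO(1)$ per step. We would avoid that by defining the Poisson solution for the whole map $\xi\mapsto \E_u[\frac{d}{2\delta}(P_{\theta+\delta u}F - P_{\theta-\delta u}F)(\xi)u]$, which is $\scrO(1)$ uniformly in $\delta$ by smoothness in $\vartheta$, and solving the Poisson equation under $P_\theta$ directly. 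Its mean under $\mu_\theta$ equals $\nabla f_\delta(\theta)$ up to $\scrO(\delta)$ via smoothness of $\mu_\theta$ in $\theta$. The drift terms are then bounded by $\eta_k\|G_k\|\cdot$Lip, plus $|\delta_{k+1}-\delta_k|/\delta_k$ sensitivity, which is summable against $\eta_k$ under the stepsize conditions. The result is $\scrO(\sum_k \eta_k^2/\delta_k + \sum_k\eta_k|\delta_k-\delta_{k+1}|/\delta_k^{2}) < \infty$.

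With (a)--(c), the interpolated process of $\{\theta_k\}$ is an asymptotic pseudo-trajectory of the projected ODE. Since $f$ strictly decreases along trajectories outside $\Theta_0$, and $f(\Theta_0)$ is assumed to have empty interior (or $\Theta_0$ has finitely many connected components), the Kushner--Yin theorem yields $\theta_k\to\Theta_0$ almost surely.
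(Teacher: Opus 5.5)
Your steps (a) and (b) are fine. Step (c), however, rests on a claim that is false in general. You assert that the $\mu_\theta$-mean of the one-step map $g_\delta(\theta,\xi)\coloneqq\E_u\big[\tfrac{d}{2\delta}\big(\msfP_{\theta+\delta u}F(\theta+\delta u;\cdot)-\msfP_{\theta-\delta u}F(\theta-\delta u;\cdot)\big)(\xi)\,u\big]$ equals $\nabla f_\delta(\theta)+\scrO(\delta)$ ``via smoothness of $\mu_\theta$.''

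By Stein's identity, $g_\delta(\theta,\xi)=\nabla_\theta\hat f_\delta(\theta;\xi)$, where $\hat f(\vartheta;\xi)=\int F(\vartheta;\xi')\,\msfP_\vartheta(\xi,\dd\xi')$ is the one-step cost from the frozen state $\xi$. So its stationary mean is $\int\mu_\theta(\dd\xi)\,\nabla_\theta\hat f(\theta;\xi)+\scrO(\delta)$. But differentiating $f(\theta)=\int\mu_\theta(\dd\xi)\,\hat f(\theta;\xi)$ also produces the term $\nabla_\vartheta\big[\int\mu_\vartheta(\dd\xi)\,\hat f(\theta;\xi)\big]_{\vartheta=\theta}$, which is the effect of the design on the law of the starting state. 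Smoothness of $\mu_\theta$ makes this term well defined, not zero.

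For instance, let $\Xi=\{0,1\}$, $\Theta=[0,1]$, $\msfP_\theta(\xi,\{1\})=(\theta+\xi)/2$ and $F(\theta;\xi)=\theta^2-\xi$. The chain mixes geometrically at rate $1/2$ uniformly in $\theta$, $\mu_\theta(\{1\})=\theta$, $f(\theta)=\theta^2-\theta$, and $\Theta_0=\{1/2\}$. Yet $\nabla_\theta\hat f(\theta;\xi)=2\theta-1/2$ in both states, so the mean field driving OTL is $2\theta-1/2$ and the iterates converge to $1/4\notin\Theta_0$.

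The $\scrO(1)$ kernel-mismatch term (iii) you noticed is exactly this missing piece. Re-posing a single Poisson equation under $\msfP_\theta$ does not remove it. It only moves it into the centering constant $\int\mu_\theta(\dd\xi)\,g_\delta(\theta,\xi)$, which you then assert to be $\nabla f_\delta(\theta)+\scrO(\delta)$.

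The paper does not obtain this identity from regularity. It works with the transient gradient $H$ of Assumption~\ref{ass:abs-conti}. Lemma~\ref{lem:approx-bd} shows that $\E[G_k\mid\scrF_k]=\nabla_\theta\hat f_{\delta_k}(\theta_k;\xi_k)$ is within $\scrO(\delta_k)$ of $\E_{\xi'\sim\msfP_{\theta_k}(\xi_k,\cdot)}[H(\theta_k;\xi')]$. The Markovian bias $H(\theta_k;\xi_{k+1})-\nabla f(\theta_k)$ is then handled through the Poisson equation \eqref{eq:peq}, whose solution $\nu_\theta$ and its regularity are posited in Assumption~\ref{ass:peq-sol}.

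Integrating \eqref{eq:peq} against $\mu_\theta$ shows that the existence of such a $\nu_\theta$ forces $\E_{\mu_\theta}[H(\theta;\cdot)]=\nabla f(\theta)$, which is precisely your centering claim. Invariance of $\mu_\theta$ alone only gives $\E_{\mu_\theta}[H(\theta;\cdot)]=\int\mu_\theta(\dd\xi)\,\nabla_\theta\hat f(\theta;\xi)$, and in the example above no such $\nu_\theta$ exists. So this identity has to enter your proof as a hypothesis.

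Once it does, your plan essentially becomes the paper's: martingale, coboundary and residual pieces, with the residual controlled by $\|\theta_{k+1}-\theta_k\|\lesssim\eta_k/\delta_k$. The paper's split is the more economical one, for two reasons. Its Poisson equation is for the $\delta$-free function $H$, so no $\delta$-dependence of a Poisson solution needs tracking. And it assumes the Lipschitz-in-$\theta$ regularity of that solution outright. Your claim that uniform ergodicity delivers this regularity does not follow when the kernel is only $\scrW_1$-Lipschitz (Assumption~\ref{ass:trans-kern}) and $F(\vartheta;\cdot)$ is merely bounded.
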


We analyze the OTL algorithm using the ordinary differential equation (ODE) approach for SA \citep{KushnerYin03}. 
We treat the recursion in \eqref{eq:perturb-step}--\eqref{eq:state-update-step} as a discretization of a limiting ODE and show that the iterates asymptotically track it, which yields convergence to a stationary point of problem \eqref{eq:so}. 

Applying this classic approach to our setting raises technical complications due to intertemporal dependencies from adaptive Markovian data. 
Recent work on SA with such data typically assumes access to gradients, a key difference from our setting. 
For example, \cite{CheDongTong26} give a finite-time analysis of stochastic gradient descent with adaptive data based on Lyapunov functions and identify conditions under which the convergence rate approaches that of the independent-noise case for objectives with structure such as convexity. 
\cite{LiLiangChenZhang26} further improve the rate 
using the Poisson equation approach.
Our setting, which applies zeroth‑order gradient estimation to steady-state stochastic optimization, is closest to \cite{LEcuyerGlynn94}, but their analysis is limited to single-server queues and relies on the queueing structure.

We also follow the Poisson equation approach. 
The key in our proof is to control the bias that the zeroth‑order gradient estimator \eqref{eq:gradest-two} accumulates along a trajectory.
First, we address the nonstationary bias from the coupling of parameter updates and gradient estimates in steps \eqref{eq:grad-step}–\eqref{eq:state-update-step} using the two‑timescale scheme in Assumption \ref{ass:stepsize}.
This ensures that the transient gradient $H(\theta;\xi)$ (see Assumption~\ref{ass:abs-conti}) consistently aligns with the target $\nabla f(\theta)$. 
Second, we control the bias resulting from the zeroth-order approach in step \eqref{eq:perturb-step} by exploiting the smoothness of the transition kernel under Assumption~\ref{ass:trans-kern}.
This yields a bound on the gap between the gradient estimator \eqref{eq:gradest-two} and $H(\theta;\xi)$.
See Section~\ref{ec:proof} of the e-companion for details.

\section{Efficiency Enhancement: Variance Reduction} \label{sec:enhance}

Gradient estimation in Markov chains becomes increasingly challenging as the trajectory length grows because the variance of such estimators often rises sharply with time horizon \citep[Chapter VII]{AsmussenGlynn07}. 
In LLM‑MAS, this difficulty is further amplified by multi‑agent interactions within each simulation round. As agents adapt to one another, small local changes can spread through the system and lead to pronounced variability, making variance reduction 
crucial for efficient use of LLM queries.

In the following, we introduce two variance reduction techniques from the zeroth-order optimization literature and demonstrate how to incorporate them in LLM-MAS. 
They are complementary. 
The first applies when $F$ depends on $\theta$ explicitly (i.e., $\nabla_\theta F(\theta;\xi) \neq 0$); it leverages the explicit coupling between the design parameters and system performance. 
The second has broader scope and covers cases where $F$ depends on $\theta$ only through $\xi$ (i.e., $\nabla_\theta F(\theta;\xi) = 0$); 
it instead uses the information accumulated during optimization to handle settings where the coupling between $\theta$ and $F$ is implicit.

\subsection{Guided Perturbation} \label{sec:gp}

In the gradient estimator \eqref{eq:gradest-two},
the perturbation vector $u$ is drawn from an isotropic Gaussian, which explores all directions equally, whereas ideally one  would follow the direction of the unknown gradient $\nabla f(\theta)$.
The \emph{guided perturbation} (GP) method, proposed by \cite{Maheswaranathan19} for settings without noise or with independent noise, 
addresses this by using a \emph{surrogate gradient} $\phi$ to direct the perturbations toward more promising regions. 

Specifically, the GP method samples $u$ from a non-isotropic Gaussian $N(0, \Sigma)$, where the covariance matrix $\Sigma$ is a weighted combination of an isotropic component and a gradient‑aligned component: 
\begin{align}
    \Sigma \coloneqq w \frac{I_d}{d} + (1-w) \frac{\phi \phi^\intercal}{\|\phi\|^2}. \label{eq:cov}
\end{align}
It stretches the spherical Gaussian into an ellipsoid elongated along the surrogate gradient direction. This structure features a distinct eigenvalue $w/d + (1-w)$ in the direction of $\phi$, and $d-1$ identical eigenvalues $w/d$ along directions orthogonal to it.
The weight $w\in[0,1]$ controls the trade-off between exploration and exploitation: a larger value promotes isotropic exploration, while a smaller value steers perturbations toward the surrogate gradient. 
In the extreme case when $w = 1$, the perturbation distribution reduces to $N(0,d^{-1}I_d)$ as in the basic gradient estimator. As $w$ decreases, the perturbations become increasingly aligned with $\phi$, concentrating the search along that direction while reducing the variability in orthogonal directions.

The key to applying this method is the choice of $\phi$: it is problem-specific and should align closely with $\nabla f(\theta)$. 
In our setting, when $\theta$ enters $F$ explicitly, $\nabla_\theta F(\theta;\xi)$ in~\eqref{eq:grad} is nonzero.
Although $\nabla_\theta F(\theta;\xi)$ is not an unbiased estimator of $\nabla f(\theta)$ due to the decision-dependent uncertainty, 
it remains closely related with the true gradient and is therefore informative. 
We set $\phi = \nabla_\theta F(\theta; \xi)$ to exploit this information in LLM‑MAS.

We denote the GP gradient estimator by 
\begin{align}
    G^{\GP}(\theta;\xi, \xi^\pm, u, \delta) =  \frac{1}{2\delta} \left(F(\theta+\delta u; \xi^+) - F(\theta-\delta u; \xi^-) \right) u, \quad u \sim N(0,\Sigma(\xi)), \label{eq:gradest-gp} 
\end{align}
where we write $\Sigma(\xi)$ to highlight its dependence on $\xi$ through the surrogate gradient $\phi$. 

We incorporate GP into OTL as follows. 
At each iteration $k$, we modify the first two steps \eqref{eq:perturb-step} and \eqref{eq:grad-step} to 
\begin{align}
    \makebox[4.5cm][l]{(Random Perturbation)} &
    \begin{cases} 
    u_k \sim N(0, \Sigma(\xi_k)), \\[0.6ex]
    \xi_k^{+} = \Psi(\xi_k \vert \theta_k + \delta_k u_k), \\[0.6ex]
    \xi_k^{-} = \Psi(\xi_k \vert \theta_k - \delta_k u_k),
    \end{cases}
    \label{eq:perturb-step-gp} \\[0.5ex]
    \makebox[4.5cm][l]{(Gradient Estimation)} &
    G_k = G^{\GP}(\theta_k; \xi_k, \xi_k^{\pm}, u_k, \delta_k),
    \label{eq:grad-step-gp}
\end{align}
while the remaining steps follow those in \eqref{eq:zosa} and \eqref{eq:state-update-step}.

Because $\Sigma$ is positive definite, the expected value of this gradient estimator points in a descent direction.
However, with non-isotropic perturbations drawn from $\msfLambda = N(0, \Sigma)$, $G^{\GP}(\theta;\xi, \xi^\pm, u, \delta)$ is no longer unbiased with respect to the gradient of the smoothed objective 
$f_{\delta}(\theta) = \E_{u \sim \msfLambda}[f(\theta+\delta u)]$ in \eqref{eq:smoothed-f}; see Lemma~\ref{lem:gp-bias} of the e-companion. 
To avoid this bias, arising from the dependence of $\Sigma(\xi)$ on $\xi$, that could  harm the convergence of the OTL algorithm, we vary the weight $w$ across iterations so that the GP gradient estimator asymptotically approaches the basic gradient estimator \eqref{eq:gradest-two}. 
Specifically, we set $w_{k+1} = 1 - \rho(1-w_k)$ for some $\rho\in(0,1)$, then $1-w_k$ decays geometrically with contraction factor $1-\rho$, implying that $w_k \to 1$.
This scheduling enables guided exploration in early iterations while preserving the overall convergence, particularly when the smoothing parameter decays polynomially as $\delta_k = \delta_0/(k+1)^\alpha$.

\subsection{Residual Feedback} \label{sec:rs}

When $\nabla_\theta F(\theta;\xi) = 0$, the system performance depends on $\theta$ only through the distribution of $\xi$, so the GP method does not apply. 
We instead use the \emph{residual feedback} (RF) method, proposed by \cite{Zhang22} for deterministic or stochastic settings with independent noise, and extend it to our setting where the distribution of $\xi$ changes adaptively with the iterate $\theta_k$. 

Unlike the basic gradient estimator \eqref{eq:gradest-two} and the GP gradient estimator \eqref{eq:gradest-gp}, both of which use only samples in the current iteration,
the RF method additionally uses samples from the previous iteration. 
If the noise were independent,  which we denote by $\varepsilon_k$ to distinguish it from $\xi_k$, we would estimate $\nabla f(\theta_k)$, using one sample from iteration $k$ and another from iteration $k-1$, via $d \delta_k^{-1} (F(\theta_k+\delta_k u_k; \varepsilon_k) - F(\theta_{k-1}+\delta_{k-1} u_{k-1}; \varepsilon_{k-1}))u_k$, where the perturbation vectors $u_{k}$ and $u_{k-1}$ are drawn from  $N(0, d^{-1}I_d)$ as in \eqref{eq:gradest-two}. 
The difference between the two function evaluations represents the residual, as it carries information from the most recent evaluation, thus giving rise to the method's name.
Conceptually, it is closely related to the control variate method, which also leverages the correlation between a target random variable and an auxiliary one to reduce variance \citep[Chapter V]{AsmussenGlynn07}.

However, in our setting the ``noise'' $\xi_k$ is Markovian, and its transition distribution changes adaptively with the iterate $\theta_k$. 
Because we also perturb $\theta_k$, we need to specify the parameter value $\theta$ that governs the transition distribution generating each sample. 
In other words, we should decide which random variables are to replace $\varepsilon_k$ and $\varepsilon_{k-1}$ in the RF gradient estimator for the independent-noise case.

We use samples drawn under the perturbed distributions for those roles: 
\begin{align}
    &  G^{\RF}(\theta_k;  u_k, \delta_k, \xi_k^+, \theta_{k-1}, u_{k-1}, \delta_{k-1},  \xi_{k-1}^+) \nonumber \\ 
    \coloneqq & \frac{d}{\delta_k} \left(F(\theta_k + \delta_k u_k; \xi_k^+) - F(\theta_{k-1}+\delta_{k-1}u_{k-1};\xi_{k-1}^+) \right)u_k,  \quad u_k, u_{k-1}\sim N(0, d^{-1}I_d), \label{eq:gradest-rs}
\end{align}
where $\xi_k^+$ and $\xi_{k-1}^+$ are drawn from the perturbed distributions associated with parameters $\theta_k + \delta_k u_k$ and $\theta_{k-1} + \delta_{k-1} u_{k-1}$, respectively. 
Note that the second term in \eqref{eq:gradest-rs} has mean zero; i.e., 
$\E[F(\theta_{k-1}+\delta_{k-1}u_{k-1}; \xi_{k-1}^+)u_k] = 0$, because $u_k$ is independent of the other random variables and has mean zero. 
Thus, $d \delta^{-1}F(\theta_{k-1}+\delta_{k-1}u_{k-1}; \xi_{k-1}^+)u_k$ serves as a control variate that pairs with the first term $d \delta^{-1} F(\theta_k + \delta_k u_k; \xi_k^+)u_k$, which is an unbiased estimator of $\nabla f_{\delta_k}(\theta_k)$ by Stein's identity \eqref{eq:stein}, to reduce its variance.

The RF gradient estimator mirrors the structure of the basic gradient estimator $G(\theta;\xi^{\pm},u,\delta)$ in  \eqref{eq:gradest-two} yet requires only one new function evaluation per iteration, reducing the computational cost (LLM queries) by half.  
Rather than explicitly performing the second function evaluation as in \eqref{eq:gradest-two}, this estimator implicitly reconstructs it using past data. 
For further comparison, consider a variant of the basic estimator $\wtG(\theta;\xi^{+},u,\delta) \coloneqq (d/\delta) F(\theta+\delta u; \xi^+) u$, which incurs the same computational cost as the RF gradient estimator and is also  unbiased for $\nabla f_\delta(\theta)$ when $u\sim N(0, d^{-1}I_d)$. However, this estimator has substantially higher variance because it lacks a control variate.

To incorporate RF into OTL, 
 we modify the first two steps \eqref{eq:perturb-step} and \eqref{eq:grad-step} of each iteration $k$ to 
\begin{align}
    \makebox[4.5cm][l]{(Random Perturbation)} &
    \begin{cases} 
    u_k \sim N(0, d^{-1}I_d), \\[0.6ex]
    \xi_k^{+} = \Psi(\xi_k \vert \theta_k + \delta_k u_k), 
    \end{cases}
    \label{eq:perturb-step-rf} \\[0.5ex]
    \makebox[4.5cm][l]{(Gradient Estimation)} &
    G_k = G^{\RF}(\theta_k;  u_k, \delta_k, \xi_k^+, \theta_{k-1}, u_{k-1}, \delta_{k-1},  \xi_{k-1}^+),
    \label{eq:grad-step-rf}
\end{align}
while the remaining steps follow those in \eqref{eq:zosa} and \eqref{eq:state-update-step}.

When $F$ varies smoothly as the design parameters move from $\theta_{k-1}+\delta_{k-1}u_{k-1}$ in the previous iteration to $\theta_k+\delta_k u_k$ in the current one, the two evaluations are  correlated. 
This correlation allows the second term in $G^{\RF}$ to act as a control variate, 
effectively offsetting part of the Monte Carlo noise in the gradient estimator. 
As a result, the RF method retrieves the information gathered throughout the algorithm's learning process, effectively reducing the variance.

\section{Numerical Experiments: Efficiency Comparison} \label{sec:scm}

We evaluate our algorithm on the supply chain example. 
The policymaker chooses a carbon tax $\theta_1$ and the consumer subsidy $\theta_2$ to minimize an objective $F(\theta;\xi)$ that balances supply chain welfare, fiscal costs, and environmental externalities.
We also incorporate agent-specific behavioral attributes to reflect their engagement in sustainability initiatives.
Additional details are provided in Section~\ref{ec:ex} of the e-companion.

We implement the OTL algorithm and enhance it with the GP method (OTL-GP) since $\nabla_\theta F(\theta;\xi)$ is nonzero in this example. 
We also consider the following benchmarks. 
\begin{enumerate}[label=(\roman*)] 
    \item Bayesian optimization (BO): A global optimization method that treats the objective $f(\theta)$ as a black-box and builds a surrogate for it (typically a Gaussian process) to guide the search for optimal solutions without gradient information \citep{Frazier18}. 
    Because BO requires independent samples of $\xi$ from the steady-state distribution $\mu_\theta$, each iteration in our implementation takes the terminal state $\xi^{(T)}$ of a long trajectory with $T = 100$ to approximate $\mu_\theta$, as in the MTL algorithm. 
    \item LLM-as-a-Solver: An LLM performs numerical optimization without domain context. 
    We only provide the candidate solutions with their objective values from previous iterations, and then use this information in the next prompt to request a new candidate.
    In this way, the LLM refines its search over iterations, similar to a standard numerical optimization algorithm. 
    Like BO, it also uses multiple trajectories.
    \item LLM-as-a-Solver (CoT): Similar to the above, but the prompt integrates the chain-of-thought technique \citep{wei2022chain} to improve  reasoning and guide the model in refining subsequent solutions.
    \item LLM-as-a-Designer: An LLM acts as the system designer and proposes solutions using its internal logic, without being given the stochastic optimization formulation in \eqref{eq:so}.
    The prompt includes the actions of all agents simulated via LLM-MAS and the corresponding system performance for each round. 
    This information guides the LLM to propose improved designs over successive iterations, using a single trajectory as in the OTL algorithm.
\end{enumerate}
When comparing these algorithms, we track how the objective changes with the number of LLM queries, not the number of iterations, because some use one trajectory per iteration while others use a single trajectory across all iterations.
The results are shown in Figure~\ref{fig:res-scm-longrun}.

\begin{figure}[ht]
    \FIGURE{
    \includegraphics[width=0.85\textwidth]{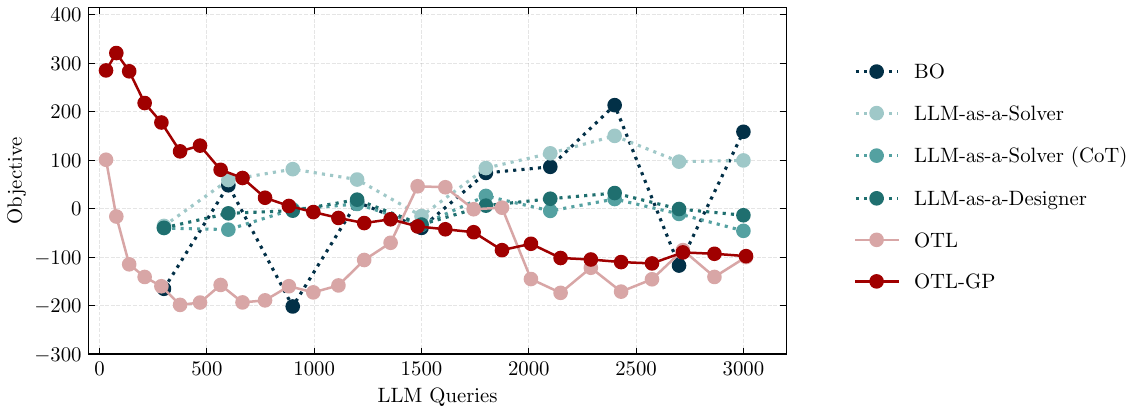}
    }
    {Algorithm Performance \label{fig:res-scm-longrun}
    }
    {\begin{enumerate*}[label=(\roman*)]
        \item LLaMA3.1-8B is the backbone for all agents.
        \item Each one-round simulation operator $\Psi_\theta$ requires three LLM queries.
        The additional query for mimicking the retailer agent's tool-call (for ad-narrative generation) is not included in the count.
    \end{enumerate*}}
\end{figure}

OTL reduces the objective quickly within the first $500$ LLM queries. It then rebounds, declines again, and stabilizes around $2000$ queries. This pattern indicates high sensitivity to noise in LLM-MAS, even though OTL eventually converges. In contrast, OTL-GP benefits from the variance reduction technique and shows a steady decrease without significant oscillations. The curve also drops early, though more slowly than OTL, and it improves the solution consistently over time.

BO updates less frequently than OTL and OTL-GP because it relies on an independent long trajectory for each sample. 
Within the first $500$ LLM queries, it sometimes reaches low objective values, but the curve then fluctuates widely and stays unstable even after $2000$ queries, with large values that suggest ongoing exploration.
This behavior arises mainly because the high query cost of independent samples limits the number of iterations, which slows the surrogate model in learning the objective function and delays exploitation, especially in the presence of heavy noise.
Exploration is critical for the convergence of BO to a globally optimal solution, but in this setting with limited query budget and noisy samples, much of the benefit is lost, leading to unsatisfactory performance.

When used purely as a numerical solver (LLM-as-a-Solver) without domain context, the LLM performs poorly, and incorporating CoT reasoning offers only marginal gains. 
The LLM acts as a language-based learner, drawing on previous observations to generate new candidate solutions. 
Unlike BO, it neither constructs a surrogate of the objective nor applies an acquisition function to balance exploration and exploitation. 
Instead, it relies on heuristics and infers patterns from text without a principled treatment of uncertainty. 
It also lacks a metric to quantify the effect of incremental updates across iterations, thereby limiting its ability to systematically explore the solution space, resulting in substantial performance degradation.

Assigning the LLM to role-play a policymaker (LLM-as-a-Designer) and providing sufficient application background still does not yield notable gains.
The prompt enriches domain understanding but does not endow the model with a principled search method, so LLM can only explore heuristically.  
Because it operates on a single trajectory where the temporal dependence among system states is hard to untangle,
it becomes even more difficult for the LLM to tell whether a change in the solution improves or hurts the performance.

These results underscore the need to frame system design as a numerical optimization problem, rather than relying on plain-language  descriptions of the problem and directly prompting an LLM for decisions. They also align with \citet{Simchi26}, who argue that LLMs should complement, not substitute, mathematically grounded models for optimizing complex stochastic systems. In line with this principle, our framework is best viewed as a human-in-the-loop approach: a human serves as the system designer, while LLM-MAS acts as an assistive component within the system.

\section{Case Study: Innovation Contest Design} \label{sec:case}

We apply LLM-MAS to optimal contest design. 
Innovation contests are open calls for solutions to complex problems. Individuals or teams compete to submit the best ideas, models, or prototypes. In the digital economy, contests harness global expertise at scale, cut experimentation costs, and speed up discovery by turning hard tasks into  merit-based competitions \citep{Hu21,NittalaErat26}.
Platforms like InnoCentive, TopCoder, and Kaggle, along with landmark competitions such as the Netflix Prize, show how contests mobilize talent, produce performance gains, and spread advanced methods across industries.
For broader discussion of innovation platforms; see \citet{Chen20}.

Consider a contest designer who seeks a mechanism that incentivizes contestants to exert as much effort as possible with the objective of maximizing expected total effort. 
We focus on a specific mechanism: a modified all-pay auction with \emph{entry fee} and \emph{reserve}, which has drawn considerable interest in the contest design literature. 
Under this mechanism, the designer sets a prize budget $V$.
All contestants independently decide whether to enter the contest by paying an entry fee $E$ and, conditional on entry, how much effort to exert. 
If the highest effort exceeds a threshold (i.e., the reserve $\hat{e}$),
the top performer receives the prize $V$ and all entry fees, whereas the others receive nothing. 
Otherwise, if all participants' efforts fall below the reserve, each  receives the same amount of a \emph{shared prize} $S$ that equals the entry fee plus a portion of $V$.  

A salient feature of this mechanism is that the shared prize creates a chance of positive payoffs, which encourages low-ability contestants,
who incur higher costs for any given effort level, to participate. 
At the same time, their entry fees are effectively transferred to the winner, raising the top performer's payoff beyond the base prize budget and thereby strengthening high-ability contestants' incentives to exert effort.

\subsection{Analytical Model}\label{sec:analytical-designs}

This mechanism is characterized by a three-element tuple of design parameters $(E, \hat{e}, S)$. 
Under standard assumptions---including risk neutrality, full rationality, and independently and identically distributed abilities---as well as a common, publicly known \emph{liability} $K$ that caps each contestant's entry fee, 
\cite{Liu18} use a game-theoretic analysis to derive in closed form the design parameters that maximize the expected total effort.\footnote{\cite{HuangZhangZhang26} further show that the same contest structure also maximizes the expected winner's effort, though the optimal parameter values may differ from those that maximize expected total effort. 
We focus on the total-effort objective, but an analysis of the winner-effort objective can be done in the same way.}  
The optimal values are expressed as functions of $K$. 
Specifically, 
\begin{align}
    \left( E, \hat{e}, S\right) = \left( K, \, t^*(K)\left[(V+NK)F^{N-1}(t^*(K)) - K \right], \, \frac{K}{F^{N-1}(t^*(K))} \right), \label{eq:game-theo-optimal}
\end{align}
where $N$ is the number of contestants, $F$ is the cumulative distribution function of a contestant's ability on a finite support $[a,b]$. 
The term $t^*(K)$ denotes the cutoff separating low- and high-ability contestants,  given by
\[t^*(K) = \max\left\{J^{-1}(t_0), \, F^{-1} \left( \left(\frac{N K}{V + N K}\right)^{\frac{1}{N-1}} \right)\right\},\]
where $J(t) = t - [1-F(t)]/f(t)$ is the virtual ability function 
with $f$ being the density associated with $F$. Here, $J^{-1}$ and $F^{-1}$ are the inverse functions of $J$ and $F$, respectively, and $1/t_0$ is the marginal benefit of effort for the contest designer. 
With the optimal design, the maximum expected total effort is 
\begin{align}\label{eq:game-theo-total-effort}
    R^* = \int_{t^*(K)}^b \sum_{i=1}^N (J(t_i) - t_0) [ (V+NK) F^{N-1}(t_i) - K ] f(t_i) \, \dd t_i + t_0 V.
\end{align}
See Figure~\ref{fig:contest-theo} for how the optimal design parameters and maximum total effort vary with $K$.

\begin{figure}[ht]
    \FIGURE{
    $
    \begin{array}{c}
    \includegraphics[width=0.8\textwidth]{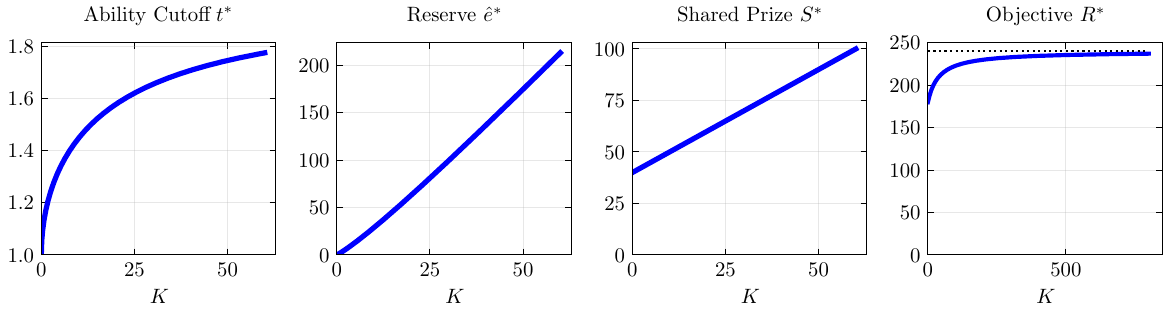}  
    \end{array}
    $
    }
    {Optimal Contest Design (Analytical Model)  \label{fig:contest-theo}
    }
    {Following \cite{HuangZhangZhang26}, we set $N = 3$, $V=120$ and $t_0 = 0$, with $t_i$ drawn from $\Unif[1,2]$ for all $i$. The limiting value of $R^*$ is $bV = 240$, extracting all surplus except that of the highest-ability contestant, which occurs with probability zero. 
    }
\end{figure}

\paragraph{Optimal Prize Allocation.} If no contestant's effort exceeds $\hat{e}^*$, each receives the shared prize $S^*$. Otherwise, the contestant exerting the highest effort receives $V + NK$,  which is the base prize plus the total entry fees, while all others receive $0$. 
Since each participant pays the entry fee $E^*=K$, his net payoff becomes $-K$ (a ``negative prize'') whenever the highest effort exceeds $\hat{e}^*$ but he does not win.

\paragraph{Individual Behavior.} 
Under this optimal design, all contestants choose to enter irrespective of their ability. Their exerted efforts, however, differ by ability.
Low-ability contestants (those with ability below $t^*(K)$) 
exert zero effort, whereas a contestant with ability $t_i > t^*(K)$
exerts effort 
\[e_i = t_i ((V + N K)F^{N-1}(t_i) - K) - \int_{t^*(K)}^{t_i} \bigl((V + N K)F^{N-1}(s) - K\bigr) \, \dd s, \] 
which increases with his ability (Figure~\ref{fig:contest-repl-ind}).

\subsection{Behavioral Data from Lab Experiments} \label{sec:case-lab}

\cite{HuangZhangZhang26} conducted lab experiments to evaluate the game-theoretic optimal designs described in Section~\ref{sec:analytical-designs}. 
They tested six contest designs that vary in the liability $K$ and in the design objective; see Table~\ref{tab:designs}. 
The sessions took place in March 2022 and April 2024 and involved 432 subjects recruited from a university.  
Participants were randomly assigned to one of the six designs, yielding 72 subjects per design.
Under each design, the 72 subjects were grouped into 24 contests, with $N=3$ contestants competing in each.

\begin{table}[ht]
\TABLE
{Contest Designs in \cite{HuangZhangZhang26} \label{tab:designs}}
{
\begin{tabular}{@{\extracolsep{20pt}} c@{\extracolsep{20pt} }  c@{\extracolsep{20pt} } c@{\extracolsep{20pt}} c@{\extracolsep{20pt}} c@{\extracolsep{20pt}}c }
    \toprule
    Design &  Liability $K$ & Objective & Entry Fee $E^*$ & Reserve $\hat{e}^*$ & Shared Prize $S^*$  \\
    \midrule 
    High\_TW & $40$ & Both & $40$ & $136.5$ & $80$  \\ 
    Medium\_TW & $10.5$ & Both & $10.5$ & $30.5$ & $50.5$ \\ 
    Low\_W & $2$ & Winner & $2$ & $35.5$ & $10$ \\ 
    Low\_T & $2$ & Total & $2$ & $4.5$ & $42$\\ 
    Zero\_W & $0$ & Winner & $0$ & $36.5$ & $0$\\ 
    Zero\_T & $0$ & Total & $0$ & $0$ & N.A.\\ 
    \bottomrule
\end{tabular}
}
{\emph{Note.} High, Medium, Low, and Zero indicate the level of $K$, and $T$ and $W$ indicate whether the optimization objective is total effort or winner's effort. \cite{HuangZhangZhang26} prove that when $K$ is large enough, the optimal designs for the two objectives are identical, which is case for ``High\_TW'' and ``Medium\_TW''. ``Zero\_T'' is equivalent to a standard all-pay auction with no entry fee and no reserve; its optimal prize allocation is winner-takes-all, and prize sharing never occurs. 
}
\end{table}

Before each contest, participants were informed of the design and their privately known ability, which was drawn from the interval $[1, 2]$ uniformly at random. They then decided simultaneously whether to enter. 
A contestant who chose not to enter made no further decisions. Otherwise, he paid the entry fee and chose an effort level. Because entry decisions were simultaneous, contestants did not know how many others would enter, so they did not know the exact prize amount until the session ended. 

Across all designs, contestants' behavior differed markedly from game-theoretic predictions; see Figure~\ref{fig:contest-repl-ind}.
The theory predicts that (i) all contestants should enter regardless of ability, and (ii) low-ability contestants should exert zero effort, with a discontinuity in effort at the ability cutoff $t^*(K)$. 
In contrast, entry rates in the lab were well below 100\%, rising gradually from nearly zero for low-ability contestants to nearly 100\% for high-ability contestants. 
Moreover, low-ability contestants exerted positive effort, and effort increases smoothly with ability, showing no visible discontinuity.

After the experiments, a survey was administered to collect personal information from each participant, including demographics, risk attitudes, personality traits, and cognitive reflection test (CRT) score. The CRT score proxies impulsive versus reflective decision-making and is widely used in experimental economics to study strategic behavior 
\citep{frederick05CRT}. \cite{HuangZhangZhang26} used this information to explain the discrepancies between observed behavior and theoretical predictions. 
For example, while the theory assumes risk neutrality and full rationality, participants were generally risk averse and exhibited loss aversion, which is a common form of bounded rationality.

\subsection{LLM-MAS as a Design Evaluator}

We replicate the configurations of \cite{HuangZhangZhang26} to build LLM-MAS, with each LLM-powered agent simulating a contestant. 
The agents are powered by gpt-oss-20B, an open-source LLM from OpenAI.
The LLM prompts are adapted from the instructions given to lab participants, and each agent generates two actions: whether to enter and how much effort to exert. 
To account for individual heterogeneity, each agent is endowed with a \emph{persona} (i.e., attribute $\chi_i$) randomly drawn from the collected personal information. 
It is four-dimensional: gender, risk tolerance ($1 = $ ``not risk-taking at all'' to $7 = $ ``extremely risk-taking''), competitiveness ($1 = $ ``not competitive at all'' to $7 =$ ``extremely competitive''), and CRT score (0 to 3).

The simulation results are shown in Figure~\ref{fig:contest-repl-ind}. 
Across the six contest designs, 
LLM-powered agents' behavior closely matches that of human participants in the lab. 
As with humans, 
their effort increases with the endowed ability, 
and entry rates increase from almost zero for low-ability agents to nearly full participation for high-ability ones.
This alignment is remarkable given that we did not fine-tune or post-train the model, and it was released months before the publication of \cite{HuangZhangZhang26}, so data leakage is not a concern.

\begin{figure}[ht]
    \FIGURE{
    $
    \begin{array}{c}
    \includegraphics[width=\textwidth]{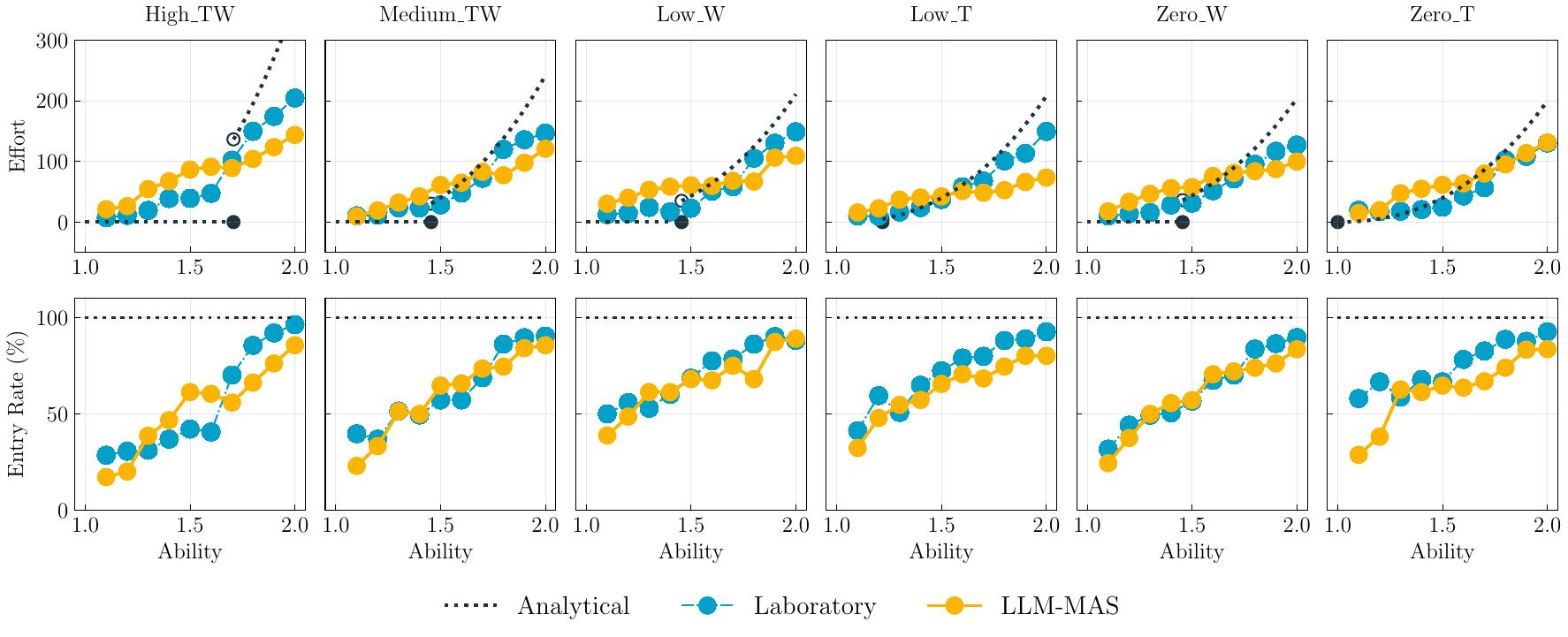} \label{fig:contest-repl-ind}
    \end{array}
    $
    }
    {Contestant Behavior Under Different Designs 
    }
    {Ability cutoff $t^*$ (left to right): $1.707$, $1.455$, $1.455$, $1.218$, $1.455$ and $0.000$.
    } 
\end{figure}

Because lab experiments are costly and time-consuming (the studies in \cite{HuangZhangZhang26} took months, whereas an equivalent number of experiments can be completed in days with LLM-MAS), these results highlight the great potential of LLM-MAS as a cost-effective evaluator for system designs.

In addition to its cost efficiency, LLM-MAS has another advantage over lab or field experiments: it allows us to capture agents' ``real-time'' reasoning processes, providing valuable insights into performance analysis.
In contrast, labs and field studies rarely observe such data except through post-experiment interviews.
In those cases, one can only infer the causes of deviations from theoretical predictions indirectly, based on behavioral outcomes.
The following examples show reasoning traces help explain why contestants' entry rate and effort mismatch the theory.

\begin{enumerate}[label=(\roman*)]
    \item Contrary to the predicted 100\% entry rate, an LLM-powered agent (male; ability 1.613; risk tolerance 2; competitiveness 2; CRT score 3) chose not to enter. His stated reasoning ``\texttt{Low risk tolerance leads to a conservative strategy of not entering to avoid potential loss from competition.}'' suggests that low risk tolerance drives his conservative action. His inclination towards avoiding losses over chasing uncertain gains reflects his self-reported risk attitude and competitive disposition.
    \item Contrary to the predicted zero effort from low-ability contestants, another LLM-powered agent (female; ability 1.287, below the cutoff of 1.802; risk tolerance 3; competitiveness 6; CRT score 2)  chose to enter and invested 100 points out of her 300-point endowment. She anticipated one other would enter as well. Her stated reasoning ``\texttt{I am moderately risk‑averse but highly competitive, so I choose to enter with a modest high investment just above the reserve to maximize potential payoff while keeping cost reasonable.}'' implies that high competitiveness counteracts risk aversion and leads to participation with a careful but proactive investment. 
    This also illustrates a trade-off between risk exposure and potential returns.
\end{enumerate}

\subsection{LLM-MAS as a Design Explorer}  

In the game-theoretic optimal design \eqref{eq:game-theo-optimal}, the parameters are coupled through the liability $K$: once the entry fee is fixed, the reserve and shared prize are consequently determined. 
Using LLM-MAS, we can relax this coupling and optimize $\theta = (E, \hat{e}, S)$ as independent parameters over the feasible set $\Theta = [0,300] \times [0, 1000] \times [0, 300]$. 
Consistent with the lab experiments, we set the upper bound of the entry fee to $K=300$.

The random quantity $\xi$ in the objective function corresponds to the vector of contestants' efforts, over which the expected total effort is computed. 
The contest is single-round, so $\xi$ does not evolve over time.
We can simulate this quantity in one round rather than running a long Markov chain to approximate the steady state, as in the supply chain example.
We therefore apply the MTL algorithm to optimize $\theta$. Here, the objective does not depend on $\theta$ explicitly, but only through the distribution of $\xi$. 
The GP method is thus not applicable, and we instead use the RF method for variance reduction. See Figure~\ref{fig:contest-opt-sol} for the iterates of the MTL-RF algorithm.

\begin{figure}[ht]
    \FIGURE{
    $
    \begin{array}{c}
    \includegraphics[width=\textwidth]{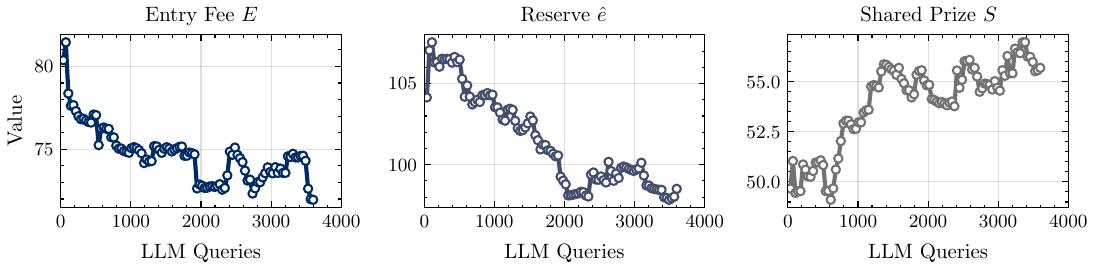} \label{fig:contest-opt-sol}
    \end{array}
    $
    }
    {Optimizing Contest Design Parameters
    }
    {}
\end{figure}

The resulting parameters are $E_{\text{OPT}} = 71.95$, $\hat{e}_{\text{OPT}} = 98.52$, and $S_{\text{OPT}} = 55.68$. 
The main difference from the game-theoretic design lies in the substantially lower entry fee.
According to \eqref{eq:game-theo-optimal}, the designer should set the entry fee as high as possible to maximize incentives by topping up the prize.
With $K=300$, this corresponds to an entry fee of $300$ and an exceptionally high reserve of $1163.60$. The expected total effort is then $232.94$, as given by \eqref{eq:game-theo-total-effort}. 

Under this high-fee, high-reserve design, the analytical model predicts that most contestants would exert zero effort, while 
only a few with abilities close to $b=2$ (because the cutoff $t^*$ is nearly $2$) exert substantial effort above the reserve to have a chance of winning.
However, LLM-MAS shows that with such high fees and reserves, even high-ability contestants opt out. They exert zero effort, like the others, to secure a high chance of receiving the shared prize, which is safer and easier than expending extremely high effort to compete for the top prize. 
This, in turn, drives total effort near zero, which is exactly the opposite of what the designer intended. Remarkably, setting only moderately high entry fees such as $E=90$ or $100$ is sufficient to reproduce this phenomenon via LLM-MAS. 

In contrast to the near-zero effort observed under high-entry barriers, LLM-MAS estimates that the design $(E_{\text{OPT}},\hat{e}_{\text{OPT}},S_{\text{OPT}})$ identified by our algorithm yields a total effort of 315.83, which even exceeds the analytical model's maximum prediction under unlimited liability (Figure~\ref{fig:contest-theo}).
See Figures~\ref{fig:contest-opt-sys} and \ref{fig:contest-opt-ind} for comparisons across designs in total effort and contestant behavior, respectively.

\begin{figure}[ht]
    \FIGURE{
    $
    \begin{array}{c}
    \includegraphics[width=0.75\textwidth]{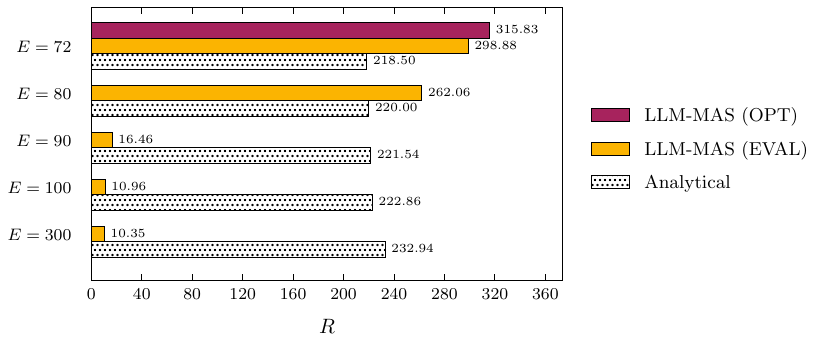} \label{fig:contest-opt-sys}
    \end{array}
    $
    }
    {Total Effort Under Contest Design Optimized via LLM-MAS     }
    {\begin{enumerate*}[label=(\roman*)]        
        \item LLM-MAS (OPT): total effort from LLM-MAS under the design identified by our algorithm.
        \item LLM-MAS (EVAL): total effort from LLM-MAS under the game-theoretic design \eqref{eq:game-theo-optimal}. 
        \item Analytical: total effort from the analytical model \eqref{eq:game-theo-total-effort} under the game-theoretic design \eqref{eq:game-theo-optimal}. 
    \end{enumerate*}}
\end{figure}

Furthermore, with the same entry fee of $E_{\text{OPT}} = 71.95$, our optimal design features a considerably lower reserve and shared prize ($\hat{e}_{\text{OPT}} = 98.52$, $S_{\text{OPT}} = 55.68$) than the game-theoretic design ($\hat{e}_{\text{ANLY}} = 258.06$, $S_{\text{ANLY}} = 111.64$).
As shown in Figure~\ref{fig:contest-opt-ind}, this configuration induces a much higher entry rate across ability levels, because it reduces downside risk and leads entry perceived as safer to risk-averse contestants.
Although it does not change the per-entrant effort, the larger pool of entrants yields higher total effort ($315.83$ versus $298.88$ in Figure~\ref{fig:contest-opt-sys}).

\begin{figure}[ht]
    \FIGURE{
    $
    \begin{array}{c}
    \includegraphics[width=0.9\textwidth]{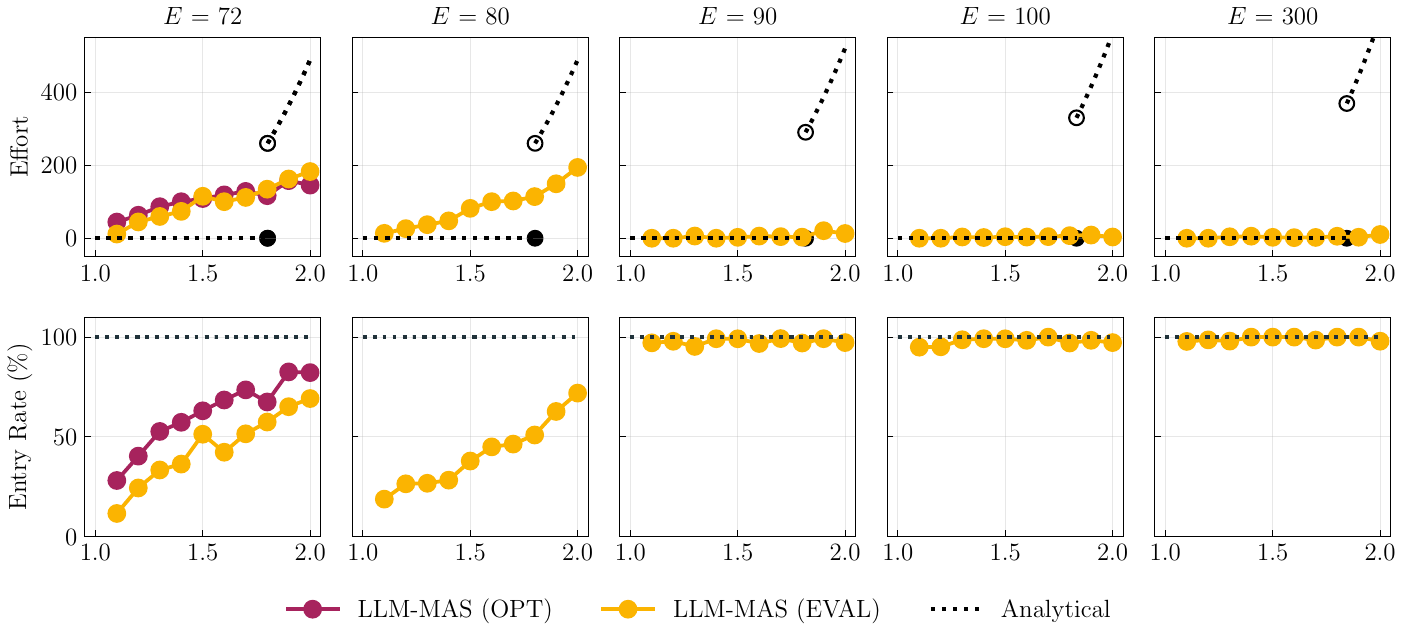} \label{fig:contest-opt-ind}
    \end{array}
    $
    }
    {Contestant Behavior Under Contest Design Optimized via LLM-MAS 
    }
    {\begin{enumerate*}[label=(\roman*)]        
        \item Ability cutoff $t^*$ (left to right): $1.802$, $1.816$, $1.832$, $1.845$ and $1.939$.
        \item LLM-MAS (OPT): behavior from LLM-MAS under the design identified by our algorithm.
        \item LLM-MAS (EVAL): behavior from LLM-MAS under the game-theoretic design \eqref{eq:game-theo-optimal}. 
        \item Analytical: behavior predicted by theory under the game-theoretic design \eqref{eq:game-theo-optimal}. 
    \end{enumerate*}
    } 
\end{figure}

By simulating how people actually behave, this case study reveals the failure of charging entry fees up to the liability limit and shows that lower reserves and smaller shared prizes can boost participation without increasing individual effort, which lifts total effort for the designer. 
This highlights LLM-MAS as a design explorer: it allows researchers and practitioners to discover and test alternative designs that standard analytical models alone would miss.

\section{Concluding Remarks} \label{sec:concl}
In this paper, we propose a principled framework for applying LLM-MAS to optimize service operations.
Compared with traditional numerical simulation, LLM‑MAS can generate and process rich information, including numerical, textual, and other unstructured data. 
This enables faithful representations of interactions, captures agent heterogeneity, and simulates boundedly rational behavior in response to both system designs and other agents' actions. 
We model the uncertainty within LLM‑MAS as a controlled Markov chain,
providing a theoretical basis for our OTL algorithm.
This algorithm simultaneously updates zeroth-order gradient estimates and optimization iterates on a single simulation trajectory, achieving significant savings in LLM queries. 
We further adapt two variance reduction techniques to handle the high variability from long horizon and multi-agent interactions. 

We demonstrate the practical value of our LLM‑MAS framework and algorithm through two applications.
In a supply-chain application, OTL and its variant solve the optimization problem effectively, while blackbox method and LLMs used as numerical solvers or as role‑playing system designers fail to produce meaningful results.
This also suggests that LLMs are a complement, not a substitute, for OR/OM methods. 
A contest-design case study with real behavioral data shows that LLM-MAS can serve two roles: a cost‑effective evaluator for testing known designs and an exploratory tool to uncover better designs that traditional models may overlook. It thus complements both analytical models and empirical studies via lab or field experiments.

Several directions remain to enhance the practical value of LLM-MAS. 
For example, scaling LLM‑MAS to large systems with many agents will require tighter control of query costs without losing significant fidelity. 
Promising approaches include surrogate modeling at both the agent and system levels, streamlined multi‑agent communication, and stronger variance-reduction methods for long horizons and dense interactions. 
In addition, new algorithms are needed for high‑dimensional design spaces, since zeroth‑order methods degrade as dimension grows. 
Another direction is to extend LLM‑MAS with a broader ecosystem of tools and multimodal capabilities, so it can simulate more complex human activities and handle images, audio, video, and graphs beyond text. This would enable richer simulation environments and support studies such as simulating AI-assisted sales teams to develop better incentive mechanisms, and simulating consumer reactions to different ad creatives to improve platform operations.

\begin{APPENDIX}{Assumptions for Asymptotic Convergence} \label{app:assmptionption}

\begin{assumption}[Feasible Set] \label{ass:feasible-set} $\Theta$ is a compact and convex subset of $\RR^d$.     
\end{assumption}

\begin{assumption}[Objective Function] \label{ass:obj-f}
$f(\theta)$ is continuously differentiable, and there exists a constant $L_f>0$ such that $\|\nabla f(\theta) - \nabla f(\theta')\| \le L_f \|\theta-\theta'\|$ for all $\theta,\theta' \in \Theta$.
\end{assumption}

\begin{assumption}[Random Function] \label{ass:rand-F}
    There exist constants $L_F > 0$ and $C_F > 0$ such that $|F(\theta;\xi) - F(\theta';\xi)| \le L_F \|\theta-\theta'\|$ for all $\theta,\theta' \in \Theta$ and all $\xi \in \Xi$, and 
    $|F(\theta;\xi)| \le C_F$ for all $\theta \in \Theta$ and all $\xi \in \Xi$.
\end{assumption}

\begin{assumption}[Transition Kernel] \label{ass:trans-kern}
For each $\theta\in\Theta$,
let $\msfP_\theta$ denote the Markov chain's transition kernel. 
    There exists a constant $L_{\msfP}>0$ such that
    $\scrW_1(\msfP_\theta(\xi, \cdot), \msfP_{\theta'}(\xi, \cdot)) \leq L_{\msfP} \| \theta-\theta' \|$ for all $\theta, \theta' \in \Theta$ and all $\xi \in \Xi$, where $\scrW_1(\nu,\nu') = \inf_{\pi \sim \mathsf{\Gamma}(\nu,\nu')}\E_{\zeta,\zeta' \sim \pi}[\|\zeta-\zeta'\|_1]$ is 1-Wasserstein distance with $\mathsf{\Gamma}(\nu,\nu')$ being the set of all couplings of probability measures $\nu$ and $\nu'$. 
\end{assumption}

\begin{assumption}[Geometric Mixing] \label{ass:uni-erg} 
    For each $\theta \in \Theta$,
    $\msfP_\theta$ admits a unique stationary distribution $\mu_\theta$.
    Moreover, there exist constants $C_M > 0$ and $\gamma \in (0,1)$ such that $ \|\msfP_\theta^{(k)}(\xi,\cdot) - \mu_\theta(\cdot)\|_{\TV} \le C_M \gamma^k$ for all $ k \ge 0$ and all $\xi \in \Xi$, where $\msfP_\theta^{(k)}(\xi, \cdot) = \pr(\xi^{(k)} \in \cdot \vert \xi^{(0)} = \xi; \theta)$ is the $k$-step transition distribution from the initial state $\xi$, and $\|\nu-\nu'\|_{\TV} = 1/2 \int |\nu(\zeta) - \nu'(\zeta)| \, \dd  \zeta$ is the total variation distance between probability measures $\nu$ and $\nu'$.
\end{assumption}

\begin{assumption}[Transient Gradient]\label{ass:abs-conti}
    For each $\theta\in\Theta$ and all $\xi\in\Xi$, 
    there exists a function $H(\theta;\xi)$ such that 
    $\nabla_\theta \int_{\Xi} F(\theta;\xi') \msfP_\theta(\xi, \dd \xi') = \int_{\Xi} H(\theta;\xi') \msfP_{\theta}(\xi ,\dd \xi')$.    
\end{assumption}

Under Assumption~\ref{ass:abs-conti}, we have $\nabla  f(\theta) = \E_{\xi\sim \mu_\theta}[H(\theta; \xi)]$ by the invariance of $\mu_\theta$.
For any given $\theta$, this yields the Poisson equation: \begin{align}
     H(\theta;\xi) - \nabla f(\theta) 
    = \nu_\theta(\xi) - \int_{\Xi} \nu_\theta(\xi') \msfP_\theta (\xi,\dd{\xi}').\label{eq:peq}
\end{align}
Assumption~\ref{ass:uni-erg} ensures the existence of a solution 
$\nu_\theta(\xi)$; see \citet[Chapter~17]{MeynTweedie12}.

\begin{assumption}[Poisson's Equation]  \label{ass:peq-sol}
There exist constants $L_{\nu} > 0$ and $C_H > 0$ such that 
\begin{enumerate}[label=(\roman*)]
    \item %
    $\sup_{\theta \in \Theta} \|\nu_\theta(\xi) - \nu_\theta(\xi')\| \le L_{\nu} \|\xi - \xi'\|$ for all $\xi,\xi' \in \Xi$; \label{ass:peq-sol-xi}
    \item %
    $\sup_{\theta \in \Theta} \|\nu_\theta(\xi)\| \le L_{\nu} (1 + \|\xi\|)$  for all  $\xi \in \Xi$;  \label{ass:peq-sol-lingrowth}
    \item %
    $\|\nu_\theta(\xi) - \nu_{\theta'}(\xi)\| \le L_{\nu} \|\theta - \theta'\|(1 + \|\xi\|)$  for all  $\theta,\theta' \in \Theta$ and all $\xi \in \Xi$; and  \label{ass:peq-sol-theta} 
    \item $\E_{\xi' \sim \msfP_{\theta}(\xi,\cdot)}[\|H(\theta;\xi')\|^2] \le C_H$ for all $\theta \in \Theta$ and all $\xi \in \Xi$.  \label{ass:mom-H}
\end{enumerate}
\end{assumption}

\begin{assumption}[Stepsizes] \label{ass:stepsize}
   $\eta_k \to 0$ and $\delta_k \to 0$ as $k\to\infty$; $\sum_{k=0}^\infty \eta_k = \infty$, $\sum_{k=0}^\infty \eta_k \delta_k < \infty$, and $\sum_{k=0}^\infty \eta_k^2/\delta_k^2 < \infty$.
\end{assumption}

\end{APPENDIX}

\bibliographystyle{informs2014} %
\bibliography{ref} %

\ECSwitch

\EquationsNumberedBySection

\ECHead{Supplemental Material}

\section{Details of the Supply Chain Example} \label{ec:ex}

\subsection{System State}

Similar to the retailer ($\msfR$) discussed in Section \ref{sec:simsys-ex}, the elements of system state $\xi^{(t)}$ relevant to the manufacturer ($\msfM$) and the consumer ($\msfC$) are detailed as follows.

\paragraph{Manufacturer.} 
The manufacturer's action $A_{\msfM}^{(t)}$ consists of a wholesale price $\WS^{(t)}$ and an investment level in low-carbon technology $\TECH^{(t)}$.
These are chosen based only on information from the previous round, $H_{\msfM}^{(t-1)}$. 
After acting, the manufacturer observes the carbon emissions $\EMS^{(t)}$ and communicates the wholesale price $\WS^{(t)}$ and the carbon footprint $\FP^{(t)}$ (the percentage reduction in emissions relative to a baseline) to the retailer.   
Later, after the other two agents have acted, the manufacturer receives the purchasing quantity $\QUT^{(t)}$, which is part of the consumer's action, via the retailer. 

The manufacturer has no pre-action observation or pre-action communications; i.e., $O_{\msfM}^{(t,\preA)} = M^{(t,\preA)} = \emptyset$.
Its post-action observation is 
$O_{\msfM}^{(t, \postA)} = \EMS^{(t)}$. 
Its post-action communications include the message sent to the retailer $M_{\msfM \to \msfR}^{(t,\postA)} = (\WS^{(t)}, \FP^{(t)})$ and the message later received from the retailer $M_{\msfM \gets \msfR}^{(t,\postA)} = \QUT^{(t)}$.
Thus, at the end of this round, the manufacturer's memory module stores  
\begin{align*}
    H_{\msfM}^{(t)} = (\overunderbraces{&&\br{2}{M^{(t, \postA)}_{\msfM \to \msfR}}}%
{&\TECH^{(t)}, & \WS^{(t)}, & \FP^{(t)}, & \EMS^{(t)}, & \QUT^{(t)}}%
{&\br{2}{A_{\msfM}^{(t)}} & & \br{1}{O_{\msfM}^{(t,\postA)}} & \br{1}{M^{(t,\postA)}_{\msfM \gets \msfR}}}).
\end{align*}

\paragraph{Consumer.}  The consumer's action $A_{\msfC}^{(t)}$ consists of a purchasing quantity $\QUT^{(t)}$ and a willingness to pay $\WTP^{(t)}$. 
These are chosen using the information from the previous round, $H_{\msfC}^{(t-1)}$ and the pre‑action message received from the retailer $M^{(t, \preA)}_{\msfC \gets \msfR}$. After acting, the consumer communicates the purchasing quantity $\QUT^{(t)}$ to the retailer.

The consumer has no pre-action observation; i.e., $O_{\msfC}^{(t,\preA)} = \emptyset$.
Its pre-action communication is the message from the retailer $M^{(t, \preA)}_{\msfC \gets \msfR} = (\RT^{(t)},\AD^{(t)})$. Its post-action communication is the message sent to the retailer $M^{(t, \postA)}_{\msfC \to \msfR} = \QUT^{(t)}$. There is no post-action observation. Thus, at the end of this round, the consumer's memory module stores 
\begin{align*}
    H_{\msfC}^{(t)} = (\overunderbraces{&&&&\br{1}{M^{(t, \postA)}_{\msfC \to \msfR}}}%
{&\AD^{(t)}, & \RT^{(t)}, &\WTP^{(t)}, & \QUT^{(t)}}%
{& \br{2}{M^{(t, \preA)}_{\msfC \gets \msfR}} &\br{2}{A_{\msfC}^{(t)}}}).
\end{align*}

\subsection{Objective Function}

Following the notations in Example~\ref{ex:scm},
the policymaker's objective function is given by
\[F(\theta; \xi) = -\big(\SCWF(\theta;\xi) - \FISC(\theta;\xi) - \ENV(\xi)\big),\]
which captures trade-offs among supply chain welfare ($\SCWF$), fiscal impact ($\FISC$), and environmental externalities ($\ENV$).

\begin{enumerate}[label=(\roman*)]
    \item The supply chain welfare is $\SCWF(\theta;\xi) = \MPF + \RPF + \texttt{CS}$.
    Specifically, the manufacturer's profit is 
    \[\MPF = (\WS - c^{\text{prod}}) \QUT - 0.5 c^{\text{tech}} \TECH^2 - \theta_1 \EMS \cdot \QUT,\] 
    where $c^\text{prod}$ and $c^{\text{tech}}$ are the per-unit cost of production and technology adoption, 
    respectively. 
    The retailer's profit is 
    \[\RPF = (\RT - \WS) \QUT - \MKT.\] The consumer surplus is \[\texttt{CS} = (\WTP - \RT + \theta_{2}) \QUT.\] 
    \item The fiscal impact is measured via an asymmetric function, given by 
    \[\texttt{FISC}(\theta;\xi) = ((\texttt{EXP}-c^{\text{tag}})^+)^{c_+} + ((c^{\text{tag}} - \texttt{EXP})^+)^{c_-}.\]
    The policy expenditure is \[\texttt{EXP} = - \theta_1 \EMS \cdot \QUT + \theta_{2} \QUT,\]
    where the first term is the total emission taxes $(\theta_1)$ charged on the manufacturer, and 
    the second term is the subsidies $(\theta_2)$ granted to the consumer. 
    The parameter $c^{\text{tag}}$ denotes the target spending, and $c_+$ and $c_-$ are the penalty intensities for overspending $(\texttt{EXP}-c^{\text{tag}})^+$ and underspending $(c^{\text{tag}} - \texttt{EXP})^+$, respectively; usually, $c_+ > c_-$ meaning overspending is penalized more heavily.
    \item The environmental externality is computed as
    \[
    \texttt{ENV}(\xi) = c^{\text{env}}_1 (\EMS \cdot \QUT)^{c^{\text{env}}_2},\]
    where the parameter $c^{\text{env}}_1$ represents the monetary cost of emissions, and $c^{\text{env}}_2$ controls the curvature.
    The carbon emissions evolve according to $\EMS^{(t+1)} = ( \EMS^{(t)}- \Delta \EMS^{(t)} )^+$, which remains nonnegative. In particular, 
    \[\Delta \EMS^{(t)} =  e^{\text{red}} (\EMS^{(t)} (1 + \zeta) - e^{\text{base}}) \log (1 + \TECH^{(t)}).\]
    Here, $t$ corresponds to the round index in LLM-MAS.
    The parameter $e^{\text{base}}$ represents the minimal achievable emission level and $e^{\text{red}}$ is the efficiency of emission reduction per unit of technological effort. 
    The random variable $\zeta \sim N(0,\sigma^2)$ captures exogenous uncertainty in the emission process.
     The reduced emissions $\Delta \EMS^{(t)}$ depend on the extent to which the realized emissions exceed $e^{\text{base}}$, which is scaled by a logarithmic term of technological effort $\TECH^{(t)}$. This captures diminishing returns to technology: initial investments yield significant reductions, whereas subsequent investments provide progressively smaller gains.
\end{enumerate}

\subsection{Agent Attributes}

The attributes $\chi_i$ in the agent's profile module are configured to capture key behavioral characteristics. For the retailer, we include ``willingness to collaborate'', which is equally likely to be high, moderate, or low. This reflects the retailer's tendency to work with the manufacturer on promoting technology adoption through marketing efforts. For the consumer, we include ``sustainability awareness'', which is likewise equally likely to be eco-aware, eco-neutral, or eco-skeptical. This captures the degree to which the consumer is inclined to purchase sustainable products.

\subsection{Experimental Configurations} 

The parameters in $F(\theta;\xi)$ are specified as follows. 
The parameters related to $\SCWF(\theta;\xi)$ are generated as $c^{\text{prod}} = 1 + \Unif[0,1]$ and $c^{\text{tech}} \sim \Unif[0.5,1]$;
both are sampled once at initialization and then kept fixed throughout the experiment.
The parameters related to $\FISC(\theta;\xi)$ are set to $c_+ = 1.2, c_-=0.8$, and $c^{\text{tag}} = 0$.
The emission-related parameters are given by $\EMS^{(0)} = e^{\text{init}} = 8$, $e^{\text{red}} = 0.05$, $e^{\text{base}} = 3$, and $\sigma = 0.5$. Finally, the parameters related to $\ENV(\xi)$ are set to $c^{\text{env}}_1 = 0.05$ and $c^{\text{env}}_2 = 1.2$. 

The agents' action spaces are defined as follows. For the manufacturer, $\TECH \in [2,5]$ and $\WS \in [6,8]$; for the retailer, $\MKT \in [20,30]$ and $\RT \in [12,15]$; and for the consumer, $\WTP \in [15,18]$ and $\QUT \in [5,15]$. Advertising expenses are categorized into three levels, taking the values of 15, 20, and 25 for low-, medium-, and high-quality advertisement, respectively. 

Hyperparameters of the OTL algorithm are set to $\alpha = 0.75$, $\beta = 1$, 
and $\rho = 0.9$. 
BO is implemented using a Gaussian process with a Mat\'ern kernel (smoothness $2.5$, lengthscale $1$) as the surrogate model and expected improvement as the acquisition function. 
Hyperparameters of LLaMA3.1-8B are configured as follows: ``temperature'' is set to 1, ``top\_p'' is set to 0.9, and ``max\_new\_tokens'' is capped  at 1000.

\section{Proof of Asymptotic Convergence} \label{ec:proof}

\subsection{Technical Lemmas} \label{ec:tech-lem}

\begin{lemma} \label{lem:gp-bias}
    Suppose the positive definite matrix $\Sigma(\xi)$ is  given. Under Assumptions~\ref{ass:feasible-set}--\ref{ass:obj-f}, for all $\theta \in \Theta$, 
    \begin{align*}
        \E\left[ G^{\GP}(\theta; \xi, \xi^{\pm}, u, \delta) \right] 
        = \Sigma(\xi) \nabla f_{\delta, \msfLambda}(\theta), \quad u \sim \msfLambda = N(0,\Sigma(\xi)).
    \end{align*}
\end{lemma}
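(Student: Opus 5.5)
The plan is to reduce the claim to a Gaussian integration-by-parts (Stein) identity under the non-isotropic law $\msfLambda = N(0,\Sigma)$, with $\Sigma=\Sigma(\xi)$ held fixed as in the statement. Throughout, $\xi^{\pm}$ are read, as in the derivation of \eqref{eq:gradest-two}, as draws from $\mu_{\theta\pm\delta u}$ conditionally on $u$. We also take $f_{\delta,\msfLambda}(\theta) \coloneqq \E_{u\sim\msfLambda}[f(\theta+\delta u)]$, the smoothed objective \eqref{eq:smoothed-f} with $\msfLambda$ replaced by $N(0,\Sigma)$.

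\emph{Step 1 (conditional expectation given $u$).} Condition on $u$ and apply the tower property. Since $\E[F(\theta\pm\delta u;\xi^{\pm})\mid u] = f(\theta\pm\delta u)$, we get
\[
\E[G^{\GP}] = \frac{1}{2\delta}\E_{u}\big[(f(\theta+\delta u)-f(\theta-\delta u))u\big].
\]
Because $u$ and $-u$ have the same law, this equals $\delta^{-1}\E_u[f(\theta+\delta u)u]$. Integrability is not an issue: $\Theta$ is compact, and by Assumption~\ref{ass:rand-F} $F$ is bounded (Assumption~\ref{ass:obj-f} also gives a linearly growing gradient), so the Gaussian moments are finite. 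The evaluation at $\theta\pm\delta u$ outside $\Theta$ is handled implicitly, by assuming $f$ is extended with the same regularity, as the paper does for \eqref{eq:stein}.

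\emph{Step 2 (Gaussian Stein identity with covariance $\Sigma$).} The density $p(u)\propto \exp(-u^\intercal\Sigma^{-1}u/2)$ satisfies $\nabla p(u) = -\Sigma^{-1}u\,p(u)$. Integrating by parts, with boundary terms vanishing because $p$ decays faster than the linear growth of $f$, gives
\[
\E_u[\nabla f(\theta+\delta u)] = \frac{1}{\delta}\E_u\big[f(\theta+\delta u)\Sigma^{-1}u\big].
\]
We then differentiate $f_{\delta,\msfLambda}$ under the integral sign, justified by Assumption~\ref{ass:obj-f} and dominated convergence. This yields $\nabla f_{\delta,\msfLambda}(\theta) = \delta^{-1}\Sigma^{-1}\E_u[f(\theta+\delta u)u]$.

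\emph{Step 3 (conclusion).} Multiplying the Step~2 identity by $\Sigma$ and comparing with Step~1 gives $\E[G^{\GP}] = \Sigma\nabla f_{\delta,\msfLambda}(\theta)$. As a sanity check, when $\Sigma = d^{-1}I_d$ this recovers \eqref{eq:stein}, which is consistent with the factor $d$ present in \eqref{eq:gradest-two} and absent in \eqref{eq:gradest-gp}.

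The only delicate point is the justification in Step~2: the interchange of gradient and expectation, and the vanishing of the boundary terms. Both follow from Gaussian tails combined with the at-most-linear growth of $\nabla f$ implied by Lipschitz continuity. The rest is direct computation.
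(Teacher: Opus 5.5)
Your proposal is correct and follows essentially the same route as the paper. You condition on $u$ and use the symmetry of $N(0,\Sigma)$ to reduce $\E[G^{\GP}]$ to $\delta^{-1}\E_u[f(\theta+\delta u)u]$, then apply Gaussian integration by parts with $\nabla p(u)=-\Sigma^{-1}u\,p(u)$ to identify this with $\Sigma\nabla f_{\delta,\msfLambda}(\theta)$; the paper runs the same chain in reverse, starting from $\nabla f_{\delta,\msfLambda}(\theta)=\E_u[\nabla f(\theta+\delta u)]$ and ending at $\Sigma^{-1}\E[G^{\GP}]$. One small caveat: your appeal to Assumption~\ref{ass:rand-F} for integrability goes beyond the lemma's stated hypotheses, but this is harmless, and the paper's tower-property step tacitly needs the same integrability.
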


\proof{Proof of Lemma~\ref{lem:gp-bias}.}
By definition, the smoothed gradient can be expanded as follows:
\begin{align*}
    &\quad \nabla f_{\delta,\msfLambda}(\theta) 
    = \frac{1}{\sqrt{(2\pi)^d|\Sigma(\xi)|}} \int \exp\left(-\frac{1}{2} u^\intercal \Sigma(\xi)^{-1} u\right) \nabla_\theta f(\theta+ \delta u) \, \dd u = \E_{u \sim \msfLambda} [\nabla f(\theta+\delta u)] \\
    &= \frac{\delta^{-d}}{\sqrt{(2\pi)^d|\Sigma(\xi)|}} \int \exp\left(-\frac{1}{2\delta^2} (\vartheta-\theta)^\intercal \Sigma(\xi)^{-1} (\vartheta-\theta)\right) \nabla_\vartheta f(\vartheta) \, \dd \vartheta 
    \\
    &= \frac{\delta^{-d}}{\sqrt{(2\pi)^d|\Sigma(\xi)|}} \left. \exp \left(-\frac{1}{2\delta^2} (\vartheta-\theta)^\intercal \Sigma(\xi)^{-1} (\vartheta-\theta)\right) f(\vartheta) \right|_{-\infty}^{+\infty} \\
    &\quad - \frac{\delta^{-d}}{\sqrt{(2\pi)^d|\Sigma(\xi)|}} \int f(\vartheta) \nabla_\vartheta \exp \left(-\frac{1}{2\delta^2} (\vartheta-\theta)^\intercal \Sigma(\xi)^{-1} (\vartheta-\theta)\right) \, \dd \vartheta \\
    &= \frac{\delta^{-(d+2)}}{\sqrt{(2\pi)^d|\Sigma(\xi)|}} \int f(\vartheta) \exp \left( -\frac{1}{2\delta^2} (\vartheta-\theta)^\intercal \Sigma(\xi)^{-1} (\vartheta-\theta) \right) \Sigma(\xi)^{-1} (\vartheta-\theta) \, \dd \vartheta \\
    &= \frac{\delta^{-2}}{\sqrt{(2\pi)^d|\Sigma(\xi)|}} \int f(\theta+\delta u) \exp\left(-\frac{1}{2} u^\intercal \Sigma(\xi)^{-1} u\right) \Sigma(\xi)^{-1} (\delta u) \, \dd u \\
    &= \Sigma(\xi)^{-1} \int \frac{1}{\sqrt{(2\pi)^d|\Sigma(\xi)|}} \exp\left(-\frac{1}{2} u^\intercal \Sigma(\xi)^{-1} u\right) \frac{f(\theta+\delta u)}{\delta} u \, \dd u \\
    &= \Sigma(\xi)^{-1} \E_{u \sim \msfLambda}\left[\frac{f(\theta+\delta u)}{\delta}u\right] \\
    &= \Sigma(\xi)^{-1} \E_{u \sim \msfLambda}\left[\left. \E_{\xi^+ \sim \mu_{\theta+\delta u}}\left[\frac{F(\theta+\delta u;\xi^+)}{\delta}u \right\vert u \right] \right] \\
    &= \Sigma(\xi)^{-1} \E_{u \sim \msfLambda}\left[\left. \E_{\xi^\pm \sim \mu_{\theta\pm\delta u}}\left[\frac{F(\theta+\delta u; \xi^+) - F(\theta-\delta u; \xi^-)}{2\delta} u \right\vert u \right] \right] \\
    &=  \Sigma(\xi)^{-1} \E[G^{\GP}(\theta; \xi, \xi^{\pm}, u, \delta)],
\end{align*}
where the interchange of integration and differentiation  
follows from the $L$-smoothness of $f$ and the compactness of $\Theta$. Here, $\left. \exp\left(-\frac{1}{2\delta^2} (\vartheta-\theta)^\intercal \Sigma(\xi)^{-1} (\vartheta-\theta)\right) f(\vartheta) \right|_{- \infty}^{+\infty} = 0$ holds because 
the Gaussian tail dominates. The last equality is due to the symmetry of Gaussian.
The bias of the GP gradient estimator is thus given by $\left\|\E\left[ G^{\GP}(\theta; \xi, \xi^{\pm}, u, \delta) \right] - \nabla f_{\delta, \msfLambda}(\theta) \right\| = \|(\Sigma(\xi) - I_d) \nabla f_{\delta, \msfLambda}(\theta)\|$.
\Halmos \endproof

\begin{lemma} \label{lem:trans-peq-sol}
    Under Assumptions~\ref{ass:trans-kern}--\ref{ass:peq-sol}, there exists a constant $L_{\msfP\nu} > 0$ such that
\begin{enumerate}[label=(\roman*)]
    \item $\sup_{\theta \in \Theta} \|\msfP_\theta \nu_\theta(\xi)\| \le L_{\msfP\nu}(1 + \|\xi\|)$ for all $\xi \in \Xi$; and  \label{lem:trans-peq-sol-lingrowth}
    \item $\|\msfP_\theta \nu_\theta(\xi) - \msfP_{\theta'} \nu_{\theta'}(\xi)\| \le L_{\msfP\nu}\|\theta - \theta'\|(1 + \|\xi\|)$ for all $\theta,\theta' \in \Theta$ and all $\xi \in \Xi$.\label{lem:trans-peq-sol-theta}
\end{enumerate}
Here, we write $\msfP_\theta f (\xi) = \int \msfP_\theta(\xi, \dd \xi') f(\xi')$ for any measurable function $f$.
\end{lemma}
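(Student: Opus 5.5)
Both parts follow from Assumption~\ref{ass:peq-sol}, the Wasserstein Lipschitz property of the kernel in Assumption~\ref{ass:trans-kern}, and a uniform first-moment bound for the one-step transition. We write $\msfP_\theta\nu_\theta(\xi)=\E_{\xi'\sim\msfP_\theta(\xi,\cdot)}[\nu_\theta(\xi')]$ and first derive the moment bound.

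\emph{Moment bound.} We need a bound of the form $\E_{\xi'\sim\msfP_\theta(\xi,\cdot)}\|\xi'\|\le C(1+\|\xi\|)$, uniformly in $\theta$. Fix a reference $\theta_0\in\Theta$. By Assumption~\ref{ass:trans-kern} and compactness of $\Theta$ (Assumption~\ref{ass:feasible-set}),
\[
\E_{\msfP_\theta(\xi,\cdot)}\|\xi'\|\le \E_{\msfP_{\theta_0}(\xi,\cdot)}\|\xi'\|+L_{\msfP}\,\mathrm{diam}(\Theta).
\]
This holds with $\|\cdot\|_1$, and norms on the finite-dimensional state space are equivalent. It therefore suffices to control the reference kernel at $\theta_0$, for which there are two options. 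If $\Xi$ is bounded, which is natural here because the actions are extracted from bounded ranges (see the e-companion configuration), the bound is immediate. Otherwise we would add it as a mild standing condition. This is the one point where the stated assumptions do not obviously suffice, and I expect it to be the main obstacle. I would state explicitly that the state space is bounded, or that the kernel has linear moment growth.

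\emph{Part (i).} Using Assumption~\ref{ass:peq-sol}\ref{ass:peq-sol-lingrowth},
\[
\|\msfP_\theta\nu_\theta(\xi)\|\le \E\big[L_\nu(1+\|\xi'\|)\big]\le L_\nu(1+C(1+\|\xi\|)).
\]
The right-hand side is at most a constant times $(1+\|\xi\|)$.

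\emph{Part (ii).} Split the difference as
\[
\msfP_\theta\nu_\theta-\msfP_{\theta'}\nu_{\theta'}=\msfP_\theta(\nu_\theta-\nu_{\theta'})+(\msfP_\theta-\msfP_{\theta'})\nu_{\theta'}.
\]
For the first term, Assumption~\ref{ass:peq-sol}\ref{ass:peq-sol-theta} together with the moment bound gives $\le L_\nu\|\theta-\theta'\|\,\E(1+\|\xi'\|)\le L_\nu(1+C)\|\theta-\theta'\|(1+\|\xi\|)$. For the second term, take an optimal coupling $(\zeta,\zeta')$ of $\msfP_\theta(\xi,\cdot)$ and $\msfP_{\theta'}(\xi,\cdot)$. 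Since $\nu_{\theta'}$ is $L_\nu$-Lipschitz uniformly in $\theta'$ by Assumption~\ref{ass:peq-sol}\ref{ass:peq-sol-xi},
\[
\|\E[\nu_{\theta'}(\zeta)-\nu_{\theta'}(\zeta')]\|\le L_\nu\E\|\zeta-\zeta'\|\le L_\nu c_{\mathrm{norm}}\scrW_1\le L_\nu c_{\mathrm{norm}}L_{\msfP}\|\theta-\theta'\|,
\]
which is bounded by the same expression times $(1+\|\xi\|)$. Here $c_{\mathrm{norm}}$ is the norm-equivalence constant needed to compare $\|\cdot\|$ with the $\|\cdot\|_1$ used in $\scrW_1$. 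For vector-valued $\nu$, apply the bound componentwise or use Lipschitzness of $\langle v,\nu\rangle$ for unit $v$.

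Finally, take $L_{\msfP\nu}$ to be the maximum of the resulting constants.
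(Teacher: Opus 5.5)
Your argument is the paper's argument. Part (i) comes from the linear growth in Assumption~\ref{ass:peq-sol}-\ref{ass:peq-sol-lingrowth} plus a one-step moment bound. Part (ii) uses the same two-term split (the paper puts $\theta$ and $\theta'$ in the opposite slots): a coupling/Kantorovich--Rubinstein bound with Assumption~\ref{ass:trans-kern} for the kernel difference, and Assumption~\ref{ass:peq-sol}-\ref{ass:peq-sol-theta} plus the moment bound for the other piece. You handle the $\|\cdot\|_1$ versus $\|\cdot\|$ mismatch and the vector-valued $\nu_\theta$ more carefully than the paper does. The constant in (ii) is the \emph{sum} of the two pieces; the paper's closing choice $L_{\msfP\nu}=L_\nu L_{\msfP}$ drops one of them.

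The step you flag is the real gap, and it is a gap in the paper's proof as well. The paper gets $\msfP_\theta(1+\|\cdot\|)(\xi)\le 1+C_V\|\xi\|+C_V'$ by asserting that Assumption~\ref{ass:uni-erg} yields a drift inequality $\msfP_\theta V\le C_V V+C_V'$. It then applies that inequality as if $V$ were $\|\xi\|$. Uniform geometric ergodicity in total variation only yields such a drift with a \emph{bounded} $V$ (Meyn and Tweedie, Ch.~16), which gives no control of $\E\|\xi'\|$ on an unbounded $\Xi$. So the paper tacitly uses the linear-moment-growth hypothesis you state explicitly. Your version, imposed at a single $\theta_0$ and transported to all $\theta$ by Assumption~\ref{ass:trans-kern} and compactness, is slightly weaker.

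Some such hypothesis is genuinely needed if $\nu_\theta$ may be any solution of \eqref{eq:peq}. Take $\Xi=[0,\infty)\times\{0,1\}$ and a $\theta$-free kernel. From $(y,0)$ it jumps to $(0,0)$ with probability $\epsilon$ and to $(y/(1-\epsilon),0)$ otherwise. From $(x,1)$ it jumps to $(0,0)$ with probability $\epsilon$ and to $(x^2,0)$ otherwise. Here $(0,0)$ is absorbing and is hit with probability at least $\epsilon$ at each step, so Assumption~\ref{ass:uni-erg} holds with $\gamma=1-\epsilon$. Set $F(\theta;\xi)=g(\theta)$, $\nu(x,i)=x$, and $H:=g'+\nu-\msfP\nu$. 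This $H$ equals $g'$ on $\{i=0\}$, the only states the chain can jump to. Then Assumptions~\ref{ass:feasible-set}--\ref{ass:peq-sol} all hold, yet $\msfP\nu(x,1)=(1-\epsilon)x^2$, so (i) fails.

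If instead $\nu_\theta$ is the series solution $\sum_{k\ge0}\msfP_\theta^{(k)}(H(\theta;\cdot)-\nabla f(\theta))$, part (i) needs no moment condition at all. Assumption~\ref{ass:peq-sol}-\ref{ass:mom-H} gives $\|\msfP_\theta H(\theta;\cdot)\|\le\sqrt{C_H}$ everywhere. Combined with Assumption~\ref{ass:uni-erg}, this yields $\|\msfP_\theta\nu_\theta\|\le 2C_M\sqrt{C_H}/(1-\gamma)$. The term $\msfP_\theta(\nu_\theta-\nu_{\theta'})$ in (ii), however, still seems to require an assumption like yours.
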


\proof{Proof of Lemma~\ref{lem:trans-peq-sol}.}

Assumption~\ref{ass:uni-erg} implies that the following drift condition holds: there exists a Lyapunov function $V: \Xi \mapsto [1, +\infty)$, independent of $\theta$, such that $\msfP_\theta V(\xi) \leq C_{V} V(\xi)+C_{V}'$ for some positive constants $C_{V} < 1$ and $C_{V}'  < \infty$.  
See \citet[Chapter~16]{MeynTweedie12}.

For (i), $\|\msfP_\theta \nu_\theta(\xi)\| \le L_{\nu} \msfP_\theta (1+\|\xi\|) \le L_{\nu} (1 + C_V \|\xi\| + C_V') \le L_{\msfP\nu}(1 + \|\xi\|)$,
 with $L_{\msfP\nu} \coloneqq L_{\nu} \max\{1+C_V, C_V'\}$. The first inequality is 
due to 
Assumption~\ref{ass:peq-sol}-\ref{ass:peq-sol-lingrowth}, and the second inequality is by the drift condition.

For (ii), by triangle inequality, $\|\msfP_\theta \nu_\theta(\xi) - \msfP_{\theta'} \nu_{\theta'}(\xi)\| \le \|\msfP_\theta \nu_\theta(\xi) - \msfP_{\theta'} \nu_\theta(\xi) \| + \|\msfP_{\theta'} \nu_\theta(\xi) - \msfP_{\theta'} \nu_{\theta'}(\xi) \|$. For the first part, we have 
\begin{align}
    &\quad \|\msfP_\theta \nu_\theta(\xi) - \msfP_{\theta'} \nu_\theta(\xi) \| = \left\| \int \nu_\theta(\xi') \left( \msfP_\theta \nu_\theta - \msfP_{\theta'} \nu_\theta \right) (\xi, \dd \xi') \right\| \nonumber \\
    &\le \left( \sup_{\xi\neq\xi'} \frac{\|\nu_\theta(\xi)-\nu_\theta(\xi')\|}{\|\xi-\xi'\|} \right) \cdot \scrW_1(\msfP_\theta(\xi,\cdot), \msfP_{\theta'}(\xi,\cdot)) = \Lip(\nu_{\theta}) \cdot \scrW_1(\msfP_\theta(\xi,\cdot), \msfP_{\theta'}(\xi,\cdot)) \nonumber \\
    &\le L_{\nu} L_{\msfP}  \|\theta-\theta'\| \label{eq:lem-ec2},
\end{align}
where the first inequality applies the Kantorovich-Rubinstein duality and the second inequality is by Assumption~\ref{ass:trans-kern} and Assumption~\ref{ass:peq-sol}-\ref{ass:peq-sol-theta}. For the second part, 
\begin{align*}
    &\quad \|\msfP_{\theta'} \nu_\theta(\xi) - \msfP_{\theta'} \nu_{\theta'}(\xi) \| = \left\| \int \left( \nu_\theta(\xi') - \nu_{\theta'}(\xi') \right) \msfP_{\theta'} (\xi, \dd \xi') \right\| \\
    &\le \msfP_{\theta'} \left( L_{\nu} \|\theta-\theta'\|(1+\|\xi\|) \right) \le L_{\nu} \|\theta-\theta'\|(1+C_V \|\xi\| + C_V') \le L_{\msfP\nu} \|\theta-\theta'\| (1 + \|\xi\|),
\end{align*}
where the first inequality is by Assumption~\ref{ass:peq-sol}-\ref{ass:peq-sol-theta} and the second inequality follows from (i).
Combining the two parts 
completes the proof, where $L_{\msfP\nu} \coloneqq L_{\nu} L_{\msfP}$.
\Halmos \endproof

For the subsequent analysis,  we define $\hat{f}(\theta;\xi) = \int_{\Xi} F(\theta;\xi') \msfP_\theta(\xi, \dd \xi')$ and $\hat{f}_{\delta,\msfLambda}(\theta;\xi) = \int_{\RR^d} \hat{f}(\theta+\delta u;\xi) \, \msfLambda(\dd u)$.
Here, $\hat{f}(\theta;\xi)$ is the expected value of $F(\theta;\xi') $ with $\xi'$ drawn from the transition distribution $\msfP_\theta(\xi, \cdot)$ given the current state $\xi$, and $\hat{f}_{\delta,\msfLambda}(\theta;\xi)$ denotes its smoothed counterpart with $\msfLambda = N(0,I_d)$. This construction parallels $f(\theta)$ and $f_{\delta}(\theta)$ in \eqref{eq:so} and \eqref{eq:smoothed-f}, except that the expectation is now taken over $\msfP_\theta$ in lieu of $\mu_\theta$.

\begin{lemma} \label{lem:approx-bd}
    Suppose Assumptions~\ref{ass:rand-F}--\ref{ass:peq-sol} hold, and let $\msfLambda = N(0,I_d)$. Then for any $\xi \in \Xi$, 
    \begin{align*}
        \left\|\E_{u \sim \msfLambda, \xi^\pm \sim \msfP_{\theta \pm \delta u(\xi,\cdot)}} [G(\theta;\xi^\pm, u,\delta)] - \E_{\xi' \sim \msfP_\theta(\xi ,\cdot)}[H(\theta;\xi')] \right\| \le \delta L_{\hat{f}} (d+3)^{3/2}/2,
    \end{align*}
    where $L_{\hat{f}} > 0$ is some constant.
\end{lemma}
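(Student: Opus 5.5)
The plan is to recognize both expectations in the statement as gradients of $\hat f$, and then invoke the standard Gaussian-smoothing bias bound of \cite{NesterovSpokoiny17}.

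\emph{Step 1: identify the first expectation.} Fix $\xi$ and write $g(\theta) \coloneqq \hat f(\theta;\xi)$. Conditioning on $u$, the samples $\xi^\pm$ are drawn from $\msfP_{\theta\pm\delta u}(\xi,\cdot)$. Hence
\[
\E[F(\theta\pm\delta u;\xi^\pm)\mid u] = g(\theta\pm\delta u).
\]
The first expectation is therefore $\E_{u}\bigl[(g(\theta+\delta u)-g(\theta-\delta u))u\bigr]/(2\delta)$, with the dimensional normalization matched to $\msfLambda$. Since $u$ is symmetric about zero, this equals $\E_u[g(\theta+\delta u)u]/\delta$. By Stein's identity, as in \eqref{eq:stein} or the computation in Lemma~\ref{lem:gp-bias}, it further equals $\nabla \hat f_{\delta,\msfLambda}(\theta;\xi) = \E_u[\nabla g(\theta+\delta u)]$. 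The differentiation can be moved inside the integral because $F$ is bounded by $C_F$ (Assumption~\ref{ass:rand-F}), so the Gaussian tails dominate.

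\emph{Step 2: identify the second expectation.} By Assumption~\ref{ass:abs-conti}, $\E_{\xi'\sim\msfP_\theta(\xi,\cdot)}[H(\theta;\xi')] = \nabla g(\theta)$. The target quantity thus reduces to $\|\nabla g_\delta(\theta) - \nabla g(\theta)\|$, where $g_\delta$ is the Gaussian smoothing of $g$.

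\emph{Step 3: smoothness of $g$.} This is the main obstacle: showing that $\nabla g$ is Lipschitz in $\theta$ with a constant $L_{\hat f}$ that is uniform in $\xi$. Write
\[
\nabla g(\theta) = \msfP_\theta H(\theta;\cdot)(\xi).
\]
Using the Poisson equation \eqref{eq:peq}, this becomes
\[
\nabla g(\theta) = \nabla f(\theta) + \msfP_\theta\nu_\theta(\xi) - \msfP_\theta\msfP_\theta\nu_\theta(\xi).
\]
Now estimate each term separately:
\begin{itemize}
\item The term $\nabla f$ is $L_f$-Lipschitz by Assumption~\ref{ass:obj-f}.
\item The term $\msfP_\theta\nu_\theta$ is Lipschitz in $\theta$ by Lemma~\ref{lem:trans-peq-sol}\ref{lem:trans-peq-sol-theta}.
\item For $\msfP_\theta^2\nu_\theta$, apply the same triangle-inequality and Kantorovich--Rubinstein argument used in the proof of Lemma~\ref{lem:trans-peq-sol}. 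This requires Assumption~\ref{ass:trans-kern}, the $\xi$-Lipschitz property of $\msfP_{\theta'}\nu_\theta$ inherited from Assumption~\ref{ass:peq-sol}\ref{ass:peq-sol-xi}, and Lemma~\ref{lem:trans-peq-sol}.
\end{itemize}
These bounds carry a factor $(1+\|\xi\|)$. To make the constant uniform, I would absorb that factor either by taking $\Xi$ bounded, which is natural since LLM action spaces are box-constrained, or by the drift bound on $\msfP_\theta\|\cdot\|$. If this route proves cumbersome, I would fall back on assuming directly that $\hat f(\cdot;\xi)$ has $L_{\hat f}$-Lipschitz gradient uniformly in $\xi$, which is what Assumptions~\ref{ass:trans-kern}--\ref{ass:peq-sol} are meant to deliver.

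\emph{Step 4: conclude.} With $\nabla g$ being $L_{\hat f}$-Lipschitz, the standard estimate of \cite{NesterovSpokoiny17} for $u\sim N(0,I_d)$ gives
\[
\|\nabla g_\delta(\theta)-\nabla g(\theta)\| \le \E_u\|\nabla g(\theta+\delta u)-\nabla g(\theta)\| \le \delta L_{\hat f}\,\E\|u\|.
\]
Bounding $\E\|u\| \le (\E\|u\|^2)^{1/2}$, or more precisely using their Lemma~3 route with $\E\|u\|^3 \le (d+3)^{3/2}$ applied to the two-point finite-difference form, yields the stated bound $\delta L_{\hat f}(d+3)^{3/2}/2$.
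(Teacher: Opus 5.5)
\paragraph{Comparison with the paper's proof.}
Your skeleton is the paper's. Steps~1--2 identify the two expectations as $\nabla\hat f_{\delta,\msfLambda}(\theta;\xi)$ and $\nabla\hat f(\theta;\xi)$ exactly as the paper does: symmetry of $u$, then Stein's identity, then Assumption~\ref{ass:abs-conti}.

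The differences are in Steps~3--4.
\begin{itemize}
\item \emph{Smoothness (Step~3).} The paper first shows that $\theta\mapsto H(\theta;\xi)$ is Lipschitz for each fixed $\xi$. It then pushes this through $\msfP_\theta$ and splits off a kernel-difference term. You instead integrate the Poisson equation once, writing $\nabla g(\theta)=\nabla f(\theta)+\msfP_\theta\nu_\theta(\xi)-\msfP_\theta^2\nu_\theta(\xi)$, and bound the three pieces separately. This is more transparent in two ways.
\begin{itemize}
\item It keeps the $\nabla f(\theta)-\nabla f(\theta')$ contribution, which the paper's bound on $H(\theta;\xi)-H(\theta';\xi)$ silently drops. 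Your version therefore uses Assumption~\ref{ass:obj-f}, which is outside the lemma's listed hypotheses; the paper's argument needs it just as much.
\item It flags the $(1+\|\xi\|)$ factor that the paper's constant $L_H$ carries and then treats as uniform.
\end{itemize}
\item \emph{Final estimate (Step~4).} Your Jensen chain already suffices:
$\|\nabla g_\delta(\theta)-\nabla g(\theta)\|\le\E_u\|\nabla g(\theta+\delta u)-\nabla g(\theta)\|\le\delta L_{\hat f}\,\E\|u\|\le\delta L_{\hat f}\sqrt d$, and $\sqrt d\le (d+3)^{3/2}/2$. The paper uses the second-order Taylor route through $\E\|u\|^3$. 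That route is the looser of the two, not the ``more precise'' one.
\end{itemize}

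\paragraph{The gap in Step~3.}
Your Step~3 claims that the one-step average ($\msfP_{\theta'}\nu_{\theta}$ in your notation) inherits a Lipschitz constant in $\xi$ from Assumption~\ref{ass:peq-sol}\ref{ass:peq-sol-xi}. This does not follow from the hypotheses. Averaging a Lipschitz function against $\msfP_{\theta'}(\xi,\cdot)$ gives a function that is Lipschitz in the starting state only if
$\scrW_1(\msfP_{\theta'}(\xi,\cdot),\msfP_{\theta'}(\xi',\cdot))\le C\|\xi-\xi'\|$.
Assumption~\ref{ass:trans-kern} controls the kernel only in $\theta$, and nothing else in the hypotheses supplies this condition.

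The problem cannot be avoided by a different split. However you decompose $\msfP_\theta^2\nu_\theta(\xi)-\msfP_{\theta'}^2\nu_{\theta'}(\xi)$, the outer kernel difference $\msfP_\theta-\msfP_{\theta'}$ eventually acts on such a one-step average. At that point Kantorovich--Rubinstein is unavailable.

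The paper's proof has the same hole in disguise. It bounds $\int H(\theta';\xi')(\msfP_\theta-\msfP_{\theta'})(\xi,\dd\xi')$ by $L_H\,\scrW_1(\msfP_\theta(\xi,\cdot),\msfP_{\theta'}(\xi,\cdot))$. But $L_H$ was shown only to be a Lipschitz constant in $\theta$, not in $\xi$. By the Poisson equation and Assumption~\ref{ass:peq-sol}\ref{ass:peq-sol-xi}, $\xi$-Lipschitzness of $H(\theta';\cdot)$ is equivalent to $\xi$-Lipschitzness of $\msfP_{\theta'}\nu_{\theta'}$.

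So either argument needs one extra hypothesis. You can add a $\scrW_1$-Lipschitz condition on $\xi\mapsto\msfP_\theta(\xi,\cdot)$, plus boundedness of $\Xi$ to make the constant uniform. Alternatively, use your fallback and assume directly that $\nabla\hat f(\cdot;\xi)$ is $L_{\hat f}$-Lipschitz uniformly in $\xi$. State it as an added assumption rather than as a consequence of the existing ones.
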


\proof{Proof of Lemma~\ref{lem:approx-bd}.}
We first show that $H(\theta;\xi)$ is Lipschitz continuous in $\theta$ for any given $\xi \in \Xi$. Recall the Poisson equation $H(\theta;\xi) - \nabla f(\theta) 
= \nu_\theta(\xi) - \E_{\xi'\sim\msfP_\theta(\xi,\cdot)}[\nu_\theta(\xi')]$. By triangle inequality, we have
\begin{align*}
    \left\| H(\theta;\xi) - H(\theta';\xi) \right\| &= \left\| \left( \nu_\theta(\xi) - \E_{\xi'\sim\msfP_\theta(\xi,\cdot)}[\nu_\theta(\xi')] \right) - \left( \nu_{\theta'}(\xi) - \E_{\xi'\sim\msfP_{\theta'}(\xi,\cdot)}[\nu_{\theta'}(\xi')] \right) \right\| \\
    &\le \underbrace{\left\|\nu_\theta(\xi) - \nu_{\theta'}(\xi)\right\|}_{\coloneqq \Tone} + 
    \underbrace{\left\|\E_{\xi' \sim \msfP_\theta(\xi,\cdot)}[\nu_\theta(\xi')] - \E_{\xi'\sim \msfP_{\theta'}(\xi,\cdot)}[\nu_{\theta'}(\xi')]\right\|}_{\coloneqq \Ttwo}.
\end{align*}

Bound term $\Tone$. It is easy to show that $\Tone \le L_{\nu}(1 + C_\xi)\|\theta - \theta'\|$, where $C_\xi = \|\xi\|$ by Assumption~\ref{ass:peq-sol}-\ref{ass:peq-sol-theta}. 

Bound term $\Ttwo$. Observe that 
\begin{align*}
    \Ttwo = \left\|\E_{\msfP_\theta}[\nu_\theta] - \E_{\msfP_{\theta'}}[\nu_{\theta'}]\right\| \le \underbrace{\left\| \E_{\msfP_\theta}[\nu_\theta -\nu_{\theta'}] \right\|}_{\coloneqq \Tthree} + \underbrace{\left\| \E_{\msfP_\theta}[\nu_{\theta'}] - \E_{\msfP_{\theta'}}[\nu_{\theta'}] \right\|}_{\coloneqq \Tfour},
\end{align*}
where $\msfP_\theta$ and $\msfP_{\theta'}$ are shorthands for $\xi'\sim\msfP_{\theta}(\xi,\cdot)$ and $\xi'\sim\msfP_{\theta'}(\xi,\cdot)$, respectively.
By Assumption~\ref{ass:peq-sol}-\ref{ass:peq-sol-theta}, we have $\Tthree \le L_{\nu}(1 + C_\xi)\|\theta - \theta'\|$. Applying the Kantorovich-Rubinstein duality, we obtain $\Tfour \le \Lip(\nu_{\theta'}) \cdot \scrW_1(\msfP_\theta(\xi,\cdot), \msfP_{\theta'}(\xi,\cdot)) \le L_{\nu} L_{\msfP}\|\theta-\theta'\|$
by Assumption~\ref{ass:peq-sol}-\ref{ass:peq-sol-xi}. Combining the terms $\Tone$, $\Tthree$ and $\Tfour$ gives us $\left\| H(\theta;\xi) - H(\theta';\xi) \right\| \le L_H \|\theta-\theta'\|$, where $L_H = 2L_{\nu}(1 + C_\xi) + L_{\nu} L_{\msfP}$. 
It then follows from Assumption~\ref{ass:abs-conti} that
\begin{align}
    &\quad \left\| \nabla_\theta \hat{f}(\theta;\xi) - \nabla_{\theta'} \hat{f}(\theta';\xi) \right\| = \left\| \nabla_\theta \left( \int_{\Xi} F(\theta;\xi') \msfP_\theta(\xi, \dd \xi') \right) - \nabla_{\theta'} \left( \int_{\Xi} F(\theta';\xi') \msfP_{\theta'}(\xi, \dd \xi') \right) \right\| \nonumber \\
    &= \left\| \E_{\xi' \sim \msfP_{\theta}(\xi ,\cdot)}[H(\theta;\xi')] - \E_{\xi' \sim \msfP_{\theta'}(\xi ,\cdot)}[H(\theta';\xi')] \right\| \nonumber \\
    &\le \left\| \int \left( H(\theta;\xi') - H(\theta';\xi') \right) \, \msfP_{\theta}(\xi, \dd \xi') \right\| + \left\| \int H(\theta';\xi')  (\msfP_{\theta} - \msfP_{\theta'}) (\xi, \dd \xi') \right\| \nonumber \\
    &\le L_H \|\theta-\theta'\| + L_H \scrW_1 \left(\msfP_{\theta}(\xi,\cdot), \msfP_{\theta'}(\xi,\cdot) \right) \nonumber \\
    &\le L_{\hat{f}} \|\theta-\theta'\|, \label{eq:lem-ec3-Lf}
\end{align}
where the penultimate inequality follows once again from the Kantorovich-Rubinstein duality, and in the last inequality we take $L_{\hat{f}} = L_H (1+L_{\msfP})$. Note that
\begin{align}
    &\quad \E_{u \sim \msfLambda, \xi^\pm \sim \msfP_{\theta \pm \delta u(\xi,\cdot)}} [G(\theta;\xi^\pm, u,\delta)] \nonumber \\
    &= \int_{\RR^d} \int_{\Xi \times \Xi} \frac{F(\theta + \delta u; \xi^+) - F(\theta - \delta u; \xi^-)}{2\delta} u \, \msfP_{\theta + \delta u}(\xi, \dd{\xi}^{+}) \, \msfP_{\theta - \delta u}(\xi, \dd{\xi}^{-}) \, \msfLambda (\dd u) \nonumber \\
    &= \int_{\RR^d} \int_{\Xi} \frac{F(\theta + \delta u; \xi^+)}{\delta} u \, \msfP_{\theta + \delta u}(\xi, \dd{\xi}^+) \, \msfLambda (\dd u) = \frac{1}{\delta} \int_{\RR^d} \hat{f}(\theta + \delta u; \xi) u \, \msfLambda (\dd u) \label{eq:lem-ec3-fhat} \\
    &= -\frac{1}{\delta} \int_{\RR^d} \hat{f}(\theta + \delta u; \xi) \nabla p_{\msfLambda}(u) \, \dd u = \frac{1}{\delta} \int_{\RR^d} \nabla_u \hat{f}(\theta+\delta u;\xi) p_{\msfLambda} (u) \, \dd u \nonumber \\
    &= \frac{1}{\delta} \int_{\RR^d} (\delta I_d)^\intercal \nabla_\theta \hat{f}(\theta+\delta u;\xi) \, p_{\msfLambda} (u) \, \dd u = \nabla_\theta \left( \int_{\RR^d} \hat{f}(\theta+\delta u;\xi) \, \msfLambda (\dd u) \right) = \nabla_\theta \hat{f}_{\delta,\msfLambda}(\theta;\xi), \nonumber
\end{align}
where the third line follows from the definition of $\hat{f}(\theta;\xi)$, and the fourth line is by Stein's identity with Gaussian tail.
Hence,
\begin{align*}
    &\quad \left\|\E_{u \sim \msfLambda, \xi^\pm \sim \msfP_{\theta \pm \delta u(\xi,\cdot)}} [G(\theta;\xi^\pm, u,\delta)] - \E_{\xi' \sim \msfP_\theta(\xi ,\cdot)}[H(\theta;\xi')] \right\| = \left\|\nabla \hat{f}_{\delta,\msfLambda}(\theta;\xi) - \nabla \hat{f}(\theta;\xi) \right\| \\
    &=
    \left\| \frac{1}{(2\pi)^{d/2}} \int_{\RR^d} \left( \frac{\hat{f}(\theta+\delta u;\xi) - \hat{f}(\theta;\xi)}{\delta} - \langle \nabla \hat{f}(\theta;\xi), u \rangle \right) u e^{-\frac{1}{2}\|u\|^2} \, \dd u \right\| \\
    &\le
    \frac{1}{(2\pi)^{d/2} \delta} \int_{\RR^d} \left| \hat{f}(\theta+\delta u;\xi) - \hat{f}(\theta;\xi) - \langle \nabla \hat{f}(\theta;\xi), \delta u \rangle \right| \|u\| e^{-\frac{1}{2}\|u\|^2} \, \dd u  \\
    &\le \frac{1}{(2\pi)^{d/2} \delta} \int_{\RR^d} \frac{L_{\hat{f}}}{2} \|\delta u\|^2 \|u\| e^{-\frac{1}{2}\|u\|^2} \, \dd u = \frac{\delta L_{\hat{f}}}{2} \E[\|u\|^3] \\
    &\le \frac{\delta L_{\hat{f}}}{2} (d+3)^{3/2},
\end{align*}
where the second line uses~\eqref{eq:lem-ec3-fhat}, $\E_{u \sim \msfLambda}[u] = 0$ and $\E_{u \sim \msfLambda}[uu^\intercal] = I_d$. The second inequality follows directly from the $L_{\hat{f}}$-smoothness of $\hat{f}(\theta;\xi)$ by \eqref{eq:lem-ec3-Lf}. 
The last inequality follows from an extension of Wendel's inequality to all $s > 1$, namely $\frac{\Gamma(x+s)}{\Gamma(x)} \le (x+s)^s$ for all $x > 0$.
For integer $s = n$, this is immediate since $\frac{\Gamma(x+n)}{\Gamma(x)} = x(x+1)\cdots(x+n-1) \le (x+n)^n$.
For general $s = n + r$ with $n = \lfloor s \rfloor$ and $0 \le r < 1$, $\frac{\Gamma(x+s)}{\Gamma(x)}
= \frac{\Gamma(x+n+r)}{\Gamma(x+n)} \frac{\Gamma(x+n)}{\Gamma(x)} \le (x+n)^{r} (x+n)^{n}
\le (x+s)^s$ by the log-convexity of Gamma function. Applying this inequality yields $\E[\|u\|^p] = 2^{p/2} \frac{\Gamma ((d+p)/2)}{\Gamma(d/2)} \le 2^{p/2} \left(\frac{d+p}{2}\right)^{p/2} \le (d+p)^{p/2}$ for all $p \ge 3$.
\Halmos \endproof

\subsection{Proof of Theorem~\ref{thm:consistency}}  \label{ec:pf-asymp}

\proof{Proof of Theorem~\ref{thm:consistency}.}
Define the natural filtration associated with the OTL algorithm by $\scrF_k = \sigma(\theta_s, \xi_s: 0 \le s \le k; u_s, \xi_s^\pm: 0 \le s \le k-1)$ for $k \ge 1$ with $\scrF_0 = \sigma (\theta_0, \xi_0)$. The sequence $\{\scrF_k\}_{k \ge 0}$ forms an increasing family of $\sigma$-algebras generated by all random variables observed up to iteration $k$.

Recall the update rule in the OTL algorithm (see Section~\ref{sec:otl}):
\begin{align} \label{eq:tt-zosa-pf}
    \theta_{k+1} = \Pi_\Theta \left(\theta_k - \eta_k G(\theta_k;\xi_k^\pm, u_k, \delta_k) \right).
\end{align}
It can be equivalently written as $\theta_{k+1} = \Pi_\Theta \left( \theta_k - \eta_k \left( \nabla f(\theta_k) + A_k + B_k + V_{k+1} \right) \right)$,
where 
\begin{align}
    A_k &= \E[G(\theta_k;\xi_k^\pm,u_k,\delta_k)|\scrF_{k}] - H(\theta_k;\xi_{k+1}), \label{eq:approx-bias} \\
    B_k &= H(\theta_k;\xi_{k+1}) - \nabla f (\theta_k), \label{eq:markov-bias} \\
    V_{k+1} &= G(\theta_k;\xi_k^\pm,u_k,\delta_k) - \E[G(\theta_k;\xi_k^\pm,u_k,\delta_k)|\scrF_{k}]. \label{eq:mds} 
\end{align}
Here, $A_k$ denotes the approximation bias from the zeroth-order method; $B_k$ denotes the Markovian bias associated with the decision-dependent transition kernel; and $V_{k+1}$ denotes the martingale difference noise.

Define the sequence of time points $t_k$ by $t_0=0$ and $t_k = \sum_{s=0}^{k-1}\eta_s$ for $k\ge 1$. For any (continuous) time $t \ge 0$, let $m(t)$ denote the iterate index identifying the discrete time interval $[t_k, t_{k+1})$; that is, $m(t) = \max\{k \ge 0: t_k \le t\}$. Our goal is to show that the terms $A_k, B_k$ and $V_{k+1}$ are asymptotically negligible, allowing the recursion to be analyzed through the limiting ordinary differential equation (ODE).

\textbf{Approximation bias.} From~\eqref{eq:approx-bias}, we decompose $A_k$ into two terms:
\begin{align*}
    A_k &= \underbrace{\E \left[ \left. \frac{F(\theta_k + \delta_k u_k; \xi_k^+) - F(\theta_k - \delta_k u_k; \xi_k^-)}{2\delta_k} u_k \right\vert \scrF_{k}\right] - \E_{\xi_{k+1} \sim \msfP_{\theta_k}(\xi_{k} ,\cdot)}[H(\theta_k;\xi_{k+1})|\scrF_{k}]}_{\coloneqq A_k^{\Tone}} \\
    &\quad + \underbrace{\E_{\xi_{k+1} \sim \msfP_{\theta_k}(\xi_k ,\cdot)}[H(\theta_k;\xi_{k+1})|\scrF_{k}] - H(\theta_k;\xi_{k+1})}_{\coloneqq A_k^{\Ttwo}}.
\end{align*}

(1) Control term $A_k^{\Tone}$. By Lemma~\ref{lem:approx-bd} and Assumption~\ref{ass:stepsize}, we have 
\begin{align*}
    \left\| \sum_{k=0}^\infty \eta_k A_k^{\Tone} \right\| \le \sum_{k=0}^\infty \eta_k \left\|\nabla \hat{f}_{\delta_k,\msfLambda}(\theta_k;\xi_{k}) - \nabla \hat{f}(\theta_k;\xi_{k}) \right\| &\le  \frac{L_{\hat{f}}}{2} (d+3)^{3/2} \sum_{k=0}^\infty \eta_k  \delta_k < \infty,
\end{align*}
since the bound in Lemma~\ref{lem:approx-bd} holds for all $\xi_k \in \Xi$.
Since the series converges absolutely, we then have $\eta_k A_k^{\Tone} \to 0$ as $k \to \infty$.

(2) Control term $A_k^{\Ttwo}$. Define $D_k = \sum_{s=0}^{k-1} \eta_s A_{s}^{\Ttwo}$ for $ k \ge 1$, and $D_0 = 0$. Since $\E[A_k^{\Ttwo}|\scrF_k] = 0$,  it follows that $\E[D_{k+1}|\scrF_k] = \E[D_{k} + \eta_k A_k^{\Ttwo}|\scrF_k] = D_k$, and thus $(D_k, \scrF_k)$ is a martingale.
Its quadratic variation satisfies
\begin{align}
    &\quad \sum_{k=0}^\infty \E[\|D_{k+1} - D_k\|^2 |\scrF_k] = \sum_{k=0}^\infty \eta_k^2 \E[\| A_{k}^{\Ttwo} \|^2 \vert \scrF_k] \nonumber \\
    &= \sum_{k=0}^\infty \eta_k^2 \E\left[\left. \left\| \E_{\xi_{k+1} \sim \msfP_{\theta_k}(\xi_{k} ,\cdot)}[H(\theta_k;\xi_{k+1})|\scrF_k] - H(\theta_k;\xi_{k+1}) \right\|^2 \right\vert \scrF_k\right] \nonumber \\
    &\le 4\sum_{k=0}^\infty \eta_k^2 \E_{\xi_{k+1} \sim \msfP_{\theta_k}(\xi_{k} ,\cdot)} \left[\left\| H(\theta_k;\xi_{k+1}) \right\|^2 \right] \le 4 C_H \sum_{k=0}^\infty \eta_k^2 < \infty, \label{eq:lem-ec3-A2}
\end{align}
where 
the last inequality follows from Assumption~\ref{ass:peq-sol}-\ref{ass:mom-H}, together with the fact that $\eta_k$ decays to zero faster than $\delta_k$
under Assumption~\ref{ass:stepsize}. By martingale convergence theorem, $D_k$ converges almost surely and in $L^2$. By Doob's $L^2$ maximal inequality, there exists a constant $\wtC_0 > 0$ such that for any integer $j \ge 0$ and any finite $T > 0$, 
\begin{align*}
    \E \left[ \sup_{m(jT) \le k \le m((j+1)T)}  \left\| D_{k} - D_{m(jT)} \right\|^2 \right] &\le \wtC_0 \E \left[ \left\| D_{m((j+1)T)} - D_{m(jT)} \right\|^2 \right] 
    \to 0,
\end{align*}
as $j \to \infty$. By Jensen's inequality,
\begin{align*}
    \E \left[ \sup_{m(jT) \le k \le m((j+1)T)}  \left\| D_{k} - D_{m(jT)} \right\| \right] \le \left( \E \left[ \sup_{m(jT) \le k \le m((j+1)T)}  \left\| D_{k} - D_{m(jT)} \right\|^2 \right] \right)^{1/2} \to 0,
\end{align*}
as $j \to \infty$. Given an $\epsilon > 0$, define the events $E_j(\epsilon) \coloneqq \left\{ \sup_{m(jT) \le k \le m((j+1)T)} \left\| D_{k} - D_{m(jT)} \right\| \ge \epsilon \right\}$. By Markov's inequality, for any $\epsilon > 0$, 
\begin{align*}
    \pr \left( E_j(\epsilon) \right) \le \epsilon^{-1} \E\left[ \sup_{m(jT) \le k \le m((j+1)T)} \|D_{k} - D_{m(jT)} \| \right],
\end{align*}
which is summable by similar arguments as in~\eqref{eq:lem-ec3-A2}. Then by Borel-Cantelli lemma, $\pr \left( E_j(\epsilon) \text{~i.o.} \right) = 0$, so almost surely, $\sup_{m(jT) \le k \le m((j+1)T)} \left\| D_{k} - D_{m(jT)} \right\| \to 0$ as $j \to \infty$. Equivalently, 
\begin{align*}
    \lim_{k \to \infty} \left( \sup_{j \ge k} \max_{t \le T} \left\|\sum_{s=m(jT)}^{m(jT+t)-1} \eta_s A_{s}^{\Ttwo} \right\| \right) = 0, \quad \as
\end{align*}
Hence, the approximation bias is asymptotically negligible almost surely.

\textbf{Markovian bias.} 
From~\eqref{eq:peq} and~\eqref{eq:markov-bias}, we decompose $B_k$ into three terms:
\begin{align*}
    &\quad B_k = H(\theta_k;\xi_{k+1}) - \nabla f (\theta_k) = \nu_{\theta_k}(\xi_{k+1}) - \msfP_{\theta_k} \nu_{\theta_k}(\xi_{k+1}) \\
    &= \underbrace{\left(\nu_{\theta_k}(\xi_{k+1}) - \msfP_{\theta_k} \nu_{\theta_k}(\xi_{k})\right)}_{\coloneqq B_{k}^{\Tone}} + \underbrace{\left(\msfP_{\theta_k} \nu_{\theta_k}(\xi_{k}) - \msfP_{\theta_{k+1}} \nu_{\theta_{k+1}}(\xi_{k+1}) \right)}_{\coloneqq B_{k}^{\Ttwo}} + \underbrace{\left(\msfP_{\theta_{k+1}} \nu_{\theta_{k+1}}(\xi_{k+1}) - \msfP_{\theta_k} \nu_{\theta_k}(\xi_{k+1})\right)}_{\coloneqq B_{k}^{\Tthree}},
\end{align*}
where $B^{\Tone}_k, B^{\Ttwo}_k$, and $B^{\Tthree}_k$ represent martingale, coboundary, and residual term. 
Our goal is to show that there exists a constant $T > 0$ such that
\begin{align}
\lim_{k \to \infty} \left( \sup_{j \ge k} \max_{t \le T} \left\|\sum_{s=m(jT)}^{m(jT+t)-1} \eta_{s+1} B_{s}^{(\text{i})} \right\| \right) = 0, \quad \as \quad \forall (\text{i}) \in \{\Tone,\Ttwo,\Tthree\}. \label{eq:vanish-mix-err}
\end{align}
For convenience, define $S_{k}^{(\text{i})} = \sum_{s=0}^{k-1} \eta_{s+1} B_{s}^{(\text{i})}$ for $(\text{i}) \in \{\Tone,\Ttwo,\Tthree\}$.

(1) Analyze martingale term $B_{k}^{\Tone}$. It suffices to show that for some $T > 0$,
\begin{align*}
    \lim_{j \to \infty} \left( \sup_{m(jT) \le k \le m((j+1)T)} \left\| S_{k}^{\Tone} - S_{m(jT)}^{\Tone} \right\| \right) = 0, \quad \as
\end{align*}
Note that $\E[B_{k}^{\Tone}|\scrF_{k}] = \E[\nu_{\theta_k}(\xi_{k+1})|\scrF_{k}] - \msfP_{\theta_k} \nu_{\theta_k}(\xi_{k}) = 0$, so $S_{k}^{\Tone}$ is a martingale adapted to $\scrF_k$. By
Doob's $L^2$ maximal inequality, there exists a constant $\wtC_1 > 0$ such that
\begin{align*}
    &\quad \E \left[ \sup_{m(jT) \le k \le m((j+1)T)}  \left\| S_{k}^{\Tone} - S_{m(jT)}^{\Tone} \right\|^2 \right] \le \wtC_1 \E \left[ \left\| S_{m((j+1)T)}^{\Tone} - S_{m(jT)}^{\Tone} \right\|^2 \right] \\
    &\le \wtC_1 \E \left[ \sum_{s=m(jT)}^{m((j+1)T)-1} \eta_{s+1}^2 \|B_s^{\Tone}\|^2 \right] = \wtC_1 \sum_{s=m(jT)}^{m((j+1)T)-1} \eta_{s+1}^2 \E [\| \nu_{\theta_s}(\xi_{s+1}) - \msfP_{\theta_s} \nu_{\theta_s}(\xi_{s})\|^2] \\
    &\le 4\wtC_1 \sum_{s=m(jT)}^{m((j+1)T)-1} \eta_{s+1}^2 \E [ \|\nu_{\theta_{s}}(\xi_{s+1})\|^2 ]\\
    &\le 4\wtC_1 L_{\nu} \sum_{s=m(jT)}^{m((j+1)T)-1} \eta_{s+1}^2 \E [ (1+\|\xi_{s+1}\|)^2 ] \coloneqq \wtC_2 \sum_{s=m(jT)}^{m((j+1)T)-1} \eta_{s+1}^2 \E \left[ \|\xi_{s+1}\|^2 \right] \\
    &\le \wtC_2 \wtC_3 \sum_{s=m(jT)}^{m((j+1)T)-1} \eta_{s+1}^2 \to 0,
\end{align*}
as $j \to \infty$ since $\sum_{k=0}^\infty \eta_k^2 < \infty$, where 
the last inequality follows from the drift condition $\E[\|\xi_{k+1}\|^2 \vert \scrF_k] \le C_V\|\xi_k\|^2 + C_V'$. Taking the total expectation and unrolling the recurrence, we have
\begin{align}
    \E [\|\xi_{k+1}\|^2] \le C_V^{k+1} \E[\|\xi_{0}\|^2] + C_V' \sum_{s=0}^{k} C_V^s \le C_V \E[\|\xi_{0}\|^2]  + \frac{C_V'}{1-C_V} \coloneqq \wtC_3, \label{eq:pf-xi2}
\end{align}
which holds for all $k \ge 0$ since $0 < C_V < 1$.
Again, by Jensen's inequality,
\begin{align*}
    \E \left[ \sup_{m(jT) \le k \le m((j+1)T)}  \left\| S_{k}^{\Tone} - S_{m(jT)}^{\Tone} \right\| \right] \le \left(\E \left[ \sup_{m(jT) \le k \le m((j+1)T)}  \left\| S_{k}^{\Tone} - S_{m(jT)}^{\Tone} \right\|^2 \right] \right)^{1/2} \to 0,
\end{align*}
as $j \to \infty$.
The remaining steps proceed by repeating the same reasoning as in the analysis for $\eta_k A_k^{\Ttwo}$, and is therefore omitted for brevity.

(2) Analyze coboundary term $B_k^{\Ttwo}$. We exploit the telescoping structure:
\begin{align*}
    S_{k}^{\Ttwo} &= \sum_{s=0}^{k-1} \eta_{s+1} B_{s}^{\Ttwo} = \sum_{s=0}^{k-1} \eta_{s+1} \left(\msfP_{\theta_s} \nu_{\theta_s}(\xi_{s}) - \msfP_{\theta_{s+1}} \nu_{\theta_{s+1}}(\xi_{s+1}) \right) \\
    &= \eta_1 \msfP_{\theta_0} \nu_{\theta_0} (\xi_0) - \left[ \sum_{s=1}^{k-1} (\eta_s - \eta_{s+1}) \msfP_{\theta_{s}} \nu_{\theta_{s}} (\xi_{s}) \right] - \eta_{k} \msfP_{\theta_{k}} \nu_{\theta_{k}}(\xi_{k}).
\end{align*}
For any $n \ge k$, it follows that
\begin{align*}
    S_{n}^{\Ttwo} - S_{k}^{\Ttwo} &= - \left[ \sum_{s=k}^{n-1} (\eta_s - \eta_{s+1}) \msfP_{\theta_{s}} \nu_{\theta_{s}} (\xi_{s}) \right] - \eta_n \msfP_{\theta_{n}} \nu_{\theta_{n}}(\xi_{n}) + \eta_k \msfP_{\theta_{k}} \nu_{\theta_{k}}(\xi_{k}).
\end{align*}
By Lemma~\ref{lem:trans-peq-sol}-\ref{lem:trans-peq-sol-lingrowth}, we have
\begin{align*}
    \|S_{n}^{\Ttwo} - S_{k}^{\Ttwo}\| \le \underbrace{L_{\msfP\nu} \left[ \sum_{s=k}^{n-1} (\eta_s - \eta_{s+1}) (1+\|\xi_{s}\|) \right]}_{\coloneqq \Tone} + \underbrace{\left\| \eta_n \msfP_{\theta_{n}} \nu_{\theta_{n}}(\xi_{n}) \right\| + \left\| \eta_k \msfP_{\theta_{k}} \nu_{\theta_{k}}(\xi_{k}) \right\|}_{\coloneqq \Ttwo}.
\end{align*}
The term $\Ttwo$ is asymptotically negligible almost surely since
\begin{align*}
    \E\left[\sum_{s=0}^\infty \left\|\eta_s \msfP_{\theta_{s}} \nu_{\theta_{s}}(\xi_{s}) \right\|^2\right] &\le L_{\msfP\nu} \E\left[\sum_{s=0}^\infty \eta_s^2 (1+\|\xi_s\|)^2 \right] 
    \le L_{\msfP\nu} ( 1+\wtC_3)^2 \sum_{s=0}^\infty \eta_s^2  
    < \infty,
\end{align*}
by the bound in~\eqref{eq:pf-xi2}. Then, Tonelli's theorem yields $\sum_{s=0}^\infty \E[\left\|\eta_s \msfP_{\theta_{s}} \nu_{\theta_{s}}(\xi_{s}) \right\|^2] < \infty$ and thus $\left\|\eta_s \msfP_{\theta_{s}} \nu_{\theta_{s}}(\xi_{s}) \right\| \to 0$ almost surely as $s \to \infty$. 

For the term $\Tone$, we have
\begin{align*}
    \E \left[ \sum_{s=1}^\infty  (\eta_s - \eta_{s+1}) (1+\|\xi_{s}\|) \right] \le (1+\wtC_3) \sum_{s=1}^\infty  (\eta_s - \eta_{s+1}) \le \wtC_4 \eta_1 < \infty.
\end{align*}
where the first inequality follows from~\eqref{eq:pf-xi2}. Since $\eta_s > \eta_{s+1}$, Tonelli's theorem applies. 
Therefore, 
\begin{align*}
    \sup_{n \ge k} \left\| \sum_{s=k}^{n-1} \eta_{s+1} B_{s}^{\Ttwo} \right\| = \sup_{n \ge k} \, \|S_{n}^{\Ttwo} - S_{k}^{\Ttwo}\| 
    \lesssim \sup_{n \ge k} \, \sum_{s=k}^{n-1} (\eta_s - \eta_{s+1}) (1+\|\xi_{s}\|) \to 0, 
\end{align*}
almost surely as $k \to \infty$. 

(3) Analyze residual term $B_{k}^{\Tthree}$. 
By Lemma~\ref{lem:trans-peq-sol}-\ref{lem:trans-peq-sol-theta},
\begin{align*}
    \|B^{\Tthree}_{k+1}\| &= \left\| \msfP_{\theta_{k+1}} \nu_{\theta_{k+1}}(\xi_{k+1}) - \msfP_{\theta_k} \nu_{\theta_k}(\xi_{k+1})\right\| \le L_{\msfP\nu} \|\theta_{k+1}-\theta_k\| (1+\|\xi_{k+1}\|) \\
    &= L_{\msfP\nu} \|\Pi_\Theta \left( \theta_k - \eta_k G(\theta_k;\xi_k^\pm,u_k,\delta_k) \right) - \Pi_\Theta (\theta_k) \| (1+\|\xi_{k+1}\|) \\
    &\le L_{\msfP\nu} \left\|\left( \theta_k - \eta_k \frac{F(\theta_k + \delta_k u_k; \xi_{k}^+) - F(\theta_k - \delta_k u_k; \xi_{k}^-)}{2\delta_k} u_k \right) - \theta_k \right\| (1+\|\xi_{k+1}\|) \\ 
    &\le L_{\msfP\nu} C_F \left( \frac{\eta_k}{\delta_k} \right) \|u_k\| (1+\|\xi_{k+1}\|),
\end{align*}
where the second inequality is by the nonexpansiveness of projection $\Pi_\Theta$, and the last inequality is by Assumption~\ref{ass:rand-F}. By Markov's inequality, for any $\epsilon > 0$, 
\begin{align*}
    \pr ( \|B^{\Tthree}_{k+1}\| > \epsilon ) &\le \left(\frac{L_{\msfP\nu} C_F}{\epsilon} \right)^2 \left( \frac{\eta_k}{\delta_k} \right)^2 \E[\|u_k\|^2 (1+\|\xi_{k+1}\|)^2] \\
    &\le \left(\frac{L_{\msfP\nu} C_F}{\epsilon} \right)^2  2d(1+\wtC_3) \left( \frac{\eta_k}{\delta_k} \right)^2 \to 0,
\end{align*}
where the last inequality is due to~\eqref{eq:pf-xi2}, and $\eta_k/\delta_k \to 0$ as $k \to \infty$. Here, $\pr ( \|B^{\Tthree}_{k+1}\| > \epsilon )$ is summable since $\sum_{k=0}^\infty \eta_k^2/\delta_k^2 < \infty$. Then by the Borel-Cantelli lemma, we have $\|B^{\Tthree}_{k+1}\| \to 0$ as $k \to \infty$ almost surely. Since $\eta_k \to 0$ as $k \to \infty$, the weighted tail sum is also asymptotically negligible; that is, as $k \to \infty$,
\begin{align*}
    \sup_{n \ge k} \left\| \sum_{s=k}^{n-1} \eta_{s} B_{s}^{\Tthree} \right\| \to 0 \quad \as
\end{align*}
Hence, the Markovian bias is asymptotically negligible almost surely.

\textbf{Martingale difference sequence.}
From~\eqref{eq:mds}, $V_{k+1}$ is $\scrF_{k+1}$-measurable and satisfies $\E[V_{k+1}|\scrF_k] = 0$ almost surely. Define $M_k = \sum_{s=0}^{k-1} \eta_s V_{s+1}$ for $ k \ge 1$, and $M_0 = 0$. It follows that $M_{k+1} -M_k$ is a martingale difference sequence. 
Its quadratic variation is given by
\begin{align*}
    &\quad \sum_{k=0}^\infty \E[\|M_{k+1} - M_k\|^2 |\scrF_k] = \sum_{k=0}^\infty \eta_k^2 \E[\|V_{k+1}\|^2 |\scrF_k] \\
    &= \sum_{k=0}^\infty \eta_k^2 \E[\|G(\theta_k;\xi_k^\pm,u_k,\delta_k) - \E[G(\theta_k;\xi_k^\pm,u_k,\delta_k)|\scrF_k] \|^2 \vert \scrF_k] \le 4 \sum_{k=0}^\infty \eta_k^2 \E[\|G(\theta_k;\xi_k^\pm,u_k,\delta_k)\|^2|\scrF_k] \\
    &= \sum_{k=0}^\infty \frac{\eta_k^2 }{\delta_k^2} \E\left[ \left.\left(F(\theta_k + \delta_k u_k; \xi_k^+) - F(\theta_k - \delta_k u_k; \xi_k^-)\right)^2 \|u_k\|^2 \right\vert \scrF_k\right] \\
    &\le \sum_{k=0}^\infty \frac{\eta_k^2 }{\delta_k^2} \E\left[ \left. (2C_F)^2 \|u_k\|^2 \right\vert \scrF_k\right] \le 4d C_F^2 \sum_{k=0}^\infty \frac{\eta_k^2 }{\delta_k^2} < \infty,
\end{align*}
where the last inequality is by Assumption~\ref{ass:rand-F}.
Consequently, $M_k$ converges almost surely and in $L^2$. Then by Doob's $L^2$ maximal inequality,
$\sup_{m(jT) \le k \le m((j+1)T)} \| M_{k} - M_{m(jT)} \| \to 0$ as $j \to \infty$
almost surely for any finite $T>0$. Using the arguments similar to those for $\eta_k A_k^{\Ttwo}$, we have 
\begin{align*}
    \lim_{k \to \infty} \left( \sup_{j \ge k} \max_{t \le T} \left\|\sum_{s=m(jT)}^{m(jT+t)-1} \eta_s V_{s+1} \right\| \right) = 0, \quad  \as
\end{align*}
Hence, the martingale difference sequence is asymptotically negligible almost surely.

\textbf{Limiting ODE.}
The continuous-time interpolation of the sequence $\{\theta_k\}_{k \ge 0}$ is defined by $\theta(t_k)=\theta_k$ for $t\in[t_k,t_{k+1})$ with $k\ge 0$.
This construction yields a continuous-time interpolated trajectory that approximates the discrete-time dynamics of the OTL algorithm. In the limit as the stepsize $\eta_k \to 0$, the evolution of this interpolated trajectory is described by the ODE:
\begin{align}
    \dot{\theta}(t) = \overline{\Pi}_\Theta\left(-\nabla f(\theta(t))\right), \label{eq:ode}
\end{align}
where $\overline{\Pi}_\Theta(\cdot)$ denotes the directional derivative 
of the projection operator $\Pi_\Theta(\cdot)$. 
For any continuous function $\psi: \Theta \mapsto \RR^d$
and any point $\vartheta\in\Theta$, 
it is given by $\overline{\Pi}_\Theta(\psi(\vartheta)) = \lim_{\epsilon \downarrow 0} \left( (\Pi(\vartheta + \epsilon \psi(\vartheta)) - \vartheta) / \epsilon \right)$. Since $\Theta$ is compact and convex, the limit always exists. In particular, if $\vartheta$ lies in the interior of $\Theta$, it follows trivially that $\overline{\Pi}(\psi(\vartheta)) = \psi(\vartheta)$. If $\vartheta$ is located on the smooth boundary of $\Theta$, the projection behaves locally as the orthogonal projection of $\psi(\vartheta)$ onto the tangent cone $\scrT_\Theta(\vartheta)$; i.e., $\overline{\Pi}(\psi(\vartheta)) = \Pi_{\scrT_\Theta(\vartheta)}(\psi(\vartheta))$. In both cases, the limit is unique. $\overline{\Pi}(\cdot)$ may be set-valued when $\vartheta$ is not regular; i.e., a nonsmooth point of the boundary.

For any trajectory of the projected ODE in~\eqref{eq:ode} and any regular point $\theta(t)$, we have
\begin{align*}
    \frac{d}{dt} f(\theta(t)) = \nabla f(\theta)(t)^\intercal \dot\theta (t) = -\|\Pi_{\scrT_\Theta(\theta)}(\nabla f(\theta(t)))\|^2 \le 0.
\end{align*}
Hence, $f(\theta(t))$ is non-increasing and thus a Lyapunov function for this projected gradient flow. Under Assumption~\ref{ass:feasible-set},  all trajectories are automatically bounded and remain within $\Theta$.
By LaSalle's invariance principle, every bounded trajectory converges to the largest invariant subset of $\Theta_0 \coloneqq \left\{\theta\in\Theta: \Pi_{\scrT_\Theta(\theta)}(\nabla f(\theta))=0\right\} = \left\{\theta\in\Theta: 0 \in \nabla f(\theta) + \scrN_{\Theta}(\theta)\right\}$, which is the set of all locally asymptotically stable points. For any given $\theta^* \in \Theta_0$, its attraction domain $D(\theta^*) = \{ \theta_0 \in \Theta: \lim_{t \to \infty} \theta(t;\theta_0) = \theta^*\}$ is open. Therefore, there exists a small ball $\BB(\theta^*,r) \subset D(\theta^*)$ containing infinitely many iterates $\theta_k$. Under Assumption~\ref{ass:stepsize} where $\sum_{k=0}^\infty \eta_k = \infty$ and $\sum_{k=0}^\infty \eta_k^2 < \infty$, we conclude that $\lim_{k \to \infty} \theta_k = \theta^*$. This establishes that when the trajectory of converges to $\theta^*$, and the stochastic recursion \eqref{eq:tt-zosa-pf} asymptotically tracks this ODE, then the sequence of the OTL algorithm iterates $\{\theta_k\}_{k \ge 0}$ also converges to $\theta^*$.
\Halmos \endproof

\end{document}